\theoremstyle{remark}
\newtheorem*{note}{Note}
\theoremstyle{plain}
\newtheorem{definition}{Definition}
\newtheorem{proposition}{Proposition}
\newtheorem{corollary}{Corollary}
\newtheorem{theorem}{Theorem}
\crefname{section}{Section}{sections}
\crefname{figure}{Fig.}{figs.}
\crefname{equation}{Equation}{equations}
\crefname{problem}{Problem}{problems}
\crefname{algorithm}{Algorithm}{algorithms}
\crefname{assumption}{Assumption}{assumptions}
\crefname{step}{Step}{steps}
\crefname{algoline}{Line}{lines}
\DeclareMathOperator*{\argmin}{\arg\!\min}
\newcommand{\vect}[1]{{\mathbf{#1}}}
\newcommand{\mat}[1]{{\mathbf{#1}}}
\newcommand{\set}[1]{{\mathcal{#1}}}
\newcommand{\norm}[2]{\left\| #1 \right\|_{#2}}
\newcommand{\transpose}[1]{#1^\mathrm{T}}
\newcommand{\Real}{{\mathbb R}}
\newcommand{\Natural}{{\mathbb N}}
\let\oldnl\nl
\newcommand{\nonl}{\renewcommand{\nl}{\let\nl\oldnl}}
\newcommand{\removelatexerror}{\let\@latex@error\@gobble}
\newcolumntype{d}[1]{D{.}{.}{#1}}
\newcolumntype{.}{D{.}{.}{-1}}
\newcolumntype{B}[1]{>{\boldmath\DC@{.}{.}{#1}}c<{\DC@end}}
\definecolor{DarkRed}{rgb}{0.75, 0, 0}
\title{COMPRESSED NONNEGATIVE MATRIX FACTORIZATION IS FAST AND ACCURATE}
\author{Mariano~Tepper, Guillermo~Sapiro%
\thanks{This work was partially supported by NSF, ONR, NGA, ARO, and NSSEFF.}%
\thanks{The authors are with the Department of Electrical and Computer Engineering, Duke University, NC 27708 USA (e-mail: \{mariano.tepper,guillermo.sapiro\}@duke.edu)}}
\begin{document}

\maketitle

\begin{abstract}
Nonnegative matrix factorization (NMF) has an established reputation as a useful data analysis technique in numerous applications. However, its usage in practical situations is undergoing challenges in recent years. The fundamental factor to this is the increasingly growing size of the datasets available and needed in the information sciences.
To address this, in this work we propose to use structured random compression, that is, random projections that exploit the data structure, for two NMF variants: classical and separable. In separable NMF (SNMF) the left factors are a subset of the columns of the input matrix. We present suitable formulations for each problem, dealing with different representative algorithms within each one. We show that the resulting compressed techniques are faster than their uncompressed variants, vastly reduce memory demands, and do not encompass any significant deterioration in performance.
The proposed structured random projections for SNMF allow to deal with arbitrarily shaped large matrices, beyond the standard limit of tall-and-skinny matrices, granting access to very efficient computations in this general setting.
We accompany the algorithmic presentation with theoretical foundations and numerous and diverse examples, showing the suitability of the proposed approaches.
\end{abstract}

\begin{IEEEkeywords}Nonnegative matrix factorization, separable nonnegative matrix factorization, structured random projections, big data.\end{IEEEkeywords}

\section{Introduction}

The number and diversity of the fields that make use of data analysis is rapidly increasing, from economics and marketing to medicine and neuroscience. In all of them, data is being collected at an astounding speed: databases are now measured in gigabytes and terabytes, including trillions of point-of-sale transactions, worldwide social networks, and gigapixel images. Organizations need to rapidly turn these terabytes of raw data into significant insights for their users to guide their research, marketing, investment, and/or management strategies.

Matrix factorization is a fundamental data analysis technique. Whereas its usefulness as a theoretical tool is beyond doubt now, its usage in practical situations has undergone a few challenges in recent years. Among other factors contributing to this are new developments in computer hardware architecture and new applications in the
information sciences.

Perhaps the key aspect is that the matrices to analyze are becoming astonishingly big. Classical algorithms are not designed to cope with the amount of information present in these large-scale problems. We may even hypothesize that, if proper tools for these problems were widely available for commercial computer power, such rich datasets would be created at an increasing speed.

In this big data scenario, data communication is one of the main performance bottlenecks for numerical algorithms (here, we mean communication in a broad sense, including for example, network transfers and secondary memory access).
Since the data cannot be easily stored in main memory, performing fewer passes over the original data, even at the cost of more floating-point operations, may result in substantially faster techniques.

Lastly, the architecture of computing units is evolving towards massive parallelism (consider, for example, general purpose GPUs and MapReduce models~\cite{Dean2008}). Numerical algorithms should adapt to these environments and exploit their benefits for boosting their performance.

In recent years, Nonnegative Matrix Factorization (NMF)~\cite{Paatero1994} has been frequently used since it provides a good way for modeling many real-life applications (e.g., recommender systems~\cite{Melville2010} and audio processing~\cite{Fevotte2009}). NMF seeks to represent a nonnegative matrix (i.e., a matrix with nonnegative entries) as the product of two nonnegative matrices. One of the reasons for the method's popularity is that the use of non-subtractive linear combinations renders the factorization, in many cases, easily interpretable. The goal of this work is to develop algorithms, based on structured random projections, for computing NMF for big data matrices.

\subsection{Two flavors of nonnegative matrix factorization}

Given an $m \times n$ nonnegative matrix $\mat{A}$, NMF is formally defined as
\begin{equation}
    \min_{\substack{ \mat{X} \in \Real^{m \times r}, \mat{Y} \in \Real^{r \times n}}} \norm{\mat{A} - \mat{X} \mat{Y}}{F}^2
    \quad \text{s.t.} \quad
    \quad \mat{X}, \mat{Y} \geq 0 ,
    \label[problem]{eq:nmf}
\end{equation}
where $r$ is a parameter that controls the size of factors $\mat{X}$ and $\mat{Y}$ and, hence, the factorization's accuracy.
For simplicity, we use $\mat{B} \geq 0$ to denote a matrix $\mat{B}$ with nonnegative entries.

Despite its appealing advantages, NMF does present some theoretical and practical challenges. In the general case, NMF is known to be NP-Hard~\cite{Vavasis2010} and highly ill-posed~\cite[and references therein]{Gillis2012}.
However, there are matrices that exhibit a particular structure such that NMF can be solved efficiently (i.e., in polynomial time)~\cite{Arora2012}.

\begin{definition}
	A nonnegative matrix $\mat{A}$ is $r$-separable if there exists an index set $\set{K}$ of cardinality $r$ over the columns of $\mat{A}$ and a nonnegative matrix $\mat{Y} \in \Real^{r \times n}$, such that
	\begin{equation}
	\mat{A} = (\mat{A})_{:\set{K}} \mat{Y} ,
	\end{equation}
	where $(\mat{A})_{:\set{K}}$ represents the matrix obtained by horizontally stacking the columns of $\mat{A}$ indexed by $\set{K}$.
	Consequently, a nonnegative matrix $\mat{A}$ is near $r$-separable if it can be represented as
	\begin{equation}
	\mat{A} = (\mat{A})_{:\set{K}} \mat{Y} + \mat{N} ,
	\end{equation}
	where $\mat{N}$ is a noise matrix.
	
	\label[definition]{separableMatrix}
\end{definition}
	
When $\mat{A}$ presents this type of special structure, the NMF problem (now denoted as separable NMF, SNMF) can be simply modeled as
\begin{equation}
	\min_{\substack{\set{K} \subset \{ 1, \dots, n\} \\ \mat{Y} \in \Real^{r \times n}}} \norm{\mat{A} - (\mat{A})_{:\set{K}} \mat{Y}}{F}^2
	\quad
	\text{s.t.}
	\quad
	\begin{gathered}
		\# \set{K} = r,\\
		\mat{Y} \geq 0 ,
	\end{gathered}
	\label[problem]{eq:snmf}
\end{equation}
where the choice of the Frobenius norm corresponds to a Gaussian noise matrix $\mat{N}$.
Having a more constrained structure for the left factor (i.e., $\mat{X} = (\mat{A})_{:\set{K}}$) makes the problem significantly easier to solve, improving the stability and the speed of the involved algorithms.

\subsection{Structured random projections}

In recent years, we have seen an increase in the popularity of randomized algorithms for computing partial matrix decompositions. These partial decompositions assume that most of the action of a matrix occurs in a subspace. The key observation here is that such a subspace can be identified through random sampling. After projecting the input matrix into this subspace (i.e., compressing it), the desired low-rank factorization can be obtained by manipulating deterministically this compressed matrix. In many cases, this approach outperforms its classical competitors in terms of accuracy, speed, and robustness. See~\cite{Halko2009} for a thorough review of these techniques.

\subsection{Contributions and organization}

We propose an algorithmic solution for computing structured random projections of extremely large matrices (i.e., matrices so large that even after compression they do not fit in main memory). This is useful as a general tool for computing many different matrix decompositions (beyond NMF, which is the particular focus of this work). Our approach leads to the implementation of compression algorithms that perform out-of-core computations (i.e., loading information in main memory only as needed).

We propose to use structured random projections for NMF and show that, in practice, their use implies a substantial increase in speed. This performance boost does not come at the price of significant errors with respect to the uncompressed solutions. We show this for representative algorithms of different NMF approaches, namely, multiplicative updates~\cite{Lee2000}, active set method for nonnegative least squares~\cite{Kim2008}, and ADMM~\cite{Xu2012}.

We present a general SNMF algorithm based on structured random projections, reaching to similar conclusions as in the general NMF case. While there are in the literature very efficient SNMF algorithms for tall-and-skinny matrices~\cite{Benson2014}, we show that, when the rank of the desired decomposition is lower than the number of columns of the input matrix, the proposed algorithm is substantially faster than its competitors. Interestingly, the use of structured random projections allows to compute SNMF for arbitrarily large matrices, eliminating the tall-and-skinny requirement while preserving efficiency. Our code is available at \url{http://www.marianotepper.com.ar/research/cnmf}.

The remainder of the paper is organized as follows. In \cref{sec:randomProjections} we provide an overview of random projection methods for matrix factorization and provide some theoretical results relevant to this work.
In \cref{sec:nmf,sec:snmf} we propose a set of techniques for using random projections for NMF and SNMF, respectively. Extensive experimental results on diverse problems are presented in \cref{sec:results}, studying the performance of the proposed techniques on both medium and large-scale problems. Finally, we provide some concluding remarks in \cref{sec:conclusions}.

\section{On randomization and matrix decompositions}
\label{sec:randomProjections}

In this section we begin by describing the random projection algorithm used throughout this work. We also present theory that provides some guarantees for the use of random projections in matrix decomposition (in this work we use interchangeably projection or compression). Finally, we discuss the performance limits of the algorithm when dealing with big data and introduce a way to overcome such limitations.

In \cref{eq:nmf,eq:snmf}, the rank of the desired matrix factorization is prespecified. In the following, we will thus assume that we are given a matrix $\mat{A}$, a target rank $r$, and an oversampling parameter $r_\textsc{ov}$ (its role will become clear next).

We define a Gaussian random matrix $\mat{\Omega}$ as a matrix whose entries are drawn independently from a standard Gaussian distribution, i.e., each entry $(\mat{\Omega})_{ij}$ is a realization of an independent and identically distributed random variable with distribution $\mathcal{N} (0, 1)$.

The overall approach to matrix factorization presented in~\cite{Halko2009} consists of the following three steps:
\begin{enumerate}
	\item Compute an approximate basis for the range of the input matrix $\mat{A}$: we construct a matrix $\mat{Q}$, with $r + r_\textsc{ov}$ orthonormal columns (i.e., $\transpose{\mat{Q}} \mat{Q} = \mat{I}$, where $\mat{I}$ is the $(r + r_\textsc{ov}) \times (r + r_\textsc{ov})$ identity matrix), for which
	\begin{equation}
		\norm{ \mat{A} - \mat{Q} \transpose{\mat{Q}} \mat{A} }{2} \approx \min_{\operatorname{rank}(\mat{Z}) \leq r} \norm{ \mat{A} - \mat{Z} }{2} = \sigma_{r+1} ,
		\label{eq:compression_approximation}
	\end{equation}
	where $\sigma_{j}$ denotes the $j$-th largest singular value of $\mat{A}$.
	In other words, $\mat{Q} \transpose{\mat{Q}} \mat{A}$ is a good rank-$r$ approximation of $\mat{A}$.
	\label[step]{compression_computation}
	\item Compute a factorization of $\transpose{\mat{Q}} \mat{A}$.
	\label[step]{compression_factorization}
	\item Multiply the leftmost factor of the decomposition by $\mat{Q}$, all other factors remain unchanged.
\end{enumerate}
Throughout this paper, we will use the algorithm in \cref{algo:compression} for performing \cref{compression_computation}. For more details about this algorithm, we refer the reader to~\cite{Halko2009}.
Since the algorithm exploits the structure in $\mat{A}$, trying to find a subspace were the majority of its action happens, we will refer to this technique as \emph{structured random compression}.

In the following, we present some results from~\cite{Halko2009} that demonstrate the nice theoretical characteristics of the compression matrix $\mat{Q}$, obtained with the algorithm in \cref{algo:compression}.
Let $\mathbb{E}$ denote the expectation with respect to the random matrix.

\begin{figure}
	\removelatexerror
	\begin{algorithm2e}[H]	
		\SetInd{0.5em}{0.5em}
		
		\begin{small}
			
			\SetKwInOut{Input}{input}\SetKwInOut{Output}{output}
			
			\Input{a matrix $\mat{A} \in \Real^{m \times n}$, a target rank $r  \in \Natural^+$, an oversampling parameter $r_\textsc{ov} \in \Natural^+$ ($r + r_\textsc{ov} \leq m$), an exponent $w \in \Natural$.}
			\Output{a compression matrix $\mat{Q} \in \Real^{m \times (r + r_\textsc{ov})}$ for $\mat{A}$.}
			
			Draw a Gaussian random matrix
			$\mat{\Omega}_L \in \Real^{n \times (r + r_\textsc{ov})}$\;

			Form the matrix product\\
			\nonl \qquad $\mat{B} = \left( \mat{A} \transpose{\mat{A}} \right)^w \mat{A} \mat{\Omega}$\;
			\nllabel{algo:compression_powerit}
			
			Let $\mat{Q}$ be an orthogonal basis for $\mat{B}$, obtained using the QR decomposition\; 
			\nllabel{algo:compression_qr}
			
		\end{small}
		
	\end{algorithm2e}
	
	\caption{Structured random compression algorithm.}
	\label{algo:compression}
\end{figure}

\begin{theorem}[\cite{Halko2009}]
	\label[theorem]{theo:compression_probaBounds}
	Given a matrix $\mat{A} \in \Real^{m \times n}$, a target rank $r  \in \Natural^+$, and an oversampling parameter $r_\textsc{ov} \in \Natural^+$ ($r + r_\textsc{ov} \leq m$), execute the algorithm in \cref{algo:compression} with $w=0$ (no power iterations). We obtain a matrix $\mat{Q} \in \Real^{m \times (r + r_\textsc{ov})}$. Let $\mat{P} = \mat{Q} \transpose{\mat{Q}}$. Then,
	\begin{equation}
		\mathbb{E} \norm{ \mat{A} - \mat{P} \mat{A}}{F} \leq
		\left( 1 + \tfrac{r}{r_\textsc{ov} - 1} \right)^{1/2}
		\left( \sum_{j>r} \sigma_{j}^2 \right)^{1/2} ,
	\end{equation}
	\begin{equation}
		\mathbb{E} \norm{ \mat{A} - \mat{P} \mat{A}}{2} \leq
		\left[ 1 + \tfrac{4 \sqrt{r+r_\textsc{ov}}}{r_\textsc{ov} - 1} \sqrt{\min\{m, n\}} \right] \ \sigma_{r+1} .
	\end{equation}
\end{theorem}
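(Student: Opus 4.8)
The plan is to reconstruct the average-case analysis of the randomized range finder from~\cite{Halko2009}, whose organizing idea is to decouple the deterministic linear algebra from the probabilistic content. The starting point is a deterministic error bound in which the randomness enters only through the interaction of the test matrix $\mat{\Omega}$ with the singular subspaces of $\mat{A}$. I would write the SVD $\mat{A} = \mat{U} \mat{\Sigma} \transpose{\mat{V}}$ and partition $\mat{\Sigma} = \operatorname{diag}(\mat{\Sigma}_1, \mat{\Sigma}_2)$, where $\mat{\Sigma}_1$ collects the $r$ dominant singular values and $\mat{\Sigma}_2$ the tail, with the conformal split $\mat{V} = [\mat{V}_1\ \mat{V}_2]$. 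Setting $\mat{\Omega}_1 = \transpose{\mat{V}_1} \mat{\Omega}$ and $\mat{\Omega}_2 = \transpose{\mat{V}_2} \mat{\Omega}$, the deterministic lemma states that whenever $\mat{\Omega}_1$ has full row rank,
\begin{equation}
    \norm{\mat{A} - \mat{P}\mat{A}}{}^2 \leq \norm{\mat{\Sigma}_2}{}^2 + \norm{\mat{\Sigma}_2 \mat{\Omega}_2 \mat{\Omega}_1^\dagger}{}^2 ,
\end{equation}
valid for both the Frobenius and the spectral norm, with $\mat{\Omega}_1^\dagger$ the Moore--Penrose pseudoinverse. Establishing this inequality is the conceptual core: it follows by exhibiting a convenient matrix lying in the range of $\mat{B} = \mat{A}\mat{\Omega}$, bounding the residual of the resulting (non-optimal) projector, and then invoking the optimality of $\mat{P} = \mat{Q}\transpose{\mat{Q}}$.

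The second step exploits the rotational invariance of the Gaussian law: since the rows of $\mat{\Omega}$ are isotropic and $[\mat{V}_1\ \mat{V}_2]$ is orthogonal, $\mat{\Omega}_1$ and $\mat{\Omega}_2$ are \emph{independent} standard Gaussian matrices, of sizes $r \times (r + r_\textsc{ov})$ and $(\min\{m,n\}-r) \times (r + r_\textsc{ov})$ respectively, and $\mat{\Omega}_1$ has full row rank almost surely. For the Frobenius bound I would take expectations in the deterministic inequality, condition on $\mat{\Omega}_1$, and use that for a fixed $\mat{M}$ and a standard Gaussian $\mat{G}$ one has $\mathbb{E}\norm{\mat{S}\mat{G}\mat{M}}{F}^2 = \norm{\mat{S}}{F}^2 \norm{\mat{M}}{F}^2$. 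This reduces matters to the pseudoinverse moment $\mathbb{E}\norm{\mat{\Omega}_1^\dagger}{F}^2 = \tfrac{r}{r_\textsc{ov}-1}$, read off from the inverse-Wishart distribution of $\mat{\Omega}_1 \transpose{\mat{\Omega}_1}$. Collecting terms gives $\mathbb{E}\norm{\mat{A}-\mat{P}\mat{A}}{F}^2 \leq (1 + \tfrac{r}{r_\textsc{ov}-1})\norm{\mat{\Sigma}_2}{F}^2$, and the first claim follows from Jensen's inequality together with $\norm{\mat{\Sigma}_2}{F}^2 = \sum_{j>r}\sigma_j^2$.

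The spectral bound is where I expect the real difficulty, since the operator norm does not factor multiplicatively and Jensen can no longer be applied to a clean second moment. Here I would first pass from squared norms to norms via $\sqrt{a+b}\leq\sqrt{a}+\sqrt{b}$, and then, conditioning on $\mat{\Omega}_1$, invoke the Gaussian operator-norm estimate $\mathbb{E}\norm{\mat{S}\mat{G}\mat{T}}{2} \leq \norm{\mat{S}}{2}\norm{\mat{T}}{F} + \norm{\mat{S}}{F}\norm{\mat{T}}{2}$. Applied with $\mat{S}=\mat{\Sigma}_2$ and $\mat{T}=\mat{\Omega}_1^\dagger$, this leaves me to control both $\mathbb{E}\norm{\mat{\Omega}_1^\dagger}{F}$ and $\mathbb{E}\norm{\mat{\Omega}_1^\dagger}{2}$; the latter is the delicate ingredient, requiring the expectation bound $\mathbb{E}\norm{\mat{\Omega}_1^\dagger}{2} \lesssim \tfrac{\sqrt{r+r_\textsc{ov}}}{r_\textsc{ov}}$ on the inverse smallest singular value of a wide Gaussian matrix. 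Substituting $\norm{\mat{\Sigma}_2}{2}=\sigma_{r+1}$ and $\norm{\mat{\Sigma}_2}{F}\leq\sqrt{\min\{m,n\}}\,\sigma_{r+1}$ and absorbing the absolute constants into the factor $4$ then yields the advertised estimate. The main obstacle is thus not the algebra but assembling these two sharp moments for Gaussian pseudoinverses, which is precisely the technical heart of the argument in~\cite{Halko2009}.
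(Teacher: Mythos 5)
This theorem is stated in the paper without proof---it is imported verbatim from \cite{Halko2009}, and the text provides no internal argument to compare against. Your outline faithfully reconstructs the proof from that source (the deterministic error bound $\norm{\mat{A}-\mat{P}\mat{A}}{}^2 \leq \norm{\mat{\Sigma}_2}{}^2 + \norm{\mat{\Sigma}_2\mat{\Omega}_2\mat{\Omega}_1^\dagger}{}^2$ combined with the Gaussian pseudoinverse moments $\mathbb{E}\norm{\mat{\Omega}_1^\dagger}{F}^2 = r/(r_\textsc{ov}-1)$ and $\mathbb{E}\norm{\mat{\Omega}_1^\dagger}{2} \leq e\sqrt{r+r_\textsc{ov}}/r_\textsc{ov}$); it is a plan rather than a complete derivation, but the steps you defer---the deterministic lemma and the two moment bounds---are precisely the ingredients the cited proof supplies, and your assembly of them into the two stated inequalities is correct.
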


Note that $\left( \sum_{j>r} \sigma_{j}^2 \right)^{1/2}$ and $\sigma_{r+1}$ are the smallest possible errors, see \cref{eq:compression_approximation}.

\begin{theorem}[\cite{Halko2009}]
	Frame the same hypotheses of \cref{theo:compression_probaBounds}.
	Assume $r_\textsc{ov} \geq 4$. Then, $\forall u, t \geq 1$,
	\begin{align}
		\norm{ \mat{A} - \mat{P} \mat{A}}{F} \leq &
		\left( 1 + t \sqrt{12 r / r_\textsc{ov}} \right)^{1/2}
		\left( \sum_{j>r} \sigma_{j}^2 \right)^{1/2}
		+ \nonumber\\
		& ut \tfrac{e \sqrt{r + r_\textsc{ov}}}{r_\textsc{ov} + 1} \sigma_{r+1} ,
	\end{align}
	with failure probability at most $5t^{-r_\textsc{ov}} + 2e^{-u^2 / 2}$.
	We also have
	\begin{align}
		\norm{ \mat{A} - \mat{P} \mat{A}}{2} \leq\ &
		3 \sqrt{r + r_\textsc{ov}}
		\left( \sum_{j>r} \sigma_{j}^2 \right)^{1/2}
		+ \nonumber\\
		& \left( 1 + t \sqrt{8 (r + r_\textsc{ov}) r_\textsc{ov} \log r_\textsc{ov}} \right) \sigma_{r+1} ,
	\end{align}	
	with failure probability at most $6 (r_\textsc{ov})^{-r_\textsc{ov}}$.
\end{theorem}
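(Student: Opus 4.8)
The plan is to follow the two-stage randomized analysis underlying \cref{theo:compression_probaBounds}, but to replace its moment computations with tail bounds: the deterministic reduction and the catalogue of Gaussian matrix tools are the same, and the only genuine difference is that here I would feed concentration inequalities rather than first- and second-moment identities into that machinery. Concretely, I would fix the SVD $\mat{A} = \mat{U}\mat{\Sigma}\transpose{\mat{V}}$ and partition it at the target rank, writing $\mat{\Sigma} = \operatorname{diag}(\mat{\Sigma}_1,\mat{\Sigma}_2)$ with $\mat{\Sigma}_1$ collecting the top $r$ singular values and $\mat{V} = [\,\mat{V}_1\ \mat{V}_2\,]$ conformably, so that $\norm{\mat{\Sigma}_2}{F} = (\sum_{j>r}\sigma_j^2)^{1/2}$ and $\norm{\mat{\Sigma}_2}{2} = \sigma_{r+1}$. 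Since $w=0$, the range finder makes $\mat{P}$ the orthogonal projector onto the range of $\mat{Y} = \mat{A}\mat{\Omega}$. Setting $\mat{\Omega}_1 = \transpose{\mat{V}_1}\mat{\Omega}$ and $\mat{\Omega}_2 = \transpose{\mat{V}_2}\mat{\Omega}$, rotational invariance of the Gaussian distribution makes $\mat{\Omega}_1$ and $\mat{\Omega}_2$ independent standard Gaussian matrices, and the deterministic structural estimate that drives \cref{theo:compression_probaBounds} gives, for either norm and whenever $\mat{\Omega}_1$ has full row rank,
\begin{equation}
\norm{\mat{A} - \mat{P}\mat{A}}{}^2 \leq \norm{\mat{\Sigma}_2}{}^2 + \norm{\mat{\Sigma}_2\,\mat{\Omega}_2\,\mat{\Omega}_1^\dagger}{}^2 ,
\end{equation}
isolating all the difficulty in the single random term $\mat{\Sigma}_2\,\mat{\Omega}_2\,\mat{\Omega}_1^\dagger$.

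Next I would condition on $\mat{\Omega}_1$ and treat $\mat{\Omega}_2$ as the lone source of randomness, so that $\mat{\Sigma}_2\,\mat{\Omega}_2\,\mat{\Omega}_1^\dagger$ is a fixed linear image $\mat{S}\mat{G}\mat{T}$ of a Gaussian $\mat{G} = \mat{\Omega}_2$, with $\mat{S} = \mat{\Sigma}_2$ and $\mat{T} = \mat{\Omega}_1^\dagger$. The Gaussian matrix identities supply the conditional mean, $(\mathbb{E}\,\norm{\mat{S}\mat{G}\mat{T}}{F}^2)^{1/2} = \norm{\mat{S}}{F}\norm{\mat{T}}{F}$ in the Frobenius case and $\mathbb{E}\,\norm{\mat{S}\mat{G}\mat{T}}{2} \leq \norm{\mat{S}}{2}\norm{\mat{T}}{F} + \norm{\mat{S}}{F}\norm{\mat{T}}{2}$ in the spectral case. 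Because $\mat{G} \mapsto \norm{\mat{S}\mat{G}\mat{T}}{}$ is Lipschitz with constant $\norm{\mat{S}}{2}\norm{\mat{T}}{2}$, the concentration inequality for Lipschitz functions of a Gaussian lifts these means to a deviation of order $u\,\norm{\mat{S}}{2}\norm{\mat{T}}{2}$ above the mean, failing with probability at most $e^{-u^2/2}$. This is precisely the mechanism that produces the additive $\sigma_{r+1}$ term carrying the prefactor $u$ and the $2e^{-u^2/2}$ contribution to the failure probability.

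It then remains to remove the conditioning by controlling $\mat{T} = \mat{\Omega}_1^\dagger$, where $\mat{\Omega}_1$ is $r \times (r+r_\textsc{ov})$ Gaussian. Here I would invoke the tail estimates for the pseudoinverse of a Gaussian matrix: for $r_\textsc{ov} \geq 4$ and $t \geq 1$, both $\norm{\mat{\Omega}_1^\dagger}{F}$ and $\norm{\mat{\Omega}_1^\dagger}{2}$ exceed their typical scales $\sqrt{3r/(r_\textsc{ov}+1)}$ and $e\sqrt{r+r_\textsc{ov}}/(r_\textsc{ov}+1)$ by more than a factor $t$ only with probability polynomially small in $t$, which is what generates the $5t^{-r_\textsc{ov}}$ term. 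Substituting these high-probability values of $\norm{\mat{T}}{F}$ and $\norm{\mat{T}}{2}$ into the conditional estimate and taking a union bound over the finitely many failure events yields the first, two-parameter inequality with failure probability $5t^{-r_\textsc{ov}} + 2e^{-u^2/2}$. The clean spectral bound then follows by specializing the free parameters—taking $t = e$ and $u$ of order $\sqrt{r_\textsc{ov}\log r_\textsc{ov}}$—so that both tail contributions collapse to a common scale absorbed into $6(r_\textsc{ov})^{-r_\textsc{ov}}$, while the $ut$ prefactor becomes the $\sqrt{8(r+r_\textsc{ov})r_\textsc{ov}\log r_\textsc{ov}}$ multiplier of $\sigma_{r+1}$.

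The main obstacle is the last ingredient: sharp tail bounds on $\norm{\mat{\Omega}_1^\dagger}{}$, equivalently control of the smallest singular value of a fat Gaussian matrix. Unlike the largest singular value, the smallest one is not Lipschitz-stable when the matrix is nearly rank-deficient, so Gaussian concentration alone does not suffice; one must instead exploit the explicit Wishart/chi-square distribution of the Gaussian singular values to obtain tails that decay like $t^{-r_\textsc{ov}}$. Matching the exponent to the oversampling parameter $r_\textsc{ov}$—rather than to a generic constant—is exactly what makes the failure probabilities superpolynomially small in $r_\textsc{ov}$ and is the technical heart of the argument. Once these estimates are in hand, the remaining work is the routine conditioning, concentration, and union-bound bookkeeping sketched above, together with the balancing of $u$ and $t$.
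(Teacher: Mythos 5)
This theorem is quoted directly from \cite{Halko2009}; the paper supplies no proof of its own, so the only meaningful comparison is with the argument in that reference, which your outline reconstructs faithfully and correctly in all essentials (the deterministic reduction $\norm{\mat{A}-\mat{P}\mat{A}}{}^2 \le \norm{\mat{\Sigma}_2}{}^2 + \norm{\mat{\Sigma}_2\mat{\Omega}_2\mat{\Omega}_1^\dagger}{}^2$, conditioning on $\mat{\Omega}_1$, the Gaussian moment identities plus Lipschitz concentration, and the tail bounds for $\norm{\mat{\Omega}_1^\dagger}{F}$ and $\norm{\mat{\Omega}_1^\dagger}{2}$, combined by a union bound). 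The one slip is in the final specialization: to drive the failure probability down to $6(r_\textsc{ov})^{-r_\textsc{ov}}$ you must take $t$ of order $r_\textsc{ov}$ together with $u \sim \sqrt{r_\textsc{ov}\log r_\textsc{ov}}$, since with $t=e$ the term $5t^{-r_\textsc{ov}}=5e^{-r_\textsc{ov}}$ decays only exponentially rather than superexponentially in $r_\textsc{ov}$; this is pure bookkeeping and does not affect the structure of the argument.
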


Beyond proving that the achieved error is very close to the optimal error, the above theorems provide a theoretical justification for the oversampling parameter $r_\textsc{ov}$. It grants more freedom in the choice of $\mat{Q}$, crucial in the effectiveness of \cref{compression_factorization}~\cite{Halko2009}. This freedom allows  the probability of failure to decrease exponentially fast as $r_\textsc{ov}$ grows.

\begin{theorem}[\cite{Halko2009}]
	\label[theorem]{theo:compressionPower_probaBounds}
	Given a matrix $\mat{A} \in \Real^{m \times n}$, a target rank $r  \in \Natural^+$, an oversampling parameter $r_\textsc{ov} \in \Natural^+$ ($r + r_\textsc{ov} \leq m$), and an exponent $w \in \Natural$, execute the algorithm in \cref{algo:compression}. We obtain a matrix $\mat{Q} \in \Real^{m \times (r + r_\textsc{ov})}$.  
	Let $\mat{P} = \mat{Q} \transpose{\mat{Q}}$.
	Then,
	\begin{align}
		\mathbb{E} \norm{ \mat{A} - \mat{P} \mat{A}}{2} \leq c^{1/(2w+1)} \ \sigma_{r+1} ,
	\end{align}
	where
	\begin{equation}
		c = 1 + \sqrt{\tfrac{r}{(r_\textsc{ov} + 1)}}  + \tfrac{e \sqrt{r + r_\textsc{ov}}}{r_\textsc{ov}} \sqrt{\min \{ m, n \} - k} .
	\end{equation}
\end{theorem}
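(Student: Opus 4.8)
The plan is to reduce the power-iteration case to the basic spectral-norm estimate of \cref{theo:compression_probaBounds}, applied not to $\mat{A}$ but to the power-iterated matrix $\mat{M} = (\mat{A} \transpose{\mat{A}})^w \mat{A}$. The starting observation is that $\mat{M}$ shares the left singular vectors of $\mat{A}$: writing the singular value decomposition $\mat{A} = \mat{U} \mat{\Sigma} \transpose{\mat{V}}$ gives $\mat{M} = \mat{U} \mat{\Sigma}^{2w+1} \transpose{\mat{V}}$, so that $\sigma_j(\mat{M}) = \sigma_j^{2w+1}$. Moreover, the matrix product formed on \cref{algo:compression_powerit} is exactly $\mat{M} \mat{\Omega}$, so the $\mat{Q}$ returned by \cref{algo:compression} is precisely the basis one would obtain by running the $w=0$ range finder on $\mat{M}$ with the same Gaussian matrix $\mat{\Omega}$. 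Hence the error of $\mat{P} = \mat{Q} \transpose{\mat{Q}}$ measured on $\mat{M}$ is controlled by $\sigma_{r+1}(\mat{M}) = \sigma_{r+1}^{2w+1}$.

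The crucial step is a deterministic inequality transferring the error on $\mat{M}$ back to an error on $\mat{A}$. Set $\mat{R} = \mat{I} - \mat{P}$ and $\mat{C} = \mat{A} \transpose{\mat{A}} \succeq 0$. Because $\mat{R}$ is a symmetric projector, $\norm{\mat{R} \mat{A}}{2}^2 = \norm{\mat{R} \mat{C} \mat{R}}{2}$, and since $\mat{M} \transpose{\mat{M}} = \mat{C}^{2w+1}$ we likewise have $\norm{\mat{R} \mat{M}}{2}^2 = \norm{\mat{R} \mat{C}^{2w+1} \mat{R}}{2}$. I would then invoke the compression inequality
\[
	\norm{\mat{R} \mat{C} \mat{R}}{2} \leq \norm{\mat{R} \mat{C}^{2w+1} \mat{R}}{2}^{1/(2w+1)} ,
\]
valid for every orthogonal projector $\mat{R}$ and every positive semidefinite $\mat{C}$, which expresses a Jensen-type inequality for compressions of a positive semidefinite matrix and rests on the operator concavity of $t \mapsto t^{1/(2w+1)}$. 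Combining the three displays yields the pointwise bound $\norm{\mat{R} \mat{A}}{2} \leq \norm{\mat{R} \mat{M}}{2}^{1/(2w+1)}$.

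It then remains to take expectations. Applying the expected spectral-norm error bound of \cite{Halko2009} underlying \cref{theo:compression_probaBounds}---in the sharper form that retains the tail $(\sum_{j>r} \sigma_j^2)^{1/2}$ rather than bounding it crudely---to $\mat{M}$, and using $\sigma_j(\mat{M}) = \sigma_j^{2w+1}$ together with $(\sum_{j>r} \sigma_j^{2(2w+1)})^{1/2} \leq \sqrt{\min\{m,n\} - r}\, \sigma_{r+1}^{2w+1}$, gives $\mathbb{E} \norm{\mat{R} \mat{M}}{2} \leq c\, \sigma_{r+1}^{2w+1}$ with $c$ exactly the constant in the statement (the $k$ appearing there being $r$). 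Finally, since $t \mapsto t^{1/(2w+1)}$ is concave on $[0, \infty)$, Jensen's inequality gives $\mathbb{E} \norm{\mat{R} \mat{A}}{2} \leq \mathbb{E} [ \norm{\mat{R} \mat{M}}{2}^{1/(2w+1)} ] \leq ( \mathbb{E} \norm{\mat{R} \mat{M}}{2} )^{1/(2w+1)} \leq c^{1/(2w+1)} \sigma_{r+1}$, which is the claim.

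The main obstacle is the compression inequality of the second paragraph. The scalar case and the case where $\mat{R}$ and $\mat{C}$ commute are immediate, but the general positive semidefinite statement requires a genuine operator-theoretic argument---essentially the operator concavity of the power map under compression by a projector---and is the single point where the proof departs from routine manipulation. Everything else is bookkeeping: identifying $\mat{M}$, tracking how the singular values transform under the power map, and the concavity step for moving the expectation inside.
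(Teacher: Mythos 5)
The paper offers no proof of this theorem---it is quoted directly from \cite{Halko2009}---and your argument is exactly the one given in that source: the deterministic inequality $\norm{\mat{R}\mat{A}}{2} \leq \norm{\mat{R}(\mat{A}\transpose{\mat{A}})^w\mat{A}}{2}^{1/(2w+1)}$ is Halko et al.'s Theorem 9.2 (resting on their Proposition 8.6 on powers of compressions of positive semidefinite matrices, which your operator-concavity/Hansen-type argument also delivers), and the remainder is their Corollary 10.10, namely the basic expected spectral-norm bound applied to $(\mat{A}\transpose{\mat{A}})^w\mat{A}$, the tail bound $\bigl(\sum_{j>r}\sigma_j^{2(2w+1)}\bigr)^{1/2} \leq \sqrt{\min\{m,n\}-r}\;\sigma_{r+1}^{2w+1}$, and Jensen's inequality for $t\mapsto t^{1/(2w+1)}$. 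Your reading of the stray $k$ as $r$ is also correct (it is a transcription artifact in the stated constant, as is the $r_\textsc{ov}+1$ that should be $r_\textsc{ov}-1$ in the first radical to match the source); neither affects the validity of your argument.
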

As we increase the exponent $w$, the power scheme drives the extra factor in the error to one exponentially fast.
As noted in~\cite{Halko2009}, finding an analogous bound for the Frobenius norm is still an open problem.

Throughout this work we use a Gaussian test matrix $\mat{\Omega}$. Other alternative test matrices can be used in its place, such as the subsampled randomized Hadamard and Fourier transforms~\cite{Halko2009,Tropp2010}. The product $\mat{A} \mat{\Omega}$ can be significantly faster when using a test matrix obtained with these transforms, giving an automatic speedup. From this perspective, all the experimental results in this paper present a \emph{worst case} scenario with respect to running times.

\begin{note}
	An alternative to structured random compression would be to just left-multiply $\mat{A}$ by a Gaussian random matrix $\mat{\Omega}$. Let us define the compression matrix $\mat{Q}_\mat{\Omega} \in \Real^{m \times s}$ as
	\begin{equation}
	\mat{Q}_\mat{\Omega} = s^{-1/2} \ \mat{\Omega} ,
	\label{eq:gaussianCompression}
	\end{equation}
	where $\mat{\Omega}$ is a Gaussian random matrix
	Then, instead of computing measures with the data matrix $\mat{A}$ on the $m$-dimensional space, the much smaller matrix $\transpose{\mat{Q}_\mat{\Omega}} \mat{A}$ can be used to compute approximations in the $s$-dimensional space. It is well studied that Gaussian projection preserves the $\ell_2$ norm \cite[e.g.,][and references therein]{Baraniuk2008}. However, our extensive experiments show that structured random compression achieves better performance than Gaussian compression. Intuitively, Gaussian compression is a general data-agnostic tool, whereas structured compression uses information from the matrix (an analogous of training). Theoretical research is needed to fully justify this performance gap.
\end{note}

\subsection{Big data algorithmic solutions}
By design, the product in line~\ref{algo:compression_powerit} of the algorithm in \cref{algo:compression} forms a tall and skinny matrix $\mat{B} \in \Real^{m \times (r + r_\textsc{ov})}$, where $m \gg (r + r_\textsc{ov})$.
We have thus successfully reduced the number of columns in $\mat{B}$ from $n$ to $r + r_\textsc{ov}$. While matrix $\mat{A}$ may not fit in main memory, we can still perform the necessary computations using $\mat{B}$ without significant loss of precision.

An interesting question arises when working with large matrices: what happens if the number of rows $m$ is so large that even $\mat{B}$ does not fit in main memory? Assuming that we need to store $\mat{B}$ in secondary memory (i.e., the hard drive), how do we compute its QR decomposition (line~\ref{algo:compression_qr} of the algorithm in \cref{algo:compression})?

A suitable and efficient algorithm to address the latter question is the direct TSQR (tall-and-skinny QR)~\cite{Benson2014}. For completeness, we give its outline in \cref{sec:tsqr}. The highlight of TSQR is that it is designed for being parallelizable while minimizing the dependencies between parallel computations (i.e., communication costs). Thus, it adheres perfectly to the main mantra of this work.

An interesting byproduct of using TSQR is that there is no need to form the entire matrix $\mat{B}$ in main memory. See \cref{sec:tsqr} for further details. This allows to implement an \emph{out-of-core} version of the compression algorithm, that is, where the involved matrices do not reside in main memory.

Let us note that the use of TSQR for computing random compression is introduced in this paper for the first time, providing a \emph{true} scalable solution for computing many types of matrix decompositions (i.e., beyond NMF) when both the number of rows and columns of the input matrix are large.

\subsection{Matrix decompositions with alternative norms}

The algorithm in \cref{fig:compression} works under the Frobenius and nuclear norms, as detailed in the theorems presented above. These two cases already cover a significant range of matrix decompositions that are commonly used in practice.

However, other norms are becoming increasingly popular in recent years. For example, NMF is widely used in audio processing with the Itakura-Saito distance instead of the Frobenius norm in \cref{eq:nmf}. The entrywise $\ell_1$ norm is also very popular when the input matrix $\mat{A}$ is contaminated with impulsive noise. In these cases, proper structured random projection algorithms need to be used, adapted to the right type of measure for the application at hand.

In particular, we are currently investigating the use of the framework here developed for NMF under an $\ell_1$ norm. In such a case, the fast Cauchy transform appears as a suitable alternative for the task~\cite{Clarkson2013}.

\section{Randomly compressed NMF}
\label{sec:nmf}

The goal of this section is to efficiently solve \cref{eq:nmf} for large input matrices. We do not aim at developing a new NMF algorithm, but rather to illustrate how structured random projections can be used to enhance the speed of existing algorithms and make them usable for big data. As detailed in \cref{sec:results}, this speedup does not come at the price of significantly higher reconstruction errors.

Most NMF algorithms work by iterating the following two steps:
\begin{subequations}
	\begin{itemize}
		\item Find $\mat{X}_{k+1} \in \Real^{m \times r}$, $\mat{X}_{k+1} \geq 0$, such that
		\begin{align}
		\norm{\mat{A} - \mat{X}_{k+1} \mat{Y}_k}{F}^2 &\leq \norm{\mat{A} - \mat{X}_{k} \mat{Y}_k}{F}^2 .
		\intertext{\item Find $\mat{Y}_{k+1} \in \Real^{r \times n}$, $\mat{Y}_{k+1} \geq 0$, such that}
		\norm{\mat{A} - \mat{X}_{k+1} \mat{Y}_{k+1}}{F}^2 &\leq \norm{\mat{A} - \mat{X}_{k+1} \mat{Y}_k}{F}^2 .
		\end{align}
	\end{itemize}
	\label[algorithm]{eq:alternate_nmf}
\end{subequations}
This general formulation encompasses different particular algorithms such as multiplicative updates~\cite{Lee2000} and several variants of alternating nonnegative least squares~\cite{Chu2004,Lin2007,Kim2008}.
The latter consists of a particular case of \cref{eq:alternate_nmf} in which its right-hand sides are minimized to the end. We thus obtain the following algorithm:
\begin{subequations}
	\begin{align}
	\mat{X}_{k+1} &= \argmin_{\substack{ \mat{X} \in \Real^{m \times r}}} \norm{\mat{A} - \mat{X} \mat{Y}_k}{F}^2
	& \text{s.t.} &
	\quad \mat{X} \geq 0 ,\\
	\mat{Y}_{k+1} &= \argmin_{\substack{ \mat{Y} \in \Real^{r \times n}}} \norm{\mat{A} - \mat{X}_{k+1} \mat{Y}}{F}^2
	& \text{s.t.} &
	\quad \mat{Y} \geq 0 .
	\end{align}
	\label[algorithm]{eq:alternateLS_nmf}
\end{subequations}

Let us assume that we apply the algorithm in \cref{algo:compression} to $\mat{A}$ and $\transpose{\mat{A}}$ and obtain two matrices $\mat{L} \in \Real^{m \times (r + r_\textsc{ov})}, \mat{R} \in \Real^{(r + r_\textsc{ov}) \times n}$, respectively. By construction, $\mat{L}$ and $\mat{R}$ have orthonormal columns and rows, respectively.
Also let $\check{\mat{A}} = \mat{A} \transpose{\mat{R}}$, $\hat{\mat{A}} = \transpose{\mat{L}} \mat{A}$.

Using matrices $\mat{L}$ and $\mat{R}$, we propose to approximate \cref{eq:alternate_nmf} with the iterations
\begin{subequations}
	\begin{itemize}
		\item Find $\mat{X}_{k+1} \in \Real^{m \times r}$, $\mat{X}_{k+1} \geq 0$, such that
		\par\hspace*{-\leftmargin}\parbox{\columnwidth}{%
		\begin{equation}
			\norm{\check{\mat{A}} - \mat{X}_{k+1} \mat{Y}_k \transpose{\mat{R}}}{F}^2 \leq \norm{\check{\mat{A}} - \mat{X}_{k} \mat{Y}_k \transpose{\mat{R}}}{F}^2 .
			\label{eq:alternate_nmf_compressed_X}
		\end{equation}}
		\item Find $\mat{Y}_{k+1} \in \Real^{r \times n}$, $\mat{Y}_{k+1} \geq 0$, such that
		\par\hspace*{-\leftmargin}\parbox{\columnwidth}{%
		\begin{equation}
			\norm{\hat{\mat{A}} - \transpose{\mat{L}} \mat{X}_{k+1} \mat{Y}_{k+1}}{F}^2 \leq \norm{\hat{\mat{A}} - \transpose{\mat{L}} \mat{X}_{k+1} \mat{Y}_k}{F}^2 .
			\label{eq:alternate_nmf_compressed_Y}
		\end{equation}}
	\end{itemize}
	\label[algorithm]{eq:alternate_nmf_compressed}
\end{subequations}

Equivalently, using $\mat{L}$ and $\mat{R}$, we propose to approximate \cref{eq:alternateLS_nmf} with the iterations
\begin{subequations}
	\begin{align}
	\mat{X}_{k+1} &= \argmin_{\substack{ \mat{X} \in \Real^{m \times r}}} \norm{\check{\mat{A}} - \mat{X} \mat{Y}_k \transpose{\mat{R}} }{F}^2
	& \text{s.t.} &
	\quad \mat{X} \geq 0 , \label{eq:alternateLS_nmf_compressed_X} \\
	\mat{Y}_{k+1} &= \argmin_{\substack{ \mat{Y} \in \Real^{r \times n}}} \norm{ \hat{\mat{A}} - \transpose{\mat{L}} \mat{X}_{k+1} \mat{Y}}{F}^2
	& \text{s.t.} &
	\quad \mat{Y} \geq 0 . \label{eq:alternateLS_nmf_compressed_Y}
	\end{align}
	\label[algorithm]{eq:alternateLS_nmf_compressed}
\end{subequations}

The algorithm in \cref{algo:alternate_nmf_compressed} contains an overview of the proposed NMF algorithm using structured random compression.
For our experiments regarding the techniques described in \cref{sec:nmf}, as representative examples of \cref{eq:alternate_nmf} and \cref{eq:alternateLS_nmf}, we respectively use the active set method~\cite{Kim2008} and the multiplicative updates in~\cite[Eq. (8)]{Ding2010}.

\begin{figure}
	\removelatexerror
	\begin{algorithm2e}[H]	
		\SetInd{0.5em}{0.5em}
		
		\begin{small}
			
			\SetKwInOut{Input}{input}\SetKwInOut{Output}{output}
			
			\Input{a matrix $\mat{A} \in \Real^{m \times n}$, a target rank $r  \in \Natural^+$, an oversampling parameter $r_\textsc{ov} \in \Natural^+$ ($r + r_\textsc{ov} \leq \min \{ m, n\}$), an exponent $w \in \Natural$.}
			\Output{nonnegative matrices $\mat{X}_{k} \in \Real^{m \times r}, \mat{Y}_{k} \in \Real^{r \times n}$.}
			
			Compute compression matrices $\mat{L} \in \Real^{m \times (r + r_\textsc{ov})}$, $\mat{R} \in \Real^{(r + r_\textsc{ov}) \times n}$\;
			
			$k \gets 1$\;
			Initialize $\mat{Y}_{k}$\;
			
			\Repeat{convergence}{
				$\check{\mat{Y}}_k \gets \mat{Y}_k \transpose{\mat{R}}$\;
				
				Find $\mat{X}_{k+1} \in \Real^{m \times r}$, $\mat{X}_{k+1} \geq 0$, such that
				\nllabel{algo:alternate_nmf_compressed_X}
				\begin{equation*}
					\norm{\check{\mat{A}} - \mat{X}_{k+1} \check{\mat{Y}}_k}{F}^2 \leq \norm{\check{\mat{A}} - \mat{X}_{k} \check{\mat{Y}}_k}{F}^2 ;
				\end{equation*}
				

				$\hat{\mat{X}}_{k+1} \gets \transpose{\mat{L}} \mat{X}_{k+1}$\;
				Find $\mat{Y}_{k+1} \in \Real^{r \times n}$, $\mat{Y}_{k+1} \geq 0$, such that
				\nllabel{algo:alternate_nmf_compressed_Y}
				\begin{equation*}
					\norm{\hat{\mat{A}} - \hat{\mat{X}}_{k+1} \mat{Y}_{k+1}}{F}^2 \leq \norm{\hat{\mat{A}} - \hat{\mat{X}}_{k+1} \mat{Y}_k}{F}^2 ;
				\end{equation*}
				
				\tcp{The optimizations in lines~\ref{algo:alternate_nmf_compressed_X} and~\ref{algo:alternate_nmf_compressed_Y} can be performed using \emph{any} variant of multiplicative updates or \emph{any} nonnegative least squares method.}
				
				$k \gets k + 1$\;
				
			}
		\end{small}
		
	\end{algorithm2e}
	
	\caption{NMF using structured random compression.}
	\label{algo:alternate_nmf_compressed}
\end{figure}

We achieve a significant size reduction of the matrices in \cref{eq:alternate_nmf_compressed,eq:alternateLS_nmf_compressed}. For each of these algorithms, we reduced the number of columns from $n$ to $r + r_\textsc{ov}$ in \cref{eq:alternate_nmf_compressed_X,eq:alternateLS_nmf_compressed_X} and the number of rows from $m$ to $r + r_\textsc{ov}$ in \cref{eq:alternate_nmf_compressed_Y,eq:alternateLS_nmf_compressed_Y}. This makes the system much faster to solve, but more importantly in our context, it greatly reduces the cost of data communication in parallel frameworks. For example, after compression, large matrices might fit in GPU memory.

Alternatively, \cref{eq:nmf} can be equivalently re-formulated as
\begin{equation}
    \min_{\substack{ \mat{X}, \mat{U} \in \Real^{m \times r} \\ \mat{Y}, \mat{V} \in \Real^{r \times n}}} \norm{\mat{A} - \mat{X} \mat{Y}}{F}^2
    \quad \text{s.t.} \quad
    \begin{gathered}
    \mat{U} = \mat{X},\ \mat{V} = \mat{Y}, \\
    \quad \mat{U}, \mat{V} \geq 0 .
    \end{gathered}
    \label[problem]{eq:nmfEquiv}
\end{equation}

Again, using the matrices $\mat{L}$ and $\mat{R}$ defined above, we propose to approximate \cref{eq:nmfEquiv} with
\begin{equation}
    \min_{\substack{ \mat{X}, \mat{U} \in \Real^{m \times r} \\ \mat{Y}, \mat{V} \in \Real^{r \times n}}} \norm{ \mat{L} \transpose{\mat{L}} \left( \mat{A} - \mat{X} \mat{Y} \right) \transpose{\mat{R}} \mat{R} }{F}^2
    \ \ \text{s.t.}
    \
    \begin{gathered}
    \mat{U} = \mat{X}, \\
    \mat{V} = \mat{Y}, \\
    \mat{U}, \mat{V} \geq 0 .
    \end{gathered}
    \label[problem]{eq:nmfEquiv_extended}
\end{equation}
Let $\displaystyle \widetilde{\mat{A}} = \transpose{\mat{L}} \mat{A} \transpose{\mat{R}}$,  $\displaystyle \widetilde{\mat{X}} = \transpose{\mat{L}} \mat{X}$, and $\displaystyle \widetilde{\mat{Y}} = \mat{Y} \transpose{\mat{R}}$. We propose to further approximate \cref{eq:nmfEquiv} with
\begin{equation}
    \min_{\substack{ \mat{U} \in \Real^{m \times r}, \mat{V} \in \Real^{r \times n} \\ \widetilde{\mat{X}} \in \Real^{(r + r_\textsc{ov}) \times r}\\ \widetilde{\mat{Y}} \in \Real^{r \times (r + r_\textsc{ov})}} } \norm{ \widetilde{\mat{A}} - \widetilde{\mat{X}} \widetilde{\mat{Y}} }{F}^2
    \quad \text{s.t.}
    \quad
    \begin{gathered}
    \mat{U} = \mat{L} \widetilde{\mat{X}}, \\
    \mat{V} = \widetilde{\mat{Y}} \mat{R}, \\
    \mat{U}, \mat{V} \geq 0 .
    \end{gathered}
    \label[problem]{eq:nmfEquiv_extended_compressed}
\end{equation}
The alternating direction method of multipliers (ADMM) can be used for solving \cref{eq:nmfEquiv}~\cite{Xu2012}. Thus, a similar technique can solve \cref{eq:nmfEquiv_extended_compressed}. The details of the proposed algorithm are presented in \cref{sec:nmf_admm_compressed}.

The level of compression in \cref{eq:nmfEquiv_extended_compressed} is significantly higher than in \cref{eq:alternate_nmf_compressed,eq:alternateLS_nmf_compressed}. The latter formulations only employ (alternated) single-sided compression, whereas the former uses a (simultaneous) double-sided compression. One may be inclined to think that such an aggressive compression might lead to greater errors; however, in practice, this is not the case. Studying this behavior from a theoretical standpoint might shed light into this interesting characteristic.

\subsection{Limits of NMF for big data}
When matrix $\mat{A}$ gets sufficiently large, solving \cref{eq:nmf} becomes challenging. The compression techniques here presented significantly alleviate the problem for in-core computations and are easily extensible for out-of-core computations. For example, each iteration of the multiplicative updates algorithm can be implemented on a MapReduce framework~\cite{Liu2010}; its structured compressed version can be easily adapted in this framework, greatly reducing communication costs thanks to the use of smaller matrices. Implementing our compressed ADMM algorithm on a MapReduce framework is just as straightforward.

However, when dealing with large volumes of data, the practical problem actually resides in the iterative nature of the algorithms. As an example, consider that the execution time of a single iteration of the multiplicative algorithm on a MapReduce framework is measured in hours for \emph{sparse} matrices with millions of columns and rows~\cite{Liu2010,Liao2014}. As expected, the issue is hugely exacerbated for dense matrices.

\section{Randomly compressed separable NMF}
\label{sec:snmf}

Following \Cref{separableMatrix}, let us now assume that matrix $\mat{A}$ is (near) $r$-separable.
Most state-of-the-art techniques for computing SNMF, see \cref{eq:snmf}, are based on the following two-step approach:
\begin{enumerate}
	\item Extract $r$ columns of $\mat{A}$, indexed by $\set{K}$. The literature usually refers to them as extreme columns.
	\label[step]{snmf_extremeColumns}
	\item Solve
	\begin{equation}
	\mat{Y} = \argmin_{\mat{H} \in \Real^{r \times n}} \norm{\mat{A} - (\mat{A})_{:\set{K}} \, \mat{H}}{F}^2
	\quad
	\text{s.t.}
	\quad
	\mat{H} \geq 0 .
	\label[problem]{eq:snmf_right_factor}
	\end{equation}
	\label[step]{snmf_right_factor}
\end{enumerate}
The literature on SNMF has mainly focused on \cref{snmf_extremeColumns} of the above algorithm. There are several types of algorithms for performing this task~\cite{Araujo2001,Bittorf2012,Kumar2013,Gillis2014a}. As for \cref{snmf_right_factor}, \Cref{eq:snmf_right_factor} involves solving $n$ nonnegative least squares problems separately, i.e.,
\begin{equation}
	(\mat{Y})_{:i} = \argmin_{\vect{h} \in \Real^{m}} \norm{\mat{A}_{:i} - (\mat{A})_{:\set{K}} \, \vect{h} }{F}^2
	\quad
	\text{s.t.}
	\quad
	\vect{h} \geq 0 .
	\label[problem]{eq:snmf_right_factor_col}
\end{equation}
This makes \cref{snmf_right_factor} trivially parallelizable.

Let $\mat{Q} \in \Real^{m \times m}$ be an orthonormal basis for $\mat{A} \in \Real^{m \times n}$
\begin{align}
	\transpose{\mat{Q}} \mat{A} &=
	\begin{bmatrix}
		\mat{R} \\ 0
	\end{bmatrix} ,
	&
	\transpose{\mat{Q}} (\mat{A})_{:\set{K}} &=
	\begin{bmatrix}
		(\mat{R})_{:\set{K}} \\ 0
	\end{bmatrix} ,
\end{align}
where $\mat{R} \in \Real^{n \times n}$.
A key observation here is that the zero rows do not provide information for finding extreme columns of $\mat{A}$~\cite{Benson2014}.
We also trivially have that, for any orthonormal matrix $\mat{Q} \in \Real^{m \times m}$,
\begin{equation}
\norm{\transpose{\mat{Q}} \left(\mat{A} - \mat{X} \mat{Y} \right)}{F} \propto \norm{\mat{A} - \mat{X} \mat{Y}}{F} .
\end{equation}
Then,
\begin{subequations}
\begin{align}
	\mat{Y} &= \argmin_{\mat{H} \geq 0} \norm{\mat{A} - (\mat{A})_{:\set{K}} \mat{H}}{F}^2 \\
	&= \argmin_{\mat{H} \geq 0} \norm{\transpose{\mat{Q}} \left( \mat{A} - (\mat{A})_{:\set{K}} \mat{H} \right)}{F}^2 \\
	&= \argmin_{\mat{H} \geq 0} \norm{\mat{R} - (\mat{R})_{:\set{K}} \mat{H}}{F}^2 .
	\label[problem]{eq:snmf_right_factor_QR}
\end{align}
\end{subequations}
Notice that \cref{eq:snmf_right_factor_QR} has succeeded to reduce the problem size to $n \times n$ from the original $m \times n$ \cref{eq:snmf_right_factor}.
We then obtain the following three-step algorithm~\cite{Benson2014}:
\begin{enumerate}
	\item Compute $\mat{Q}$ using, e.g., a QR decomposition of $\mat{A}$.
	\label[step]{snmf_qr}
	\item Find $r$ extreme columns of $\mat{R} = \transpose{\mat{Q}} \mat{A}$, indexed by $\set{K}$.
	\item Solve
	\begin{equation}
		\mat{Y} = \argmin_{\mat{H} \in \Real^{r \times n}} \norm{\mat{R} - (\mat{R})_{:\set{K}} \, \mat{H}}{F}^2
		\quad
		\text{s.t.}
		\quad
		\mat{H} \geq 0 .
	\end{equation}
	\label[step]{snmf_H}
\end{enumerate}

As the main assumption in NMF and SNMF is that $\mat{A}$ has (or can be approximated by) a low-rank structure, by all practical means we expect that $r \ll \min(m, n)$; otherwise, it would not even make sense to try these type of decompositions.
We claim that little to no information is lost by replacing the full orthonormal basis with a rank-preserving basis that projects the data into a lower-dimensional space.

As the reader might be already suspecting, we propose to obtain such a basis via the use structured random projections.
This involves a small but conceptually important change in the above SNMF algorithm. Replace \cref{snmf_qr} by
\begin{enumerate}
	\item Compute a structured random compression matrix $\mat{Q}$ for $\mat{A}$.
	\label[step]{snmf_structured_random_projection}
\end{enumerate}
The proposed algorithm is depicted in \cref{fig:snmf_compressed}.
Let us now detail the main differences with the QR-based algorithm.

\begin{figure}
	\removelatexerror
	\begin{algorithm2e}[H]
	\begin{small}
		
		\SetKwInOut{Input}{input}\SetKwInOut{Output}{output}
		
		\Input{a matrix $\mat{A} \in \Real^{m \times n}$, a target rank $r  \in \Natural^+$, an oversampling parameter $r_\textsc{ov} \in \Natural^+$ ($r + r_\textsc{ov} \leq \min \{ m, n\}$), an exponent $w \in \Natural$.}
		\Output{index $\set{K}$ of the extreme columns of $\mat{A}$, nonnegative matrix $\mat{Y} \in \Real^{r \times n}$.}
		
		Compute $\mat{Q} \in \Real^{m \times (r + r_\textsc{ov})}$  using the algorithm in \cref{algo:compression}\;
		Find $r$ extreme columns of $\mat{R} = \transpose{\mat{Q}} \mat{A}$, indexed by $\set{K}$\;
		Solve $\forall i \in [0, n)$
		\begin{equation}
			(\mat{Y})_{:i} \gets \argmin_{\vect{h} \in \Real^{r + r_\textsc{ov}}} \norm{\mat{R}_{:i} - (\mat{R})_{:\set{K}} \, \vect{h} }{F}^2
			\quad
			\text{s.t.}
			\quad
			\vect{h} \geq 0 ;
		\end{equation}
		
	\end{small}
	\end{algorithm2e}
	
	\caption{SNMF using structured random compression.}
	\label{fig:snmf_compressed}
\end{figure}

First, let us note that $\mat{R} = \transpose{\mat{Q}} \mat{A}$ is now an $(r + r_\textsc{ov}) \times n$ matrix instead of an $n \times n$ matrix. This allows to process matrices that have many more columns, as storing $\mat{R}$ has become orders of magnitude easier/cheaper. Also note that each nonnegative least squares problem in \cref{eq:snmf_right_factor_col} has also become orders of magnitude smaller and thus faster to solve. Again, the huge decrease in communication costs for parallel implementations is even more important in our context than the gain in computational speed.

Second, the computation of the basis itself has become much faster. This is easy to understand when we compare the algorithm in \cref{algo:compression}, which only computes the QR decomposition of an $m \times (r + r_\textsc{ov})$ matrix, with the QR decomposition of the full $m \times n$ matrix. Of course, as the ratio $r/n$ decreases, the proposed algorithm becomes faster.

Let us assume for a moment that $n$ is sufficiently small such that we can use the TSQR algorithm directly on the input matrix $\mat{A}$, but not trivially small. As detailed in \cref{sec:tsqr}, the QR decomposition in \cref{eq:tsqr_centralizedQR} in the appendix is the only centralized step in TSQR; the amount of information that needs to be transmitted to carry this step is, again, orders of magnitude smaller when using structured random compressions.

\begin{note}
	The separable NMF model is similar to the model presented in~\cite{Esser2012} (and in~\cite{Elhamifar2012} without non-negativity constraints)
	\begin{equation}
		\min_{\mat{T} \in \Real^{n \times n}}
		\norm{\mat{A} - \mat{A} \mat{T}}{F}^2 + \lambda \norm{\mat{T}}{\text{row-}0}
		\quad
		\text{s.t.}
		\quad
		\mat{T} \geq 0 ,
		\label{eq:sparse_representatives}
	\end{equation}
	where $\norm{\mat{T}}{\text{row-}0}$ denotes the number of non-zero rows. The similarity resides in that selecting a subset of rows from $\mat{T}$ is equivalent to selecting a subset of columns from $\mat{A}$. This problem can be relaxed into a convex problem by replacing the $\ell_0$ pseudo-norm by a (possibly weighted) $\ell_1$ norm.  However, whichever optimization technique we choose for solving this problem, it will involve an iterative algorithm, where an $n \times n$ system is solved in every iteration. In~\cite{Esser2012}, the problem is shrank by clustering the columns of $\mat{A}$ and feeding a new matrix, only containing the cluster centers, into \cref{eq:sparse_representatives}. For this reasons, in our view, the SNMF model, as presented here, presents a cleaner and faster alternative to \cref{eq:sparse_representatives}.
\end{note}	

\section{Experimental results}
\label{sec:results}

We will now present numerous examples supporting the use of structured random projections for NMF and SNMF, both in terms of speed and accuracy.

Before jumping to these problems, in \cref{fig:compression} we show a simulation of the nice properties of the out-of-core compression algorithm presented in \cref{sec:randomProjections}. We performed our tests on $m \times n$ matrices with Gaussian entries, where different values for $m$ were tested, ranging from $10^3$ to $10^6$, and $n=500$ in all cases. This ensures that all matrices fit in main memory, allowing (1) to compress them with the in-core algorithm, and (2) to disregard disk access times, making the comparisons fair. The out-of-core algorithm for structured random compression is slower for matrices with approximately less than $2 \cdot 10^4$ rows; for these small matrices, the overhead of processing the matrix per blocks becomes evident (notice though that both computing times are well under 1 second). For larger matrices, the overhead's impact becomes less significant, and both algorithms exhibit the same overall performance (linear in $m$). In summary, we observe the expected behavior: the greater flexibility of the proposed out-of-core compression algorithm for processing large matrices does not cause performance to degrade with respect to the in-core one.

\begin{figure}[t]
	\centering
	\includegraphics[width=\columnwidth]{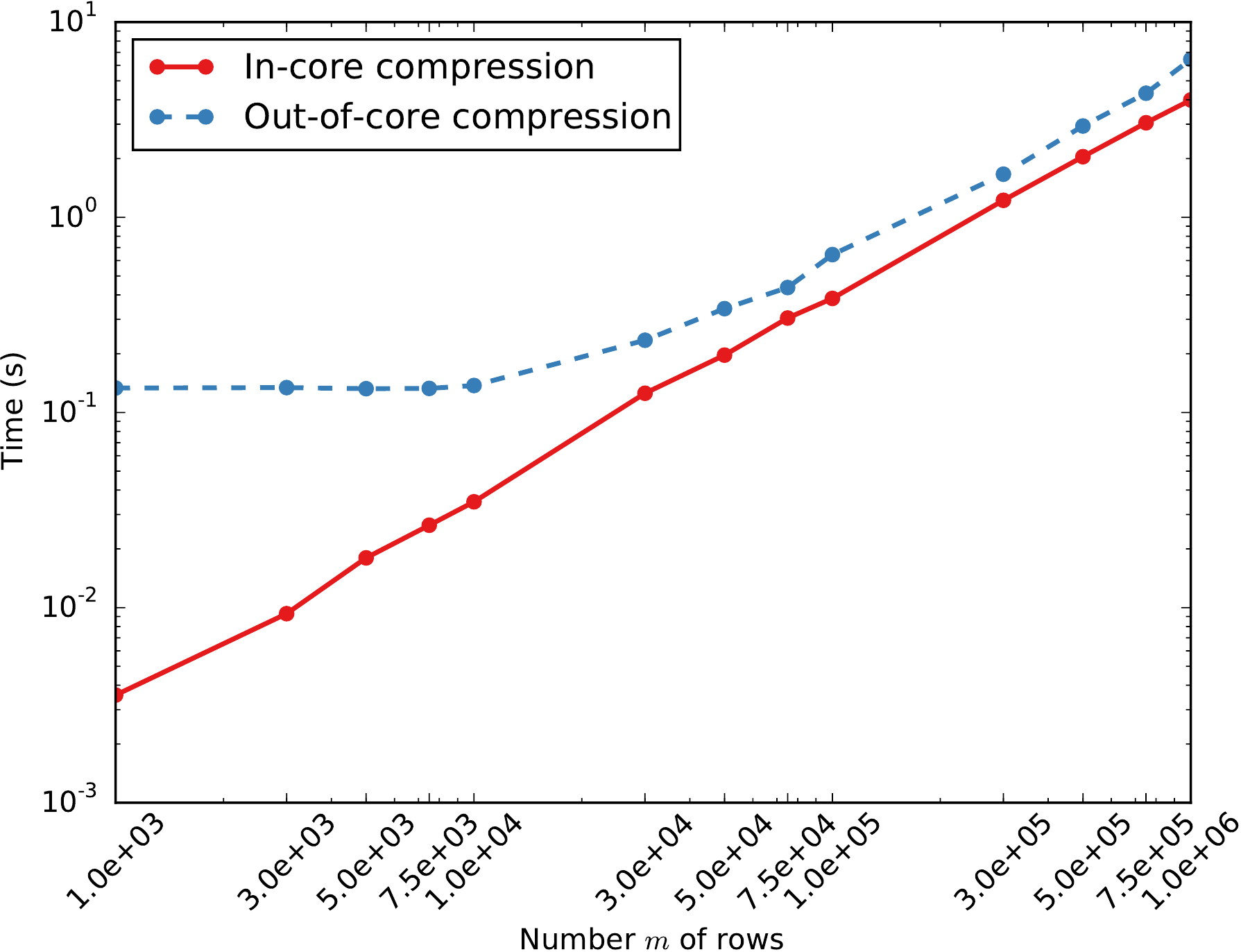}
	
	\caption{\textbf{Performance of out-of-core compression}. We tested different values for $m$, while fixing $n=500$. The out-of-core algorithm for structured random compression presented in \cref{sec:randomProjections} is slower for matrices with approximately less than $2 \cdot 10^4$ rows. For larger matrices, it exhibits the same complexity as the in-core one (linear in $m$). Out-of-core computations do not come at the price of a significantly slower compression algorithm, though permit to work with significantly larger matrices.}
	\label{fig:compression}
\end{figure}

\subsection{NMF}
\label{sec:results_nmf}

For our experiments regarding the techniques presented in \cref{sec:nmf}, as representative examples of \cref{eq:alternate_nmf} and \cref{eq:alternateLS_nmf}, we respectively use the active set method~\cite{Kim2008} and the multiplicative updates in~\cite[Eq. (8)]{Ding2010}. For these two algorithms, we compared with a vanilla version and a variant using Gaussian projection, as presented in~\cite{Wang2010} (also see \cref{sec:randomProjections}). We also implemented the ADMM algorithm in~\cite{Xu2012} and the proposed ADMM algorithm with structured random compression. All the methods were implemented in Matlab. In all tests, we set $w = 4$ and $r_\textsc{ov} = 10$ in the compression algorithm in \cref{algo:compression}; we further adjust the value of $r_\textsc{ov}$ so that $r + r_\textsc{ov} = \min(\max(20, r + r_\textsc{ov}), n)$.

We begin by showing in \cref{fig:syntheticNMF} simulations results of the different NMF variants on synthetic examples. The first interesting observation from these examples is that, although the computation of the compression matrix is more costly for structured than for Gaussian compression, this might not end up reflected in the overall computing time; this is because, in general, the NMF variant with Gaussian compression requires more iterations to converge. The second observation is that the NMF variants that use structured compression yield very similar relative reconstruction errors than their uncompressed counterparts (higher in one example, lower in three). For multiplicative updates and ADMM, the gain in speed of using structured compression is huge; for active set, the speedup is not as dramatic. Lastly, Gaussian compression seems to come at the cost of higher reconstruction errors.

\begin{figure*}

	\begin{subfigure}{\textwidth}
		\tabulinesep=2pt
	
		\centering
		\begin{footnotesize}
		\begin{tabu} to \textwidth { @{\hspace{4pt}} *{3}{X[c,m] @{\hspace{4pt}}} }
			
			\includegraphics[width=\linewidth]{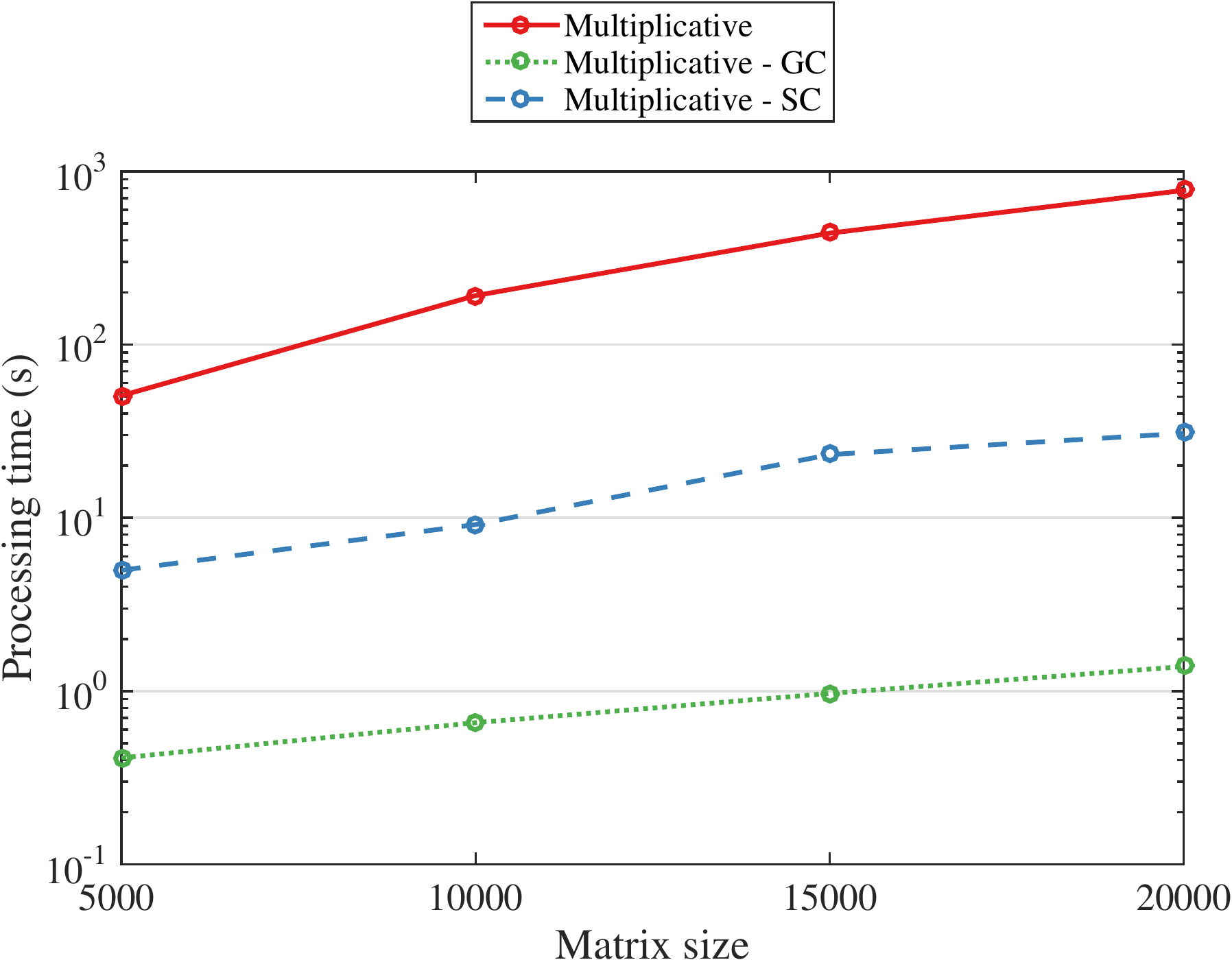} &
			\includegraphics[width=\linewidth]{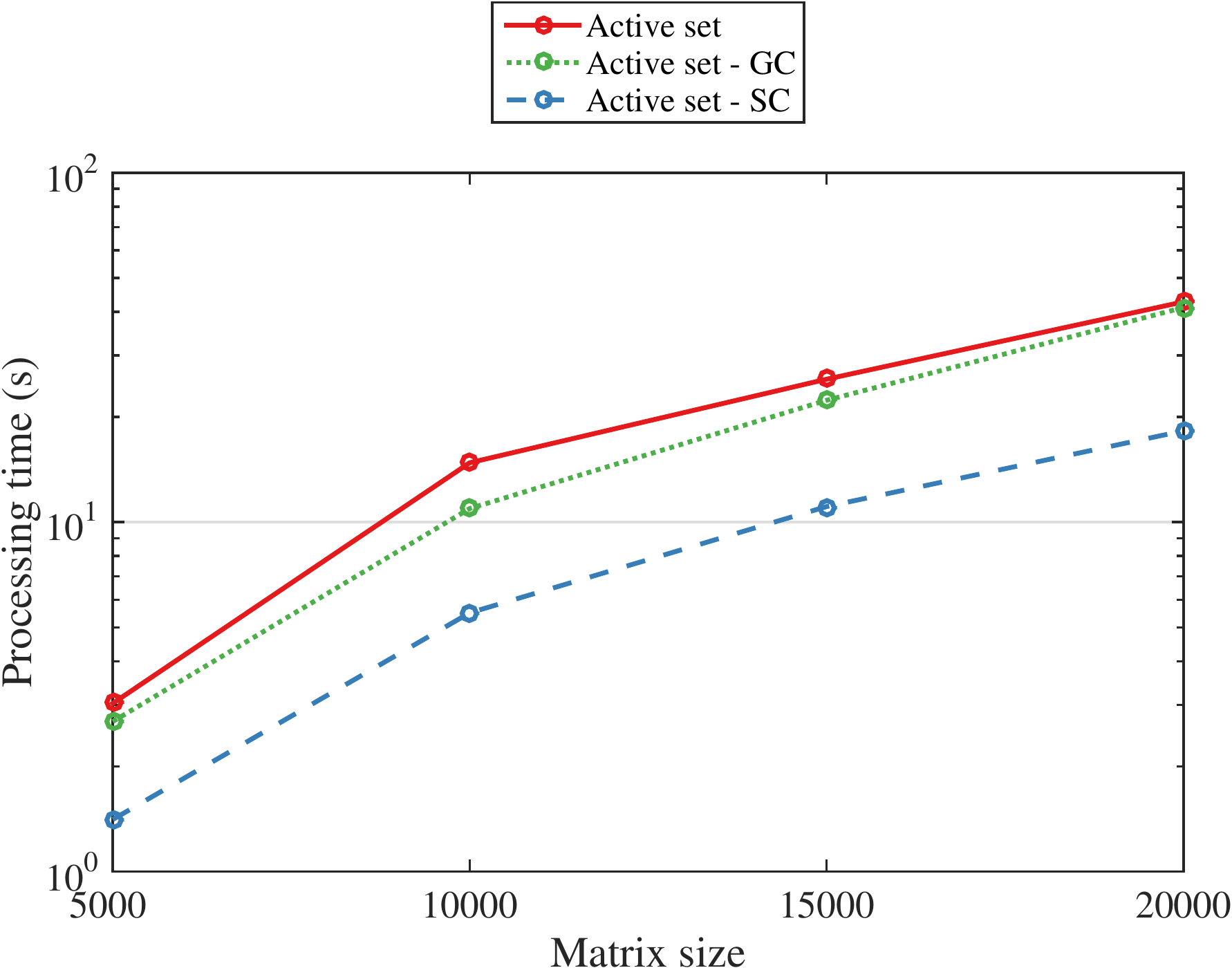} &
			\includegraphics[width=\linewidth]{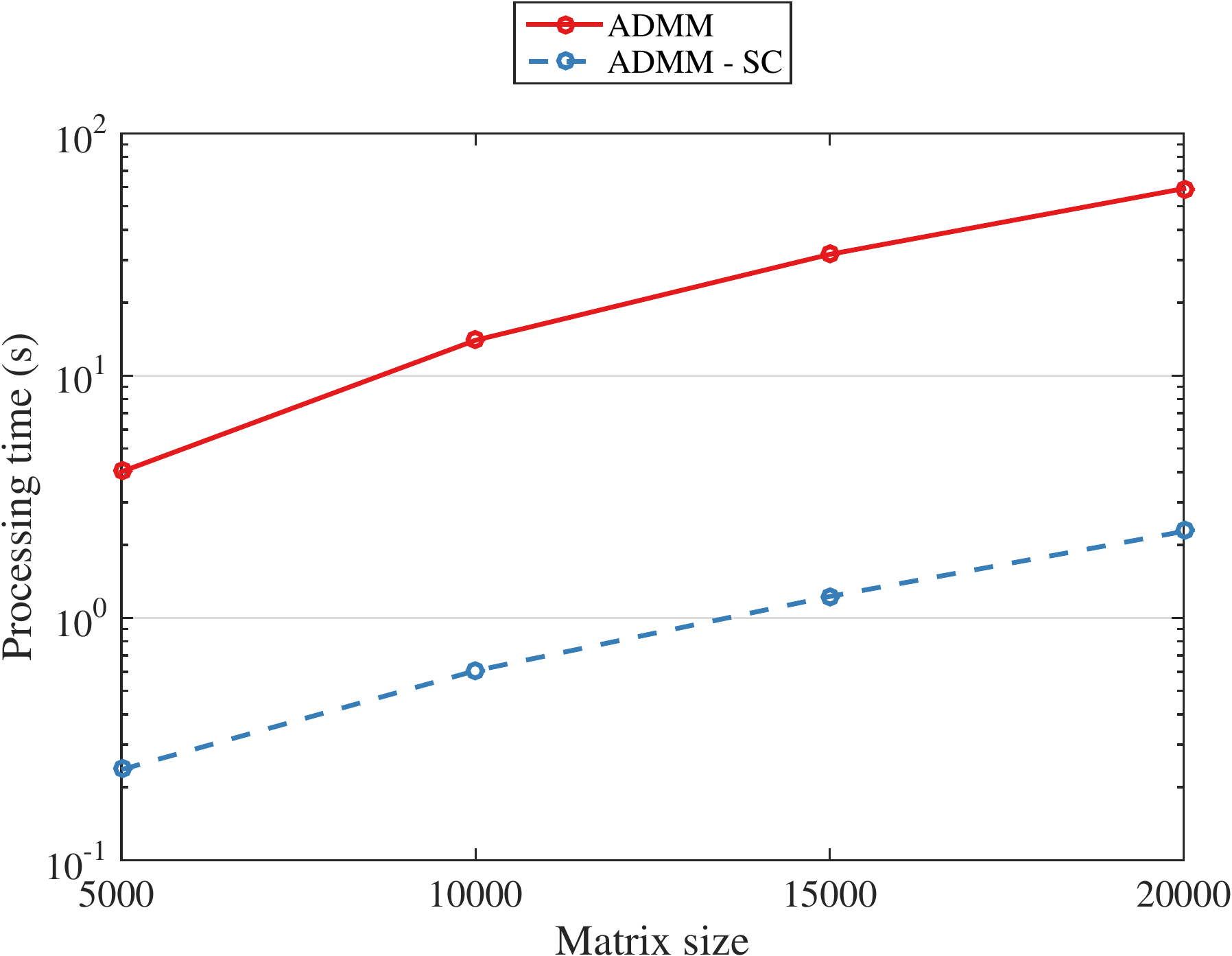} \\
	
			\includegraphics[width=\linewidth]{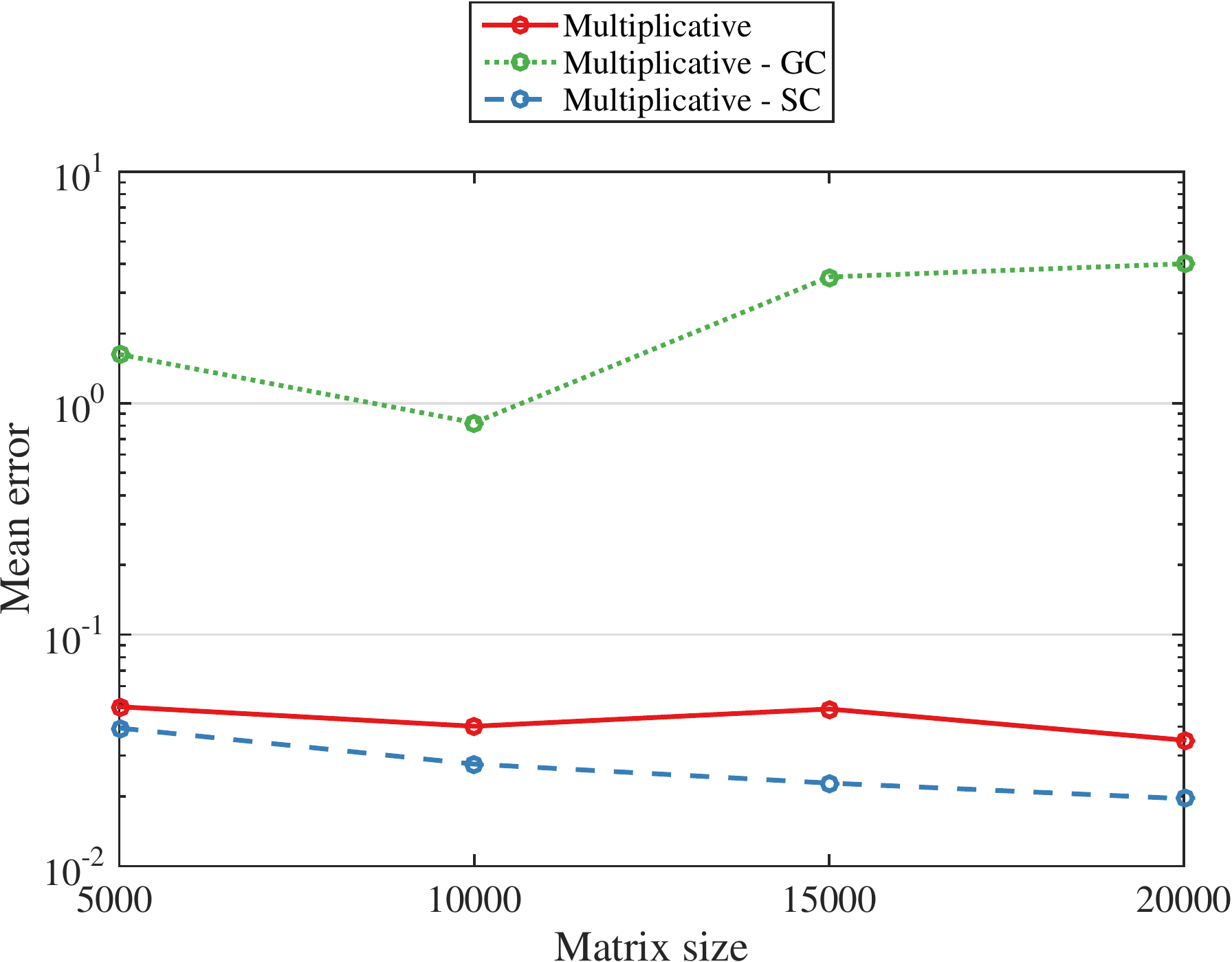} &
			\includegraphics[width=\linewidth]{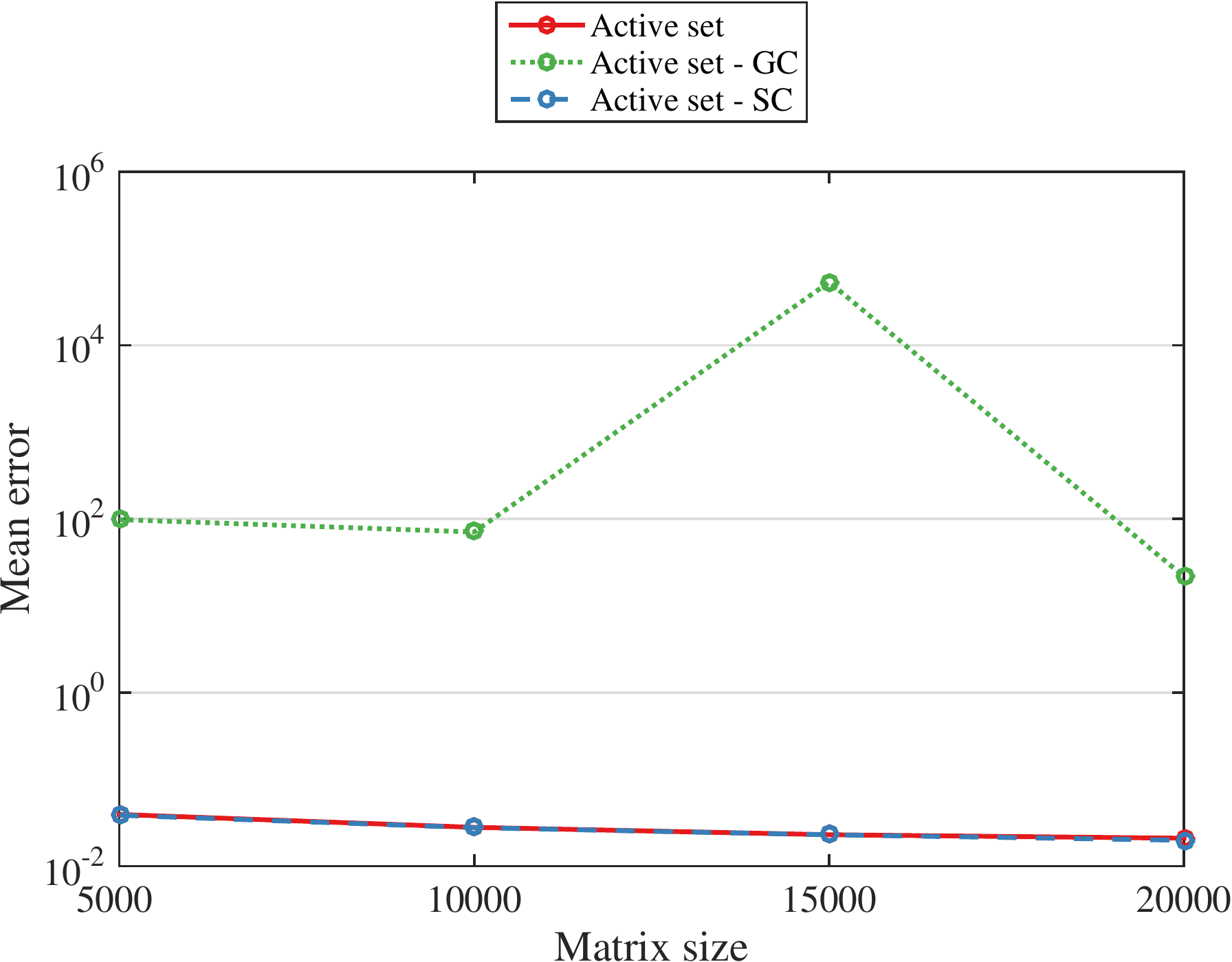} &
			\includegraphics[width=\linewidth]{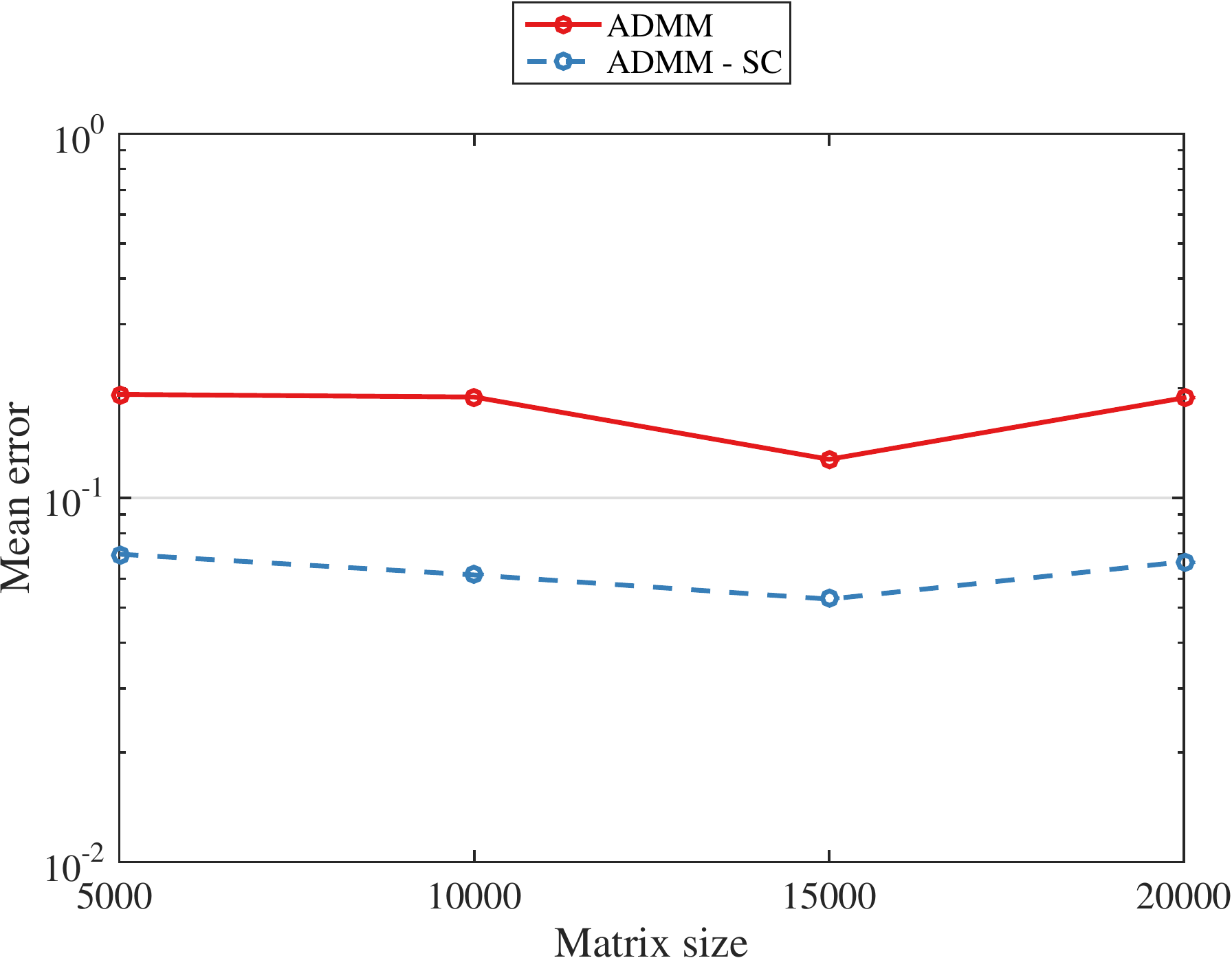} \\
		\end{tabu}
		\end{footnotesize}
		
		\caption{\textbf{Synthetic dense matrices.} The matrix size indicates the number of rows $m$; the number of columns $n$ is fixed to $n=0.75 m$ in all cases. Since the matrices are dense, $\delta = 1$.}
	\end{subfigure}

	\begin{subfigure}{\textwidth}
		\tabulinesep=2pt
	
		\centering
		\begin{footnotesize}
			\begin{tabu} to \textwidth { @{\hspace{4pt}} *{3}{X[c,m] @{\hspace{4pt}}} }
				
				\includegraphics[width=\linewidth]{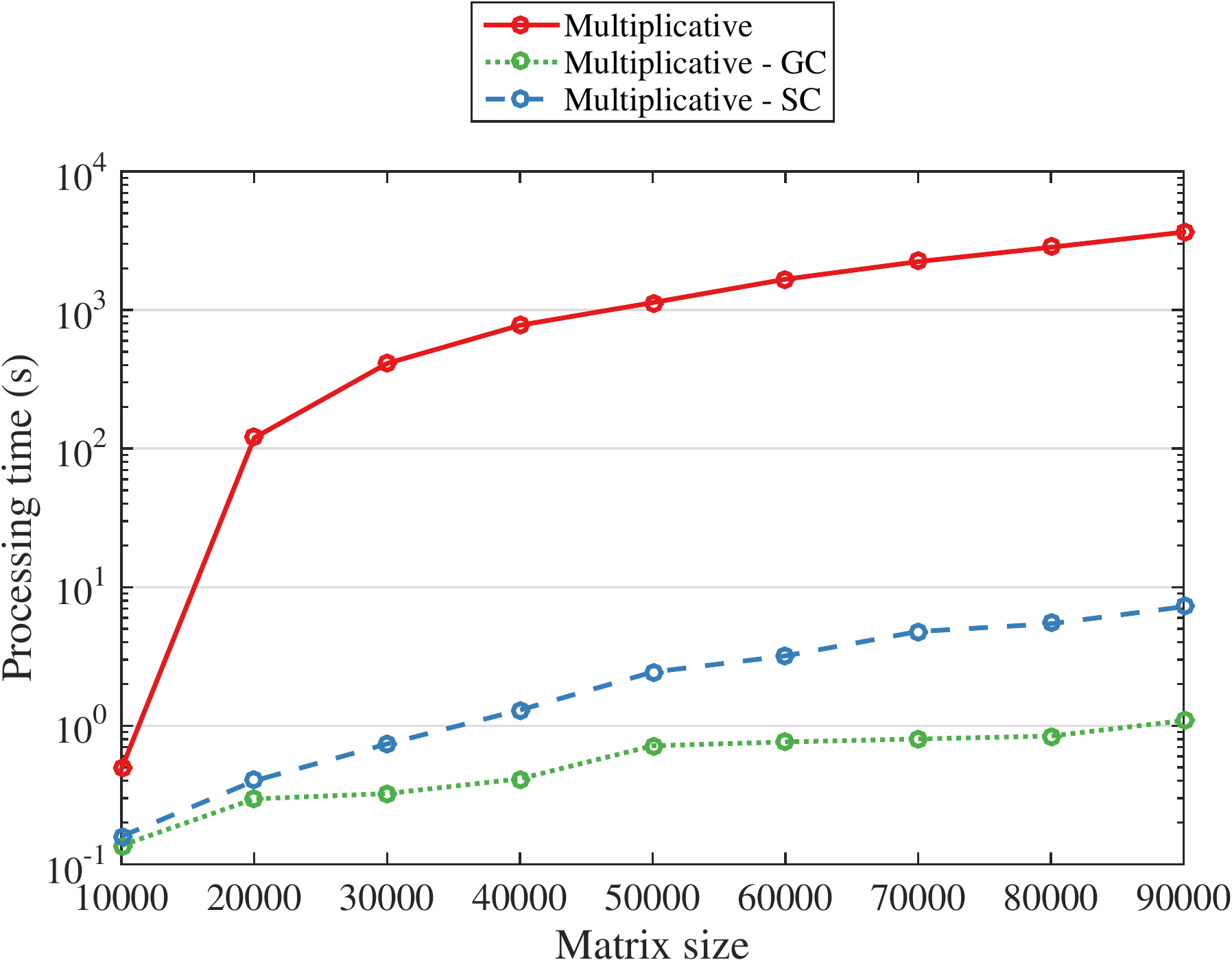} &
				\includegraphics[width=\linewidth]{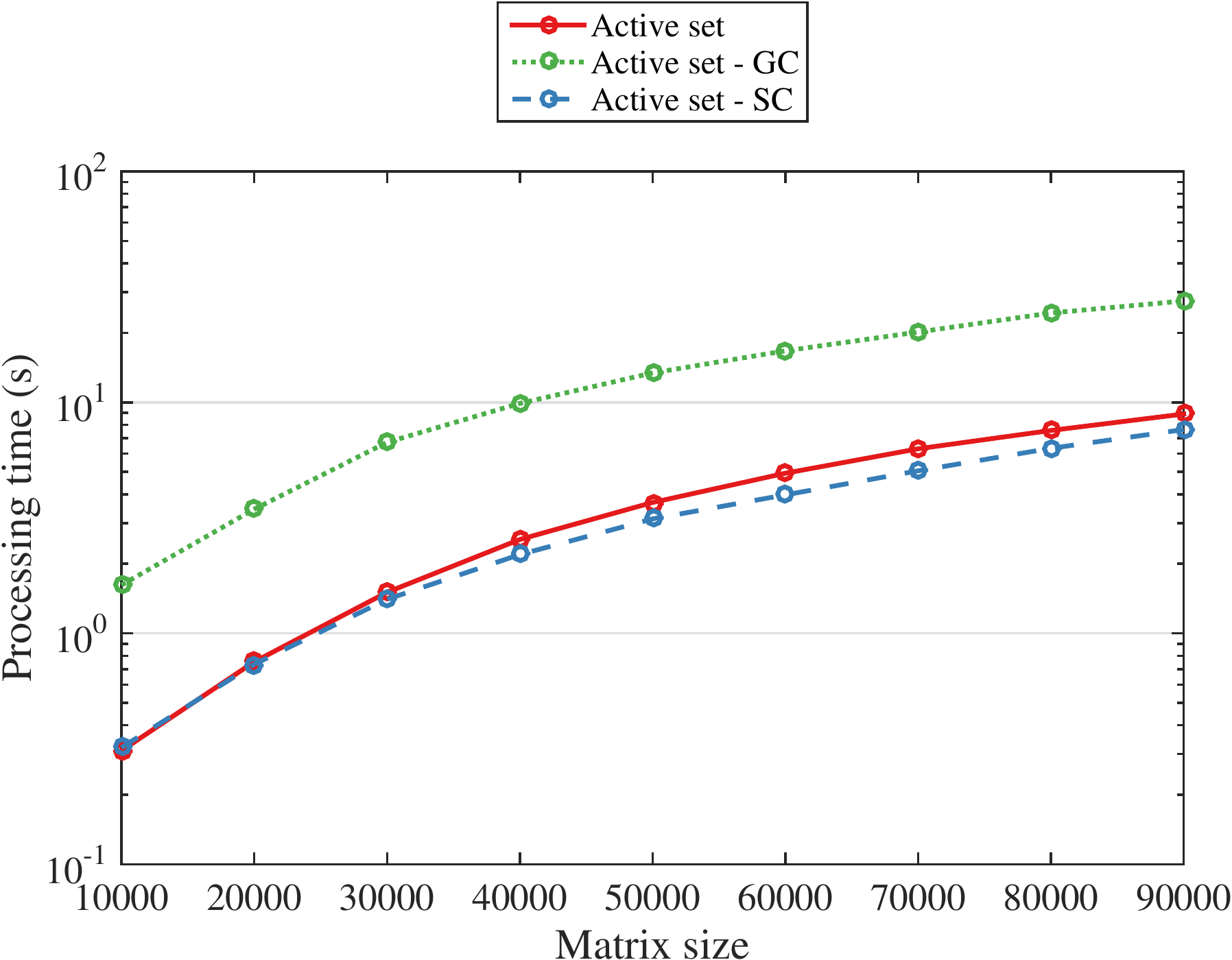} &
				\includegraphics[width=\linewidth]{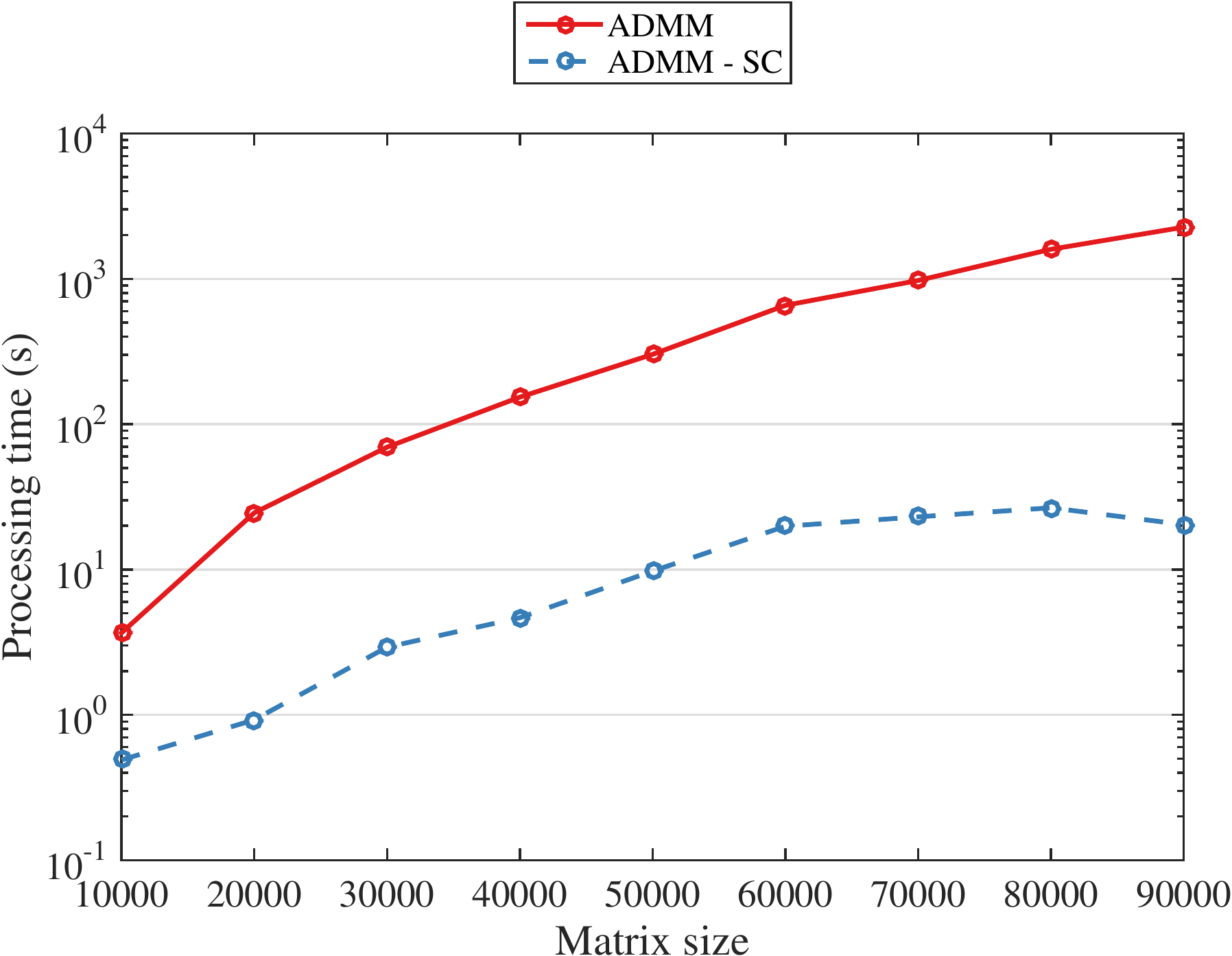} \\
				
				\includegraphics[width=\linewidth]{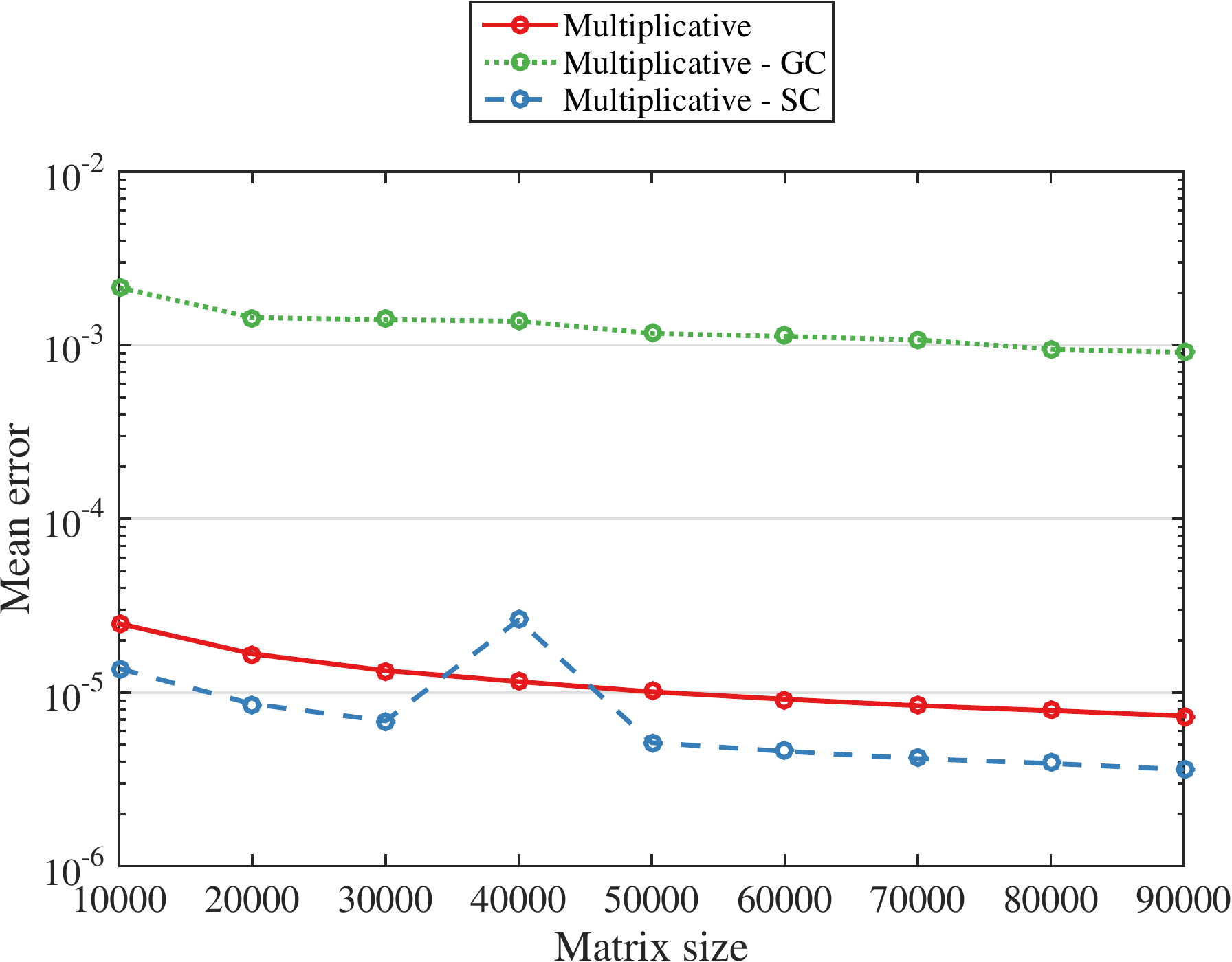} &
				\includegraphics[width=\linewidth]{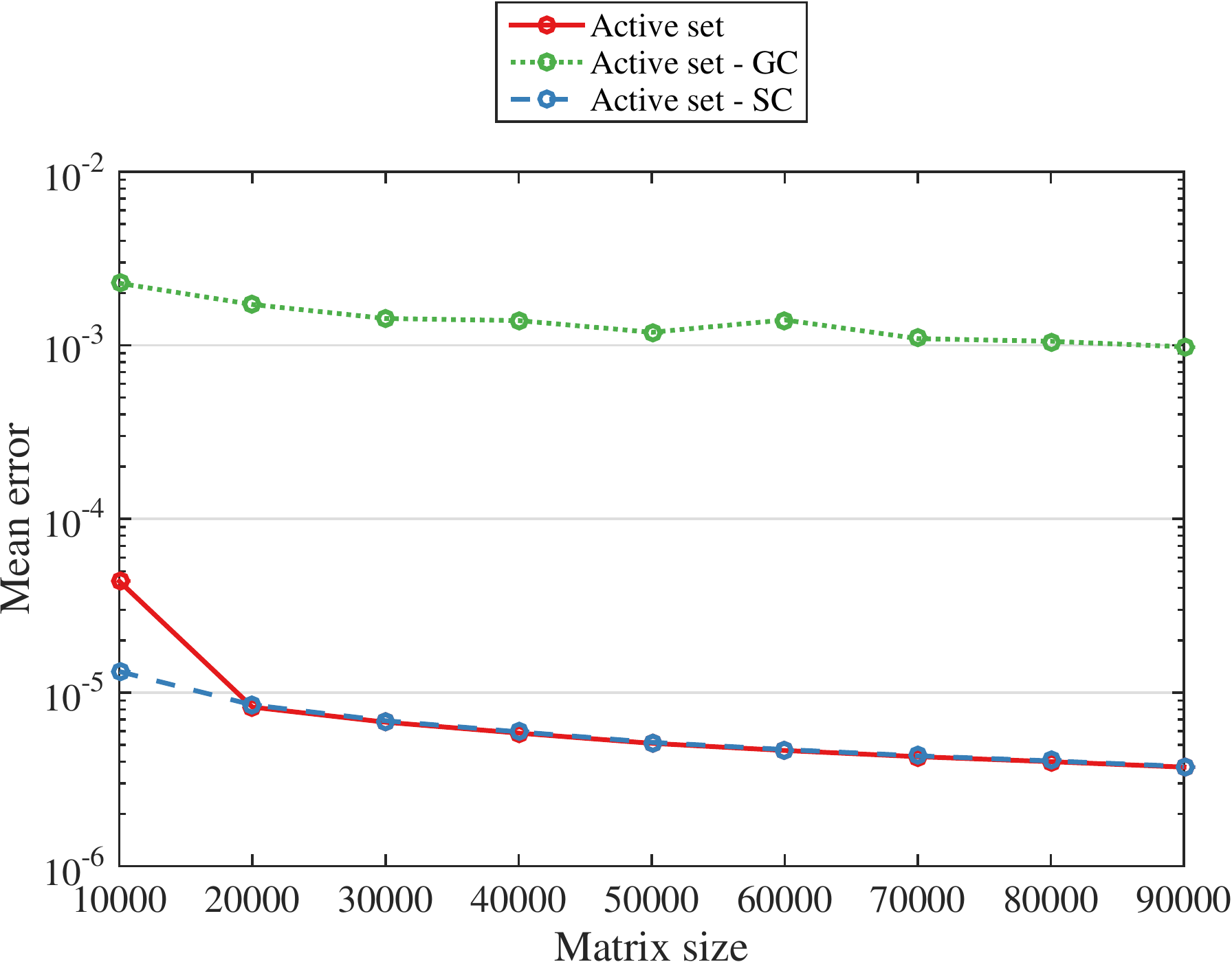} &
				\includegraphics[width=\linewidth]{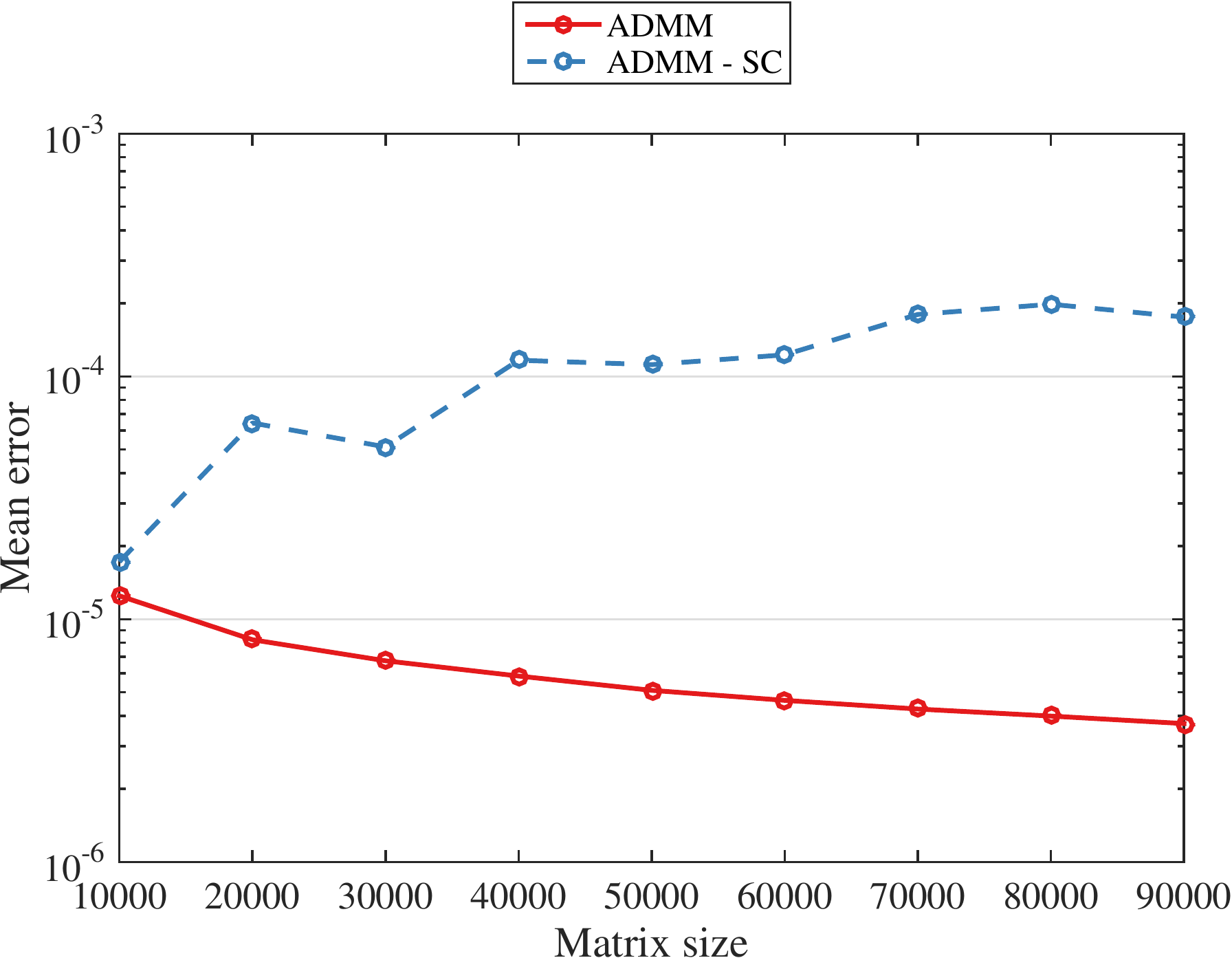} \\
			\end{tabu}
		\end{footnotesize}
		
		\caption{\textbf{Synthetic sparse matrices.} The matrix size indicates the number of rows $m$; the number of columns $n$ and the sparsity level $\delta$ are fixed to $n=0.75 m$ and $\delta = 10^{-2}$ in all cases.}
	\end{subfigure}	

	\caption{\textbf{Performance comparison on synthetic matrices}. We first generate two matrices $\mat{X}_\textsc{gt} \in \Real^{m \times r}$, $\mat{Y}_\textsc{gt} \in \Real^{r \times n}$, where their entries are uniformly distributed in $[0, 1]$ with probability $\delta$, or zero with probability $1-\delta$. We then build $\mat{A} = \mat{X}_\textsc{gt} \mat{Y}_\textsc{gt} + \mat{N}$, where the entries of $\mat{N}$ are normally distributed with probability $\delta^2$, or zero with probability $1-\delta^2$. GC and SC stand for Gaussian and structured compression, respectively. The reconstruction error is reported as the mean over 10 different runs. While both GC and SC are generally faster than the original uncompressed methods (top row), the accuracy levels of the latter are only matched (and sometimes even outmatched) by SC (bottom row).}
	\label{fig:syntheticNMF}
		
\end{figure*}

We also run different NMF algorithms on a hyperspectral positron emission tomography (PET) image, see~\cref{fig:pet}. This example allows to visually compare the errors produced by the different methods. The NMF methods with Gaussian compression create ``clusters'' of errors (particular areas in which the errors seem to concentrate). In \cref{tab:pet} we show several error statistics and the computing time for the different methods. The statistics also reflect the same behavior as our visual previous inspection. Structured compression has a positive effect on the computing time (it decreases), and no significant effect on the error statistics.

\begin{figure*}
	
	\centering
	\begin{footnotesize}
	
	\hfill
	\begin{minipage}{.18\textwidth}
	\centering
	Original image\\[2pt]
	\begin{tikzpicture}
	\node[anchor=south west,inner sep=0] (image) at (0,0) {
		\shortstack{
			\includegraphics[width=\textwidth]{brain/brainNMF}\\
			\includegraphics[width=\textwidth, trim=174px 348px 174px 0px, clip]{brain/brainNMF}
		}
	};
	\begin{scope}[x={(image.south east)},y={(image.north west)}]
	\draw [DarkRed, very thick] (0.17, 0.83) rectangle ++(0.33, 0.165);
	\draw [DarkRed, very thick] (0.17, 0.83) -- (0.00, 0.49);
	\draw [DarkRed, very thick] (0.50, 0.83) -- (1.00, 0.49);
	\draw [DarkRed, very thick] (0.00, 0.00) rectangle (1.00, 0.49);
	\end{scope}
	\end{tikzpicture}%
	\end{minipage}%
	\hfill
	\begin{minipage}{.7\textwidth}
	\centering
	\begin{tabu} to \textwidth { @{\hspace{4pt}} *{5}{X[c,m] @{\hspace{4pt}}} }
	
		Multiplicative &
		Multiplicative - GC &
		Multiplicative - SC &
		ADMM &
		ADMM - SC \\
	
		\begin{tikzpicture}
		\node[anchor=south west,inner sep=0] (image) at (0,0) {
			\shortstack{
				\includegraphics[width=\linewidth]{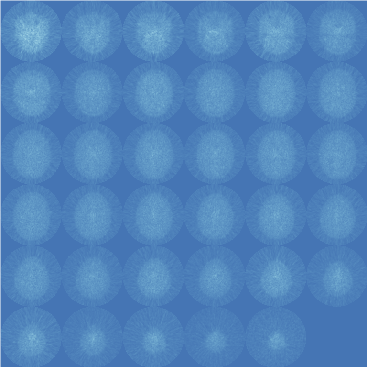} \\
				\includegraphics[width=\linewidth, trim=118px 236px 118px 0px, clip]{brain/nmf_multiplicative}
			}
		};
		\begin{scope}[x={(image.south east)},y={(image.north west)}]
		\draw [DarkRed, very thick] (0.17, 0.83) rectangle ++(0.33, 0.165);
		\draw [DarkRed, very thick] (0.17, 0.83) -- (0.00, 0.49);
		\draw [DarkRed, very thick] (0.50, 0.83) -- (1.00, 0.49);
		\draw [DarkRed, very thick] (0.00, 0.00) rectangle (1.00, 0.49);
		\end{scope}
		\end{tikzpicture}%
		&	
		\begin{tikzpicture}
		\node[anchor=south west,inner sep=0] (image) at (0,0) {
			\shortstack{
				\includegraphics[width=\linewidth]{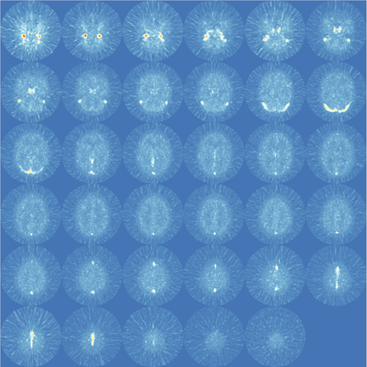} \\
				\includegraphics[width=\linewidth, trim=118px 236px 118px 0px, clip]{brain/nmf_multiplicative_compressed}
			}
		};
		\begin{scope}[x={(image.south east)},y={(image.north west)}]
		\draw [DarkRed, very thick] (0.17, 0.83) rectangle ++(0.33, 0.165);
		\draw [DarkRed, very thick] (0.17, 0.83) -- (0.00, 0.49);
		\draw [DarkRed, very thick] (0.50, 0.83) -- (1.00, 0.49);
		\draw [DarkRed, very thick] (0.00, 0.00) rectangle (1.00, 0.49);
		\end{scope}
		\end{tikzpicture}%
		&
		\begin{tikzpicture}
		\node[anchor=south west,inner sep=0] (image) at (0,0) {
			\shortstack{
				\includegraphics[width=\linewidth]{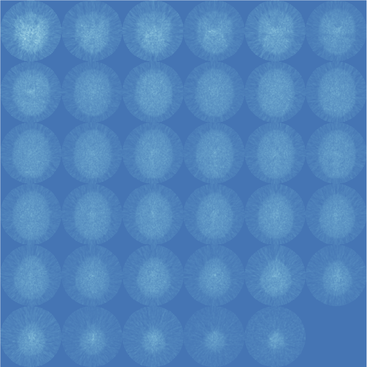} \\
				\includegraphics[width=\linewidth, trim=118px 236px 118px 0px, clip]{brain/nmf_multiplicative_structured_compressed}
			}
		};
		\begin{scope}[x={(image.south east)},y={(image.north west)}]
		\draw [DarkRed, very thick] (0.17, 0.83) rectangle ++(0.33, 0.165);
		\draw [DarkRed, very thick] (0.17, 0.83) -- (0.00, 0.49);
		\draw [DarkRed, very thick] (0.50, 0.83) -- (1.00, 0.49);
		\draw [DarkRed, very thick] (0.00, 0.00) rectangle (1.00, 0.49);
		\end{scope}
		\end{tikzpicture}%
		&
		\begin{tikzpicture}
			\node[anchor=south west,inner sep=0] (image) at (0,0) {
				\shortstack{
					\includegraphics[width=\linewidth]{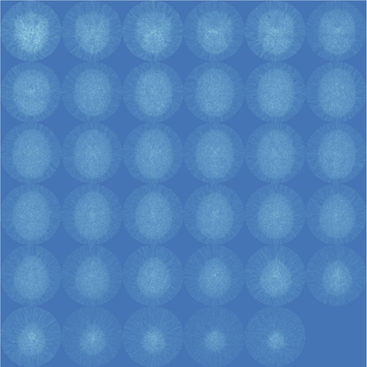} \\
					\includegraphics[width=\linewidth, trim=118px 236px 118px 0px, clip]{brain/nmf_admm}
				}
			};
			\begin{scope}[x={(image.south east)},y={(image.north west)}]
				\draw [DarkRed, very thick] (0.17, 0.83) rectangle ++(0.33, 0.165);
				\draw [DarkRed, very thick] (0.17, 0.83) -- (0.00, 0.49);
				\draw [DarkRed, very thick] (0.50, 0.83) -- (1.00, 0.49);
				\draw [DarkRed, very thick] (0.00, 0.00) rectangle (1.00, 0.49);
			\end{scope}
		\end{tikzpicture}%
		&
		\begin{tikzpicture}
			\node[anchor=south west,inner sep=0] (image) at (0,0) {
				\shortstack{
					\includegraphics[width=\linewidth]{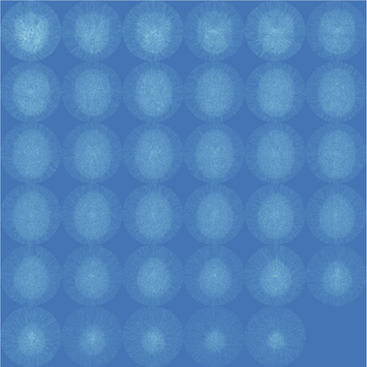} \\
					\includegraphics[width=\linewidth, trim=118px 236px 118px 0px, clip]{brain/nmf_admm_structured_compressed}
				}
			};
			\begin{scope}[x={(image.south east)},y={(image.north west)}]
				\draw [DarkRed, very thick] (0.17, 0.83) rectangle ++(0.33, 0.165);
				\draw [DarkRed, very thick] (0.17, 0.83) -- (0.00, 0.49);
				\draw [DarkRed, very thick] (0.50, 0.83) -- (1.00, 0.49);
				\draw [DarkRed, very thick] (0.00, 0.00) rectangle (1.00, 0.49);
			\end{scope}
		\end{tikzpicture}%
		\\
		\\[-6pt]
		
		Active set &
		Active set - GC &
		Active set - SC &
		ALS &
		ALS (proj. grad.) \\
		
		\begin{tikzpicture}
			\node[anchor=south west,inner sep=0] (image) at (0,0) {
				\shortstack{
					\includegraphics[width=\linewidth]{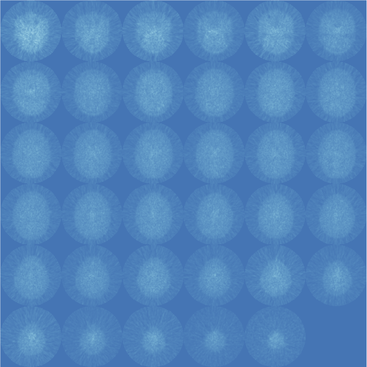} \\
					\includegraphics[width=\linewidth, trim=118px 236px 118px 0px, clip]{brain/nmf_activeSet}
				}
			};
			\begin{scope}[x={(image.south east)},y={(image.north west)}]
				\draw [DarkRed, very thick] (0.17, 0.83) rectangle ++(0.33, 0.165);
				\draw [DarkRed, very thick] (0.17, 0.83) -- (0.00, 0.49);
				\draw [DarkRed, very thick] (0.50, 0.83) -- (1.00, 0.49);
				\draw [DarkRed, very thick] (0.00, 0.00) rectangle (1.00, 0.49);
			\end{scope}
		\end{tikzpicture}%
		&
		\begin{tikzpicture}
			\node[anchor=south west,inner sep=0] (image) at (0,0) {
				\shortstack{
					\includegraphics[width=\linewidth]{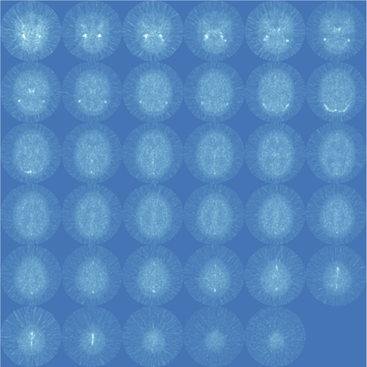} \\
					\includegraphics[width=\linewidth, trim=118px 236px 118px 0px, clip]{brain/nmf_activeSet_compressed}
				}
			};
			\begin{scope}[x={(image.south east)},y={(image.north west)}]
				\draw [DarkRed, very thick] (0.17, 0.83) rectangle ++(0.33, 0.165);
				\draw [DarkRed, very thick] (0.17, 0.83) -- (0.00, 0.49);
				\draw [DarkRed, very thick] (0.50, 0.83) -- (1.00, 0.49);
				\draw [DarkRed, very thick] (0.00, 0.00) rectangle (1.00, 0.49);
			\end{scope}
		\end{tikzpicture}%
		&
		\begin{tikzpicture}
			\node[anchor=south west,inner sep=0] (image) at (0,0) {
				\shortstack{
					\includegraphics[width=\linewidth]{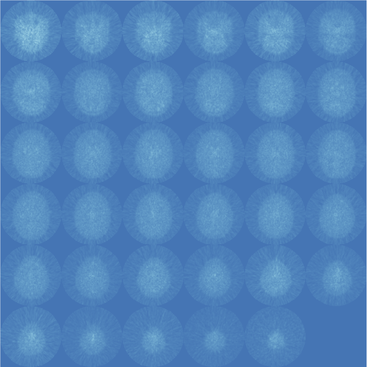} \\
					\includegraphics[width=\linewidth, trim=118px 236px 118px 0px, clip]{brain/nmf_activeSet_structured_compressed}
				}
			};
			\begin{scope}[x={(image.south east)},y={(image.north west)}]
				\draw [DarkRed, very thick] (0.17, 0.83) rectangle ++(0.33, 0.165);
				\draw [DarkRed, very thick] (0.17, 0.83) -- (0.00, 0.49);
				\draw [DarkRed, very thick] (0.50, 0.83) -- (1.00, 0.49);
				\draw [DarkRed, very thick] (0.00, 0.00) rectangle (1.00, 0.49);
			\end{scope}
		\end{tikzpicture}%
		&
		\begin{tikzpicture}
			\node[anchor=south west,inner sep=0] (image) at (0,0) {
				\shortstack{
					\includegraphics[width=\linewidth]{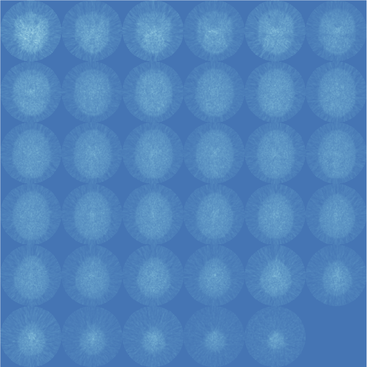} \\
					\includegraphics[width=\linewidth, trim=118px 236px 118px 0px, clip]{brain/nmf_als.png}
				}
			};
			\begin{scope}[x={(image.south east)},y={(image.north west)}]
				\draw [DarkRed, very thick] (0.17, 0.83) rectangle ++(0.33, 0.165);
				\draw [DarkRed, very thick] (0.17, 0.83) -- (0.00, 0.49);
				\draw [DarkRed, very thick] (0.50, 0.83) -- (1.00, 0.49);
				\draw [DarkRed, very thick] (0.00, 0.00) rectangle (1.00, 0.49);
			\end{scope}
		\end{tikzpicture}%
		&
		\begin{tikzpicture}
			\node[anchor=south west,inner sep=0] (image) at (0,0) {
				\shortstack{
					\includegraphics[width=\linewidth]{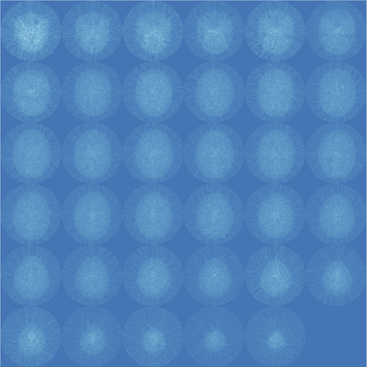} \\
					\includegraphics[width=\linewidth, trim=118px 236px 118px 0px, clip]{brain/nmf_cjlin}
				}
			};
			\begin{scope}[x={(image.south east)},y={(image.north west)}]
				\draw [DarkRed, very thick] (0.17, 0.83) rectangle ++(0.33, 0.165);
				\draw [DarkRed, very thick] (0.17, 0.83) -- (0.00, 0.49);
				\draw [DarkRed, very thick] (0.50, 0.83) -- (1.00, 0.49);
				\draw [DarkRed, very thick] (0.00, 0.00) rectangle (1.00, 0.49);
			\end{scope}
		\end{tikzpicture}%
		\\

	\end{tabu}
	\end{minipage}%
	\begin{minipage}{.05\textwidth}
		\centering
		\begin{tikzpicture}
			\node[anchor=south west,inner sep=0] (image) at (0,0) {
				\frame{\includegraphics[width=.15\textwidth]{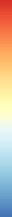}}
			};
			\begin{scope}[x={(image.south east)},y={(image.north west)}]
				\draw [->, very thick] (-1, 0.2) -- (-1, 0.8);
				\node[rotate=90] at (-2.5,.5) {Higher error};
			\end{scope}
		\end{tikzpicture}%
	\end{minipage}%
	\hfill
	\end{footnotesize}
	
	\caption{\textbf{Reconstruction errors when compressing a positron emission tomography (PET) image} (\url{http://cogsys.imm.dtu.dk/toolbox/nmf/}). The image is composed of $40$ temporal frames, where each frame is a $128 \times 128 \times 35$ 3D image (35 is the number of $128 \times 128$ slices). The matrix size is then $573440 \times 40$ and we perform NMF with $r = 5$. As we can observe in each slice (a few of them are highlighted with zoom-ins), the use of Gaussian compression (GC) increases the reconstruction errors, while structured random compression (SC) has no identifiable effect. See \cref{tab:pet} for additional numerical results.}
	\label{fig:pet}
\end{figure*}

\begin{table}[t]
	\caption{\textbf{Performance when compressing a PET image}. See \cref{fig:pet} for a detailed explanation of the setup. As we can observe, the use of Gaussian compression (GC) is detrimental to the reconstruction error (higher values indicate a lower error), while the proposed structured random compression (SC) has no significant impact on it. As a counterpart, the use of SC significantly decreases the computing time with respect to the original method. The best values for each column are highlighted in green.}
	\label{tab:pet}

	\centering
	\begin{threeparttable}[b]
		\begin{tabular}{l *{3}{d{1.3}} d{3.3}}
			\toprule
			& \multicolumn{3}{c}{Error ($-\log_{10}$)} & \multicolumn{1}{c}{\multirow{2}{*}{Time (s)}} \\
			\cmidrule(lr){2-4}
			& \multicolumn{1}{c}{Mean} & \multicolumn{1}{c}{STD} & \multicolumn{1}{c}{Median} \\
			
			\midrule

			Multiplicative & 3.628 & 3.435 & 3.980 & 177.813 \\
			Multiplicative - GC & 3.496 & 3.304 & 3.834 & 19.742 \\
			Multiplicative - SC & 3.626 & 3.433 & 3.979 & 71.437 \\
			ADMM  & \cellcolor{PaleGreen} 3.638 & \cellcolor{PaleGreen} 3.436 & 4.027 & 32.195 \\
			ADMM - SC & 3.636 & 3.433 & \cellcolor{PaleGreen} 4.028 & 23.168\\			
			Active set & 3.628 & \cellcolor{PaleGreen} 3.436 & 3.976 & 18.251 \\
			Active set - GC & 3.536 & 3.350 & 3.882 & 14.277 \\
			Active set - SC & \cellcolor{PaleGreen} 3.638 & \cellcolor{PaleGreen} 3.436 & 4.024 & \cellcolor{PaleGreen} 11.371 \\
			ALS\tnote{1}   & \cellcolor{PaleGreen} 3.638 & \cellcolor{PaleGreen} 3.436 & 4.023 & 18.162 \\
			ALS with proj. grad.\tnote{1} & 3.634 & \cellcolor{PaleGreen} 3.436 & 4.004 & 58.208 \\
			
			\bottomrule
		\end{tabular}%
		\begin{tablenotes}
			\item [1] Obtained from \url{http://cogsys.imm.dtu.dk/toolbox/nmf/}.
		\end{tablenotes}
	\end{threeparttable}
	
\end{table}

Climate datasets are very interesting to analyze using NMF. We believe that the evidence of a low rank model within climate data is of interest by itself. Nonnegativiy is a useful addition since, under this model, the effects of different factors cannot cancel each other. The technical details and results of an experiment using climate data are shown in \cref{fig:climate}. In this case, we only use the active set method for our comparisons. We found that two factors explain the data with enough accuracy. Both factors seem to correspond to two very different seasons across the globe, and they exhibit inversely correlated periodic patterns. While the left and right factors obtained using structured compression are very similar to their uncompressed counterparts, Gaussian compression introduces visible artifacts in the resulting factorization. Structured random compression also is the fastest of the three methods.

\begin{figure*}
	\centering
	
	\begin{subfigure}{\textwidth}
		\centering
		\begin{small}
		
		\centerline{
			\hfill
			\shortstack{
				Active set - GC\\[1pt]
				\includegraphics[width=.25\textwidth]{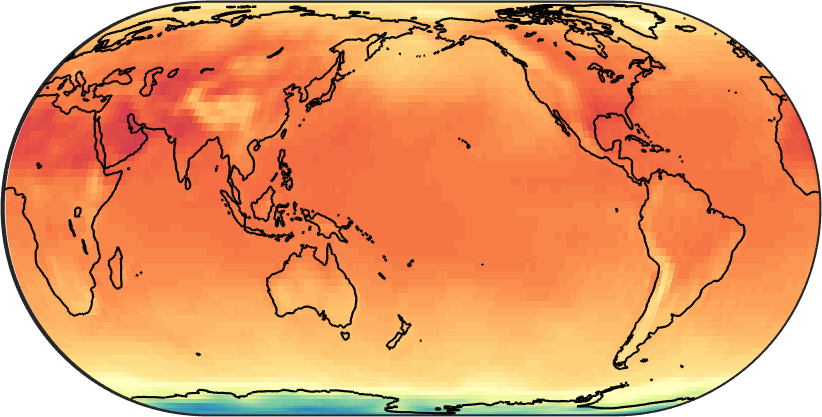} \\
				\includegraphics[width=.25\textwidth]{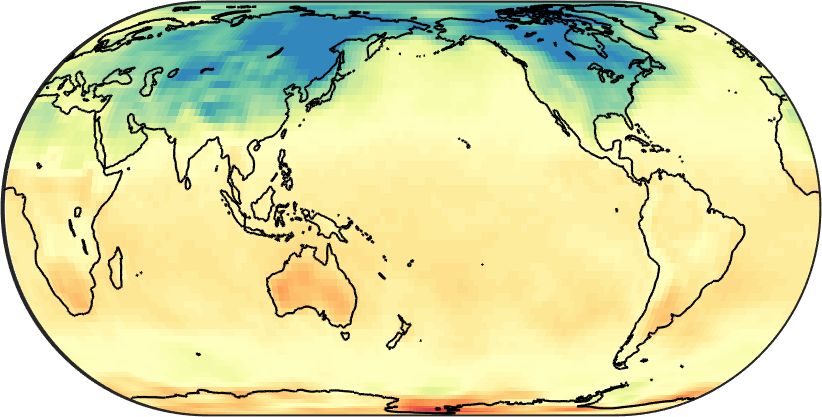}
			}
			\hfill
			\shortstack{
				Active set \\[1pt]
				\includegraphics[width=.25\textwidth]{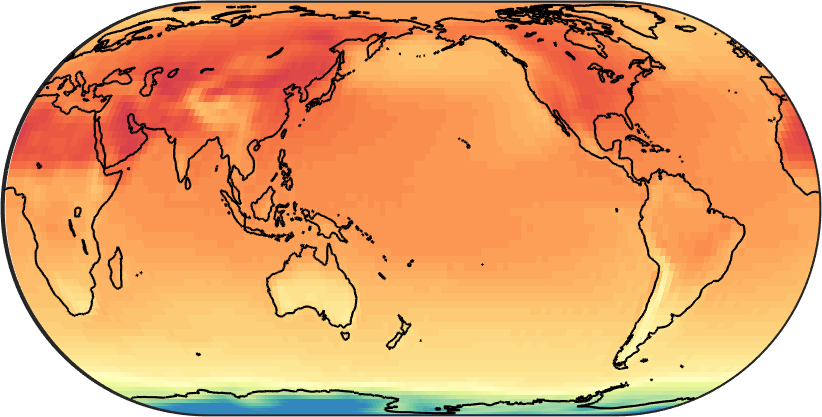} \\
				\includegraphics[width=.25\textwidth]{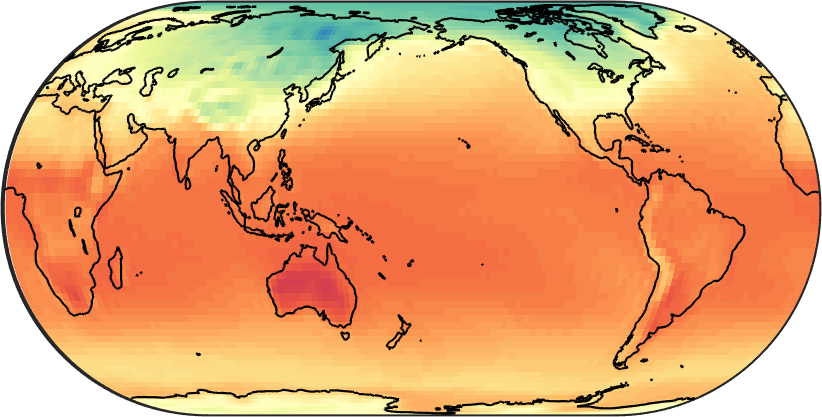}
			}
			\hfill
			\shortstack{
				Active set - SC\\[1pt]
				\includegraphics[width=.25\textwidth]{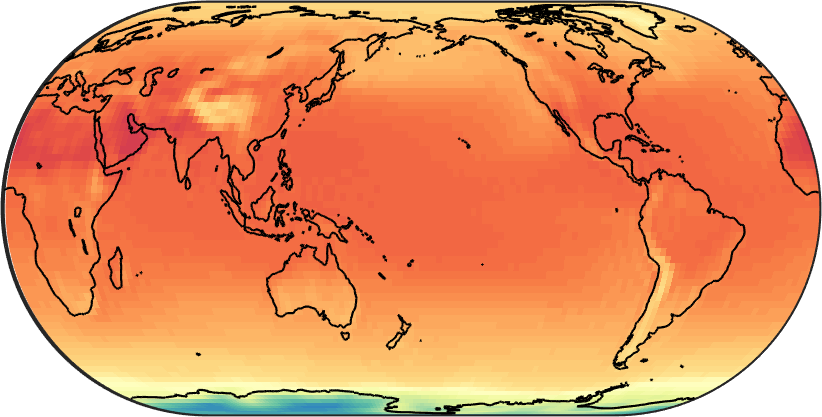} \\
				\includegraphics[width=.25\textwidth]{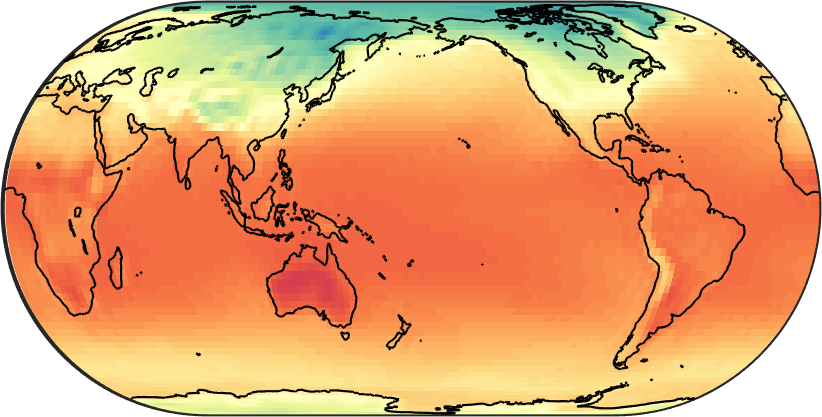}
			}
			\hfill
		}
		\end{small}
		
		\caption{\textbf{Left factors analysis}. Interestingly, the first factor (top row) corresponds to summer and winter in the north and south hemispheres, respectively, while the second factor (bottom row) corresponds to summer in the south and winter in the north. Visually, it is very clear that SC introduces much less artifacts than GC compared to the vanilla method (center).}
		\label{fig:climate_U}
	\end{subfigure}
	
	\begin{subfigure}{\textwidth}
		\centering
		\begin{small}

		\includegraphics[width=.85\textwidth]{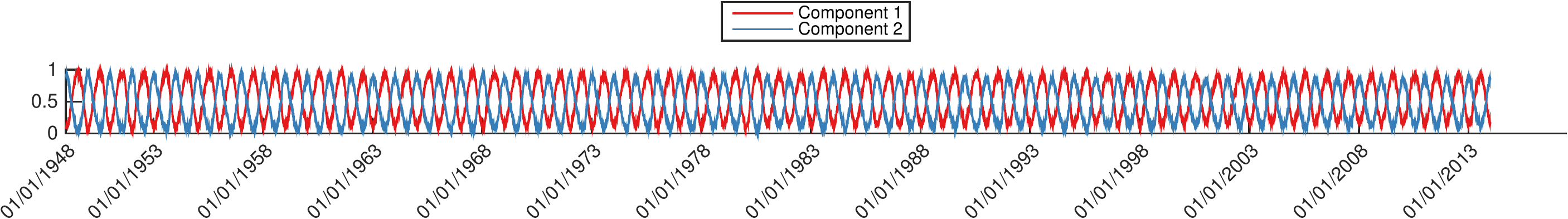}
		
		\vskip10pt
		
		\centerline{
			\hfill
			\shortstack{
				Active set - GC\\
				\includegraphics[width=.27\textwidth]{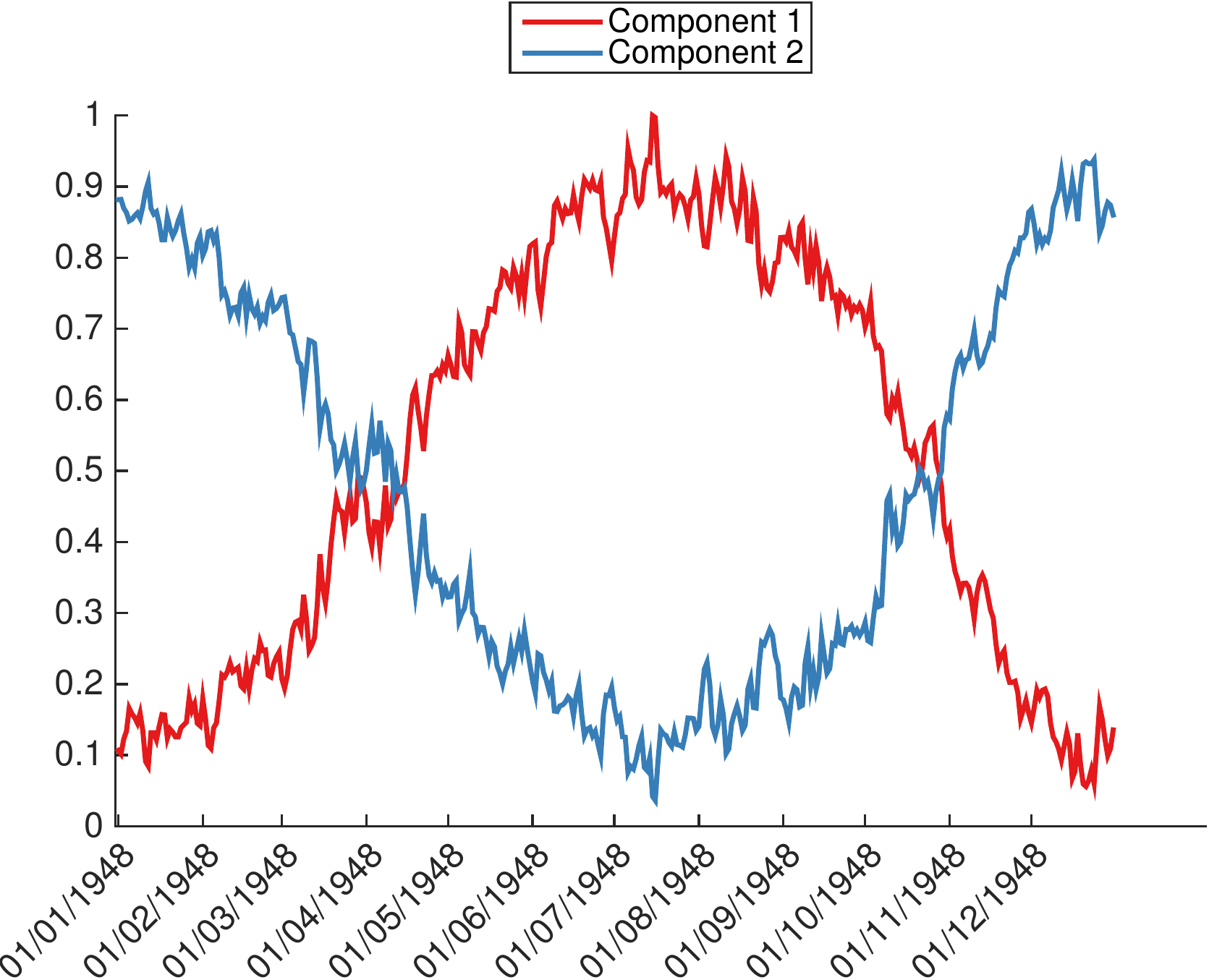}
			}
			\hfill
			\shortstack{
				Active set\\
				\includegraphics[width=.27\textwidth]{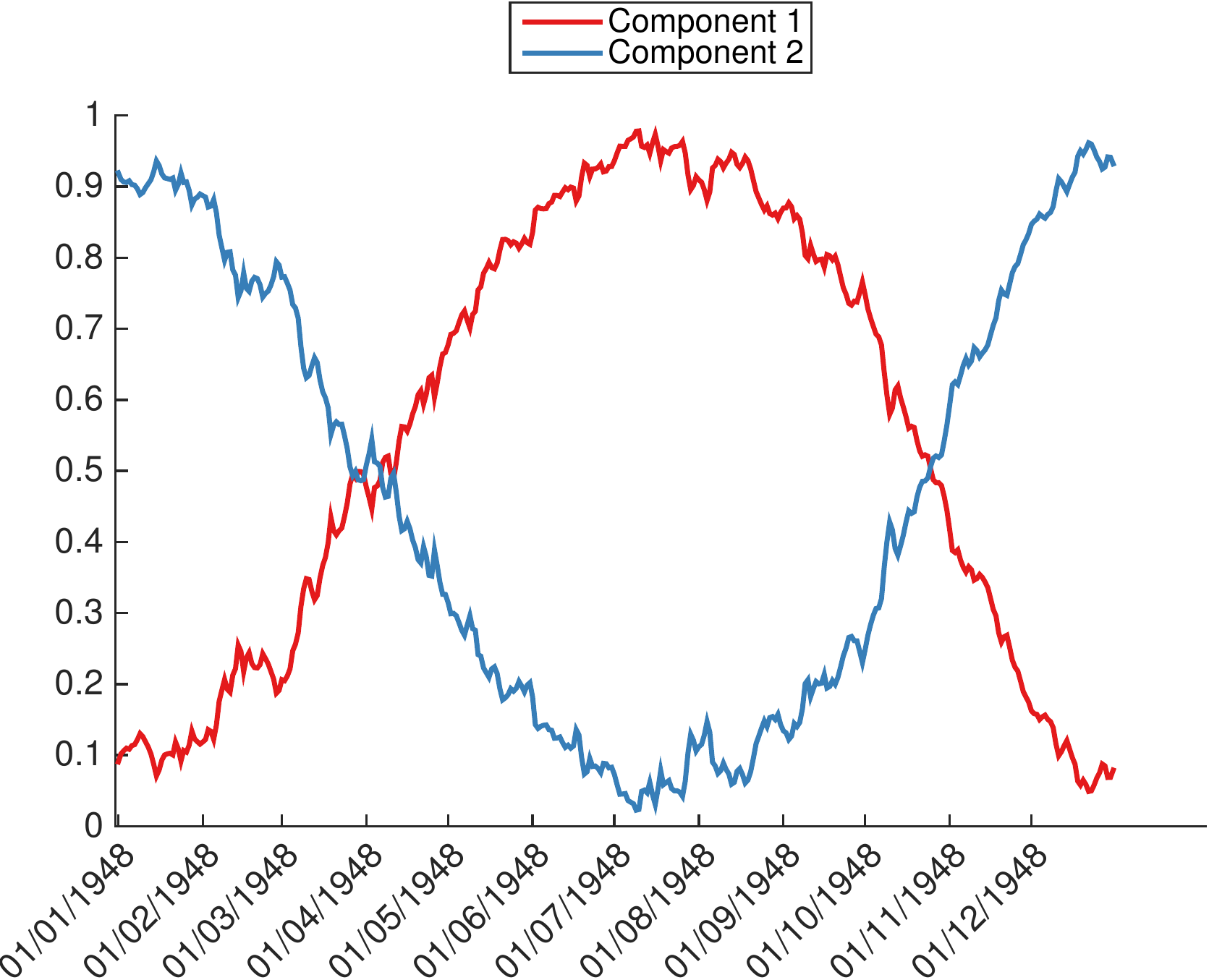}
			}
			\hfill
			\shortstack{
				Active set - SC\\
				\includegraphics[width=.27\textwidth]{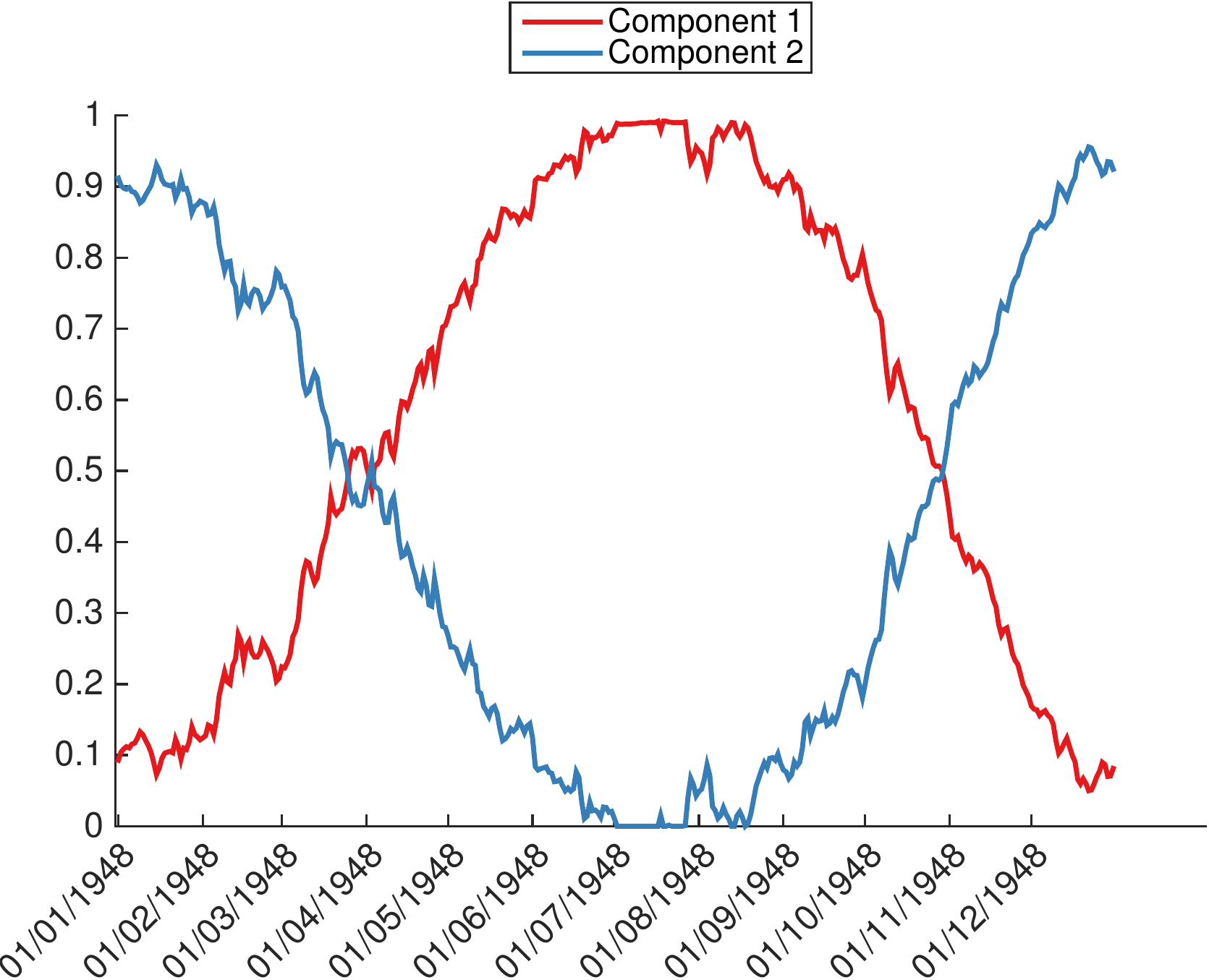}
			}
			\hfill
		}
		\end{small}
		\caption{\textbf{Right factors analysis}. In the top row, we can easily observe that the two factors are periodic and inversely correlated, corroborating the winter/summer duality between both components. In the bottom row, we observe that the GC factors are much more noisy (about an order of magnitude larger), compared to the original and SC methods.}
		\label{fig:climate_V}
	\end{subfigure}

	\caption{\textbf{NMF on gridded climate data} (\url{http://www.esrl.noaa.gov/psd/repository/}). The data contains daily mean surface temperatures arranged in a $144 \times 73$ grid since 1948 (23742 days in total), forming a $10512 \times 23742$ matrix. We perform NMF using the active set method with $r=2$. The computing times for the method in its vanilla version (center), with Gaussian compression (GC) and, with structured random compression (GC) were $50$, $70$, and $20$ seconds, respectively; the respective relative reconstruction errors were $0.0459$, $0.0537$, and $0.0458$, confirming the conclusions reached through visual inspection.}
	\label{fig:climate}
\end{figure*}

Our last classical NMF example consists of a popular application: biclustering. In this case, we bicluster a bipartite social network, i.e., that contains two different types of nodes. In our particular example, these two types correspond to characters from Marvel comic books and to the comic books in which they appear. We performed NMF with $r=10$ (recall that $r$ is the number of factors). We then thresholded each column of $\mat{X}$ and each row of $\mat{Y}$ to obtain sparse components that we define as a bicluster (we could have also added a sparsity term to the formulation, but opted for a simpler approach that does not introduce additional complexity). For each column (row) of $\mat{X}$ ($\mat{Y}$), we set to zero the entries smaller than the column (row) mean plus three standard deviations. Then, for display purposes, we only keep the largest 25 entries in each column of $\mat{X}$ if there are more than that number of nonzero entries. In \cref{fig:marvel} we show two of the biclusters obtained in such a way. It becomes quickly apparent that structured compression does not introduce significant artifacts in the biclusters, whereas the clusters found with Gaussian compression are heavily intertwined (all ten factors seem to be mixed together). For example, Mary Jane Parker-Watson, Spider-Man's wife, is not a recurring character of the Fantastic Four comic books.

\begin{figure*}
	
	\centering

	\newlength{\marvellength}
	\setlength{\marvellength}{0.2\textwidth}
	\tabulinesep=2pt
	\begin{footnotesize}
	\begin{tabu} to .96\textwidth{ @{\hspace{0pt}} m{0.01\textwidth} @{\hspace{2pt}} *{4}{ X[c,m] @{\hspace{0pt}}} }
		
		&
		\multicolumn{2}{c}{Fantastic Four} &
		\multicolumn{2}{c}{Spider-Man} \\
		\cmidrule(l{4pt}r{8pt}){2-3}
		\cmidrule(l{2pt}r{8pt}){4-5}
		&
		Left factor &
		Right factor &
		Left factor &
		Right factor \\
		
		\begin{sideways}Active set - GC\end{sideways}&
		\includegraphics[height=\marvellength]{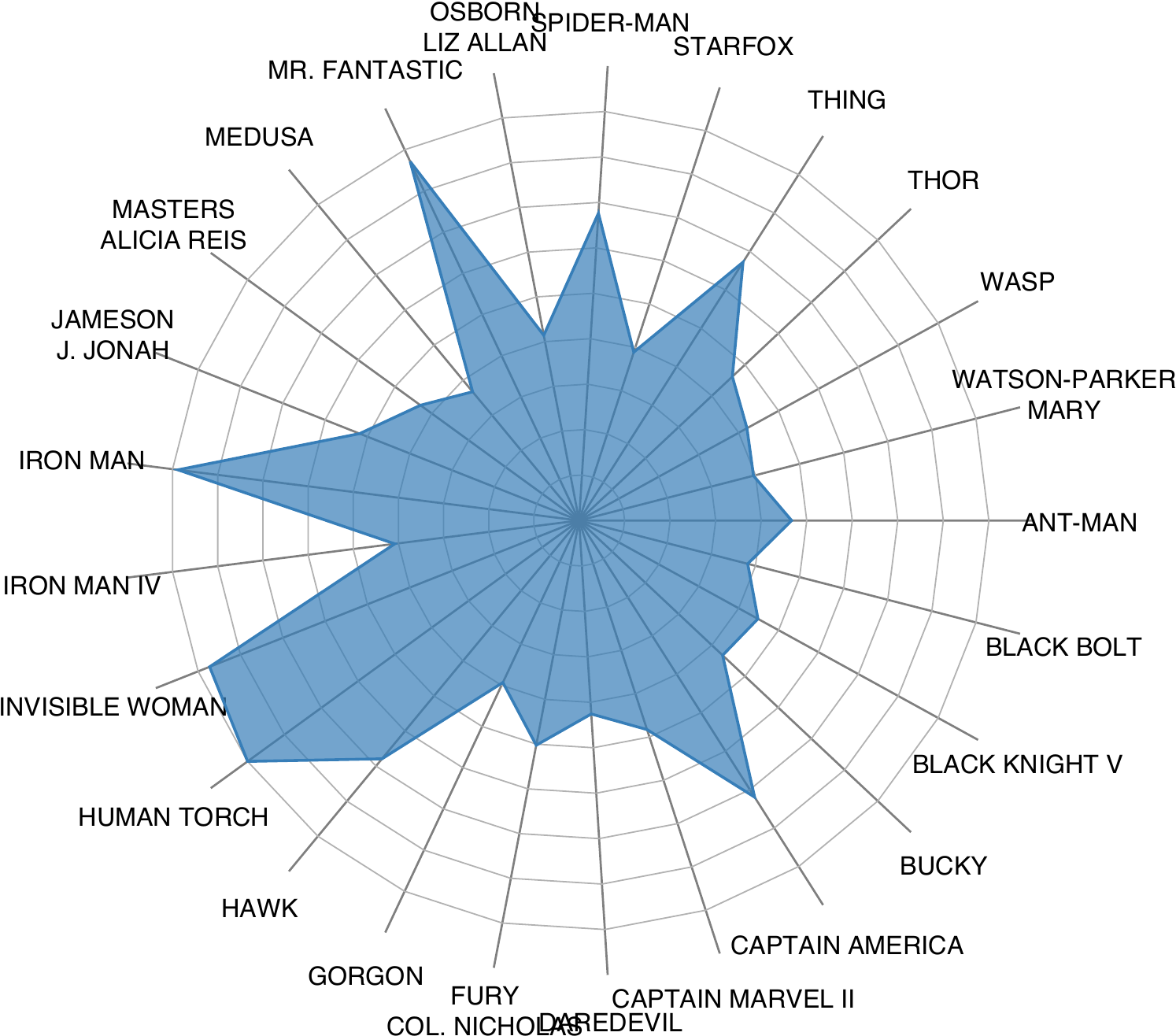}&
		\includegraphics[height=\marvellength]{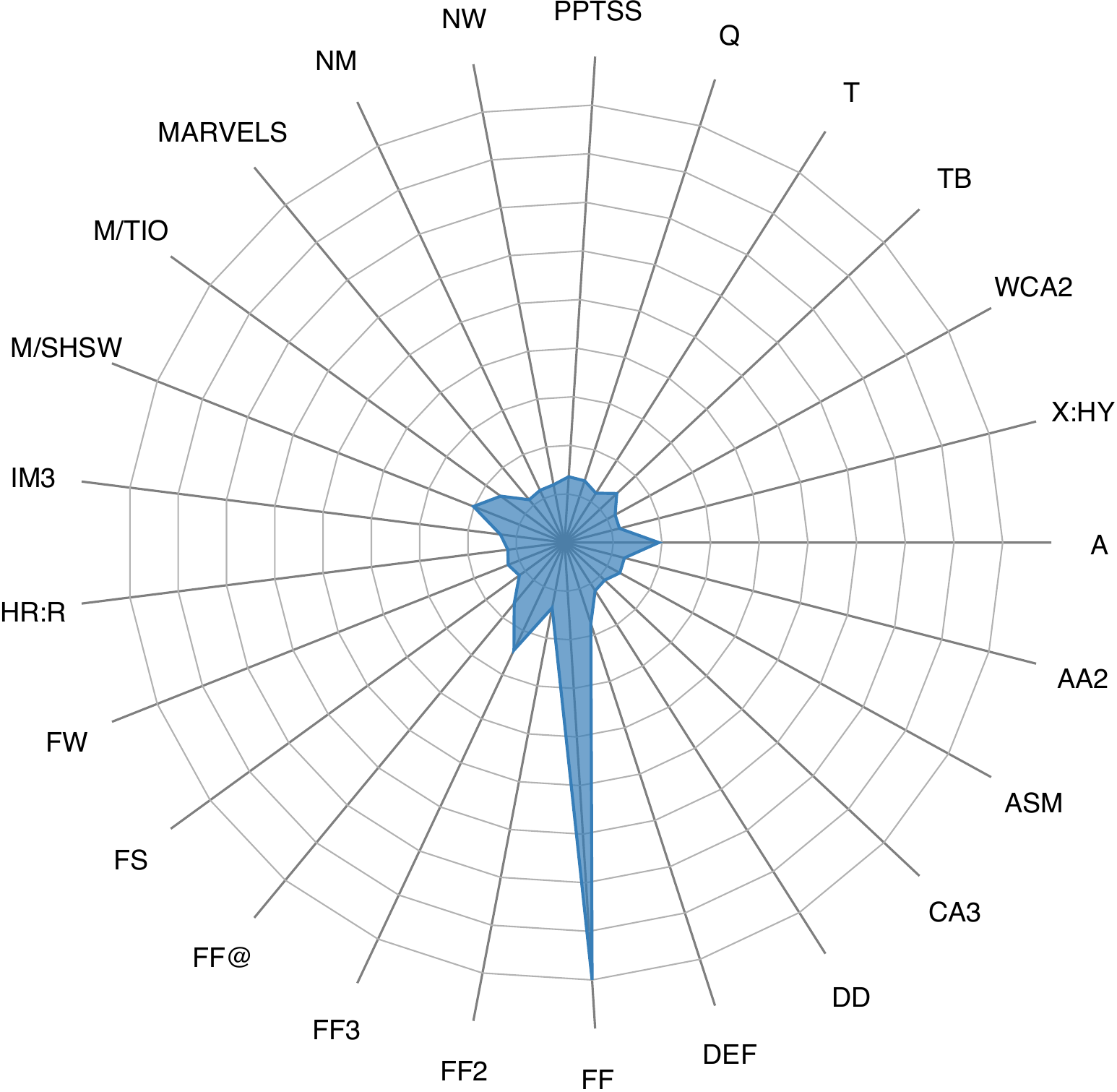}&
		\includegraphics[height=\marvellength]{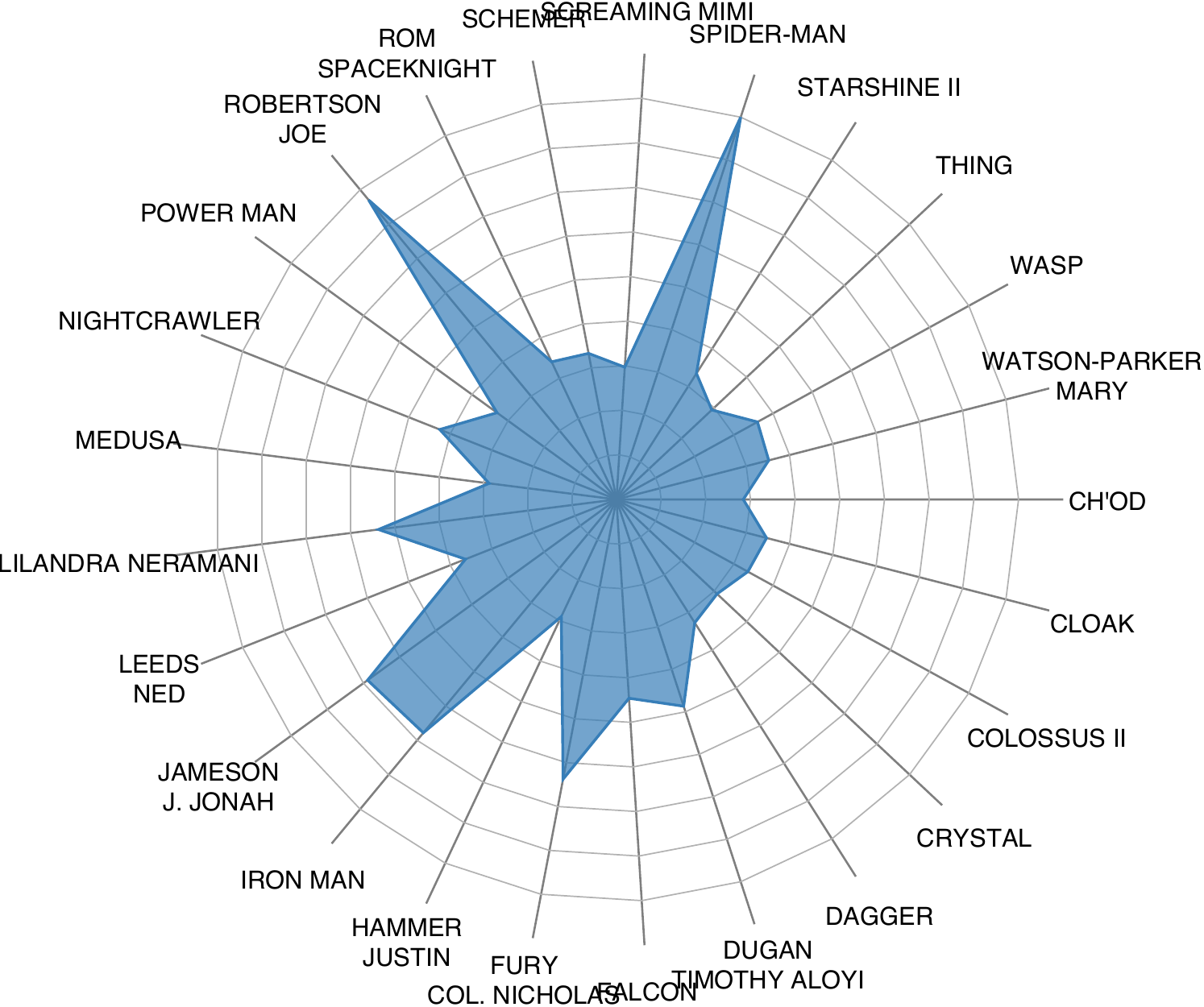}&
		\includegraphics[height=\marvellength]{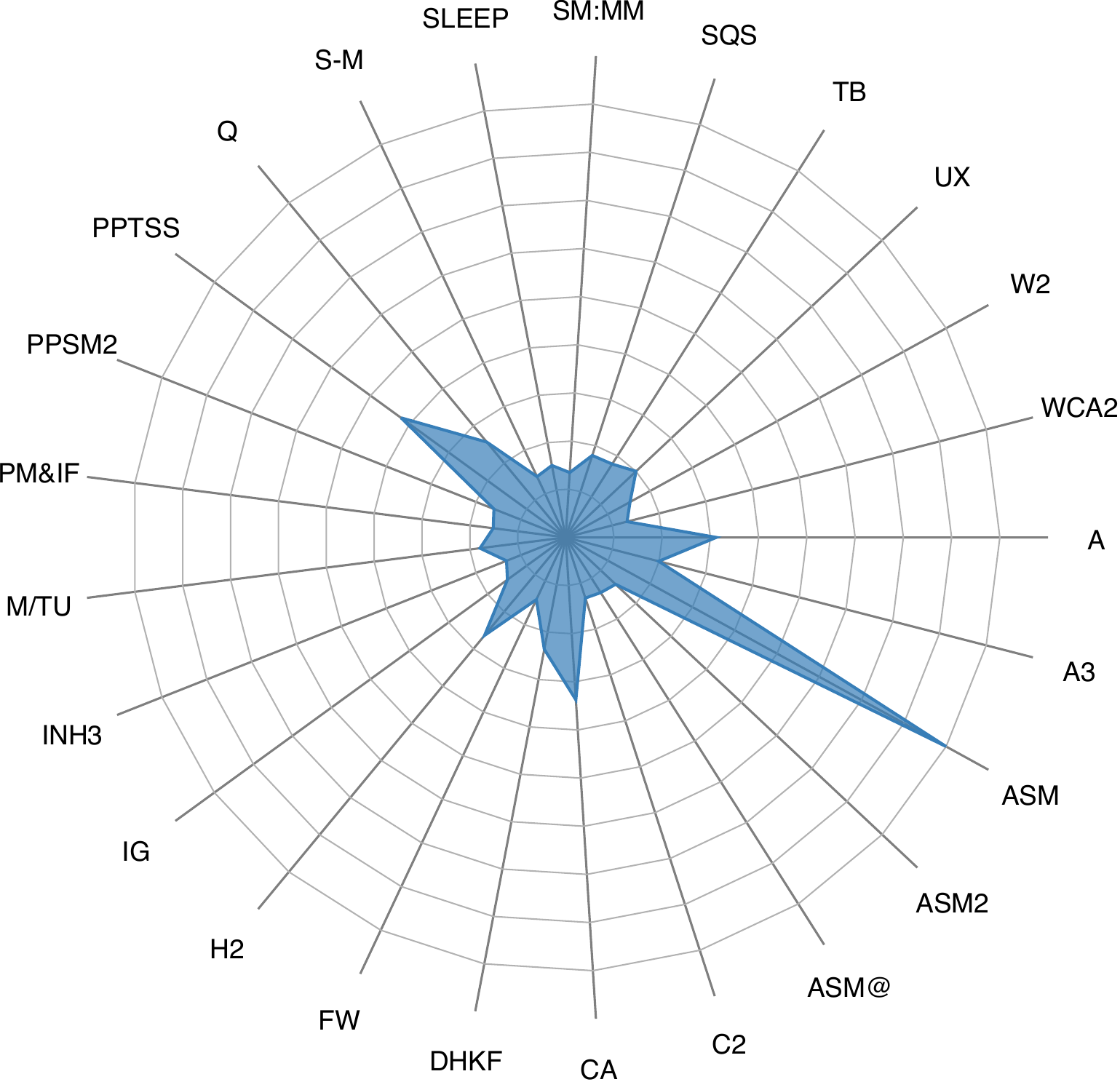} \\
		
		\begin{sideways}Active set\end{sideways}&
		\includegraphics[height=\marvellength]{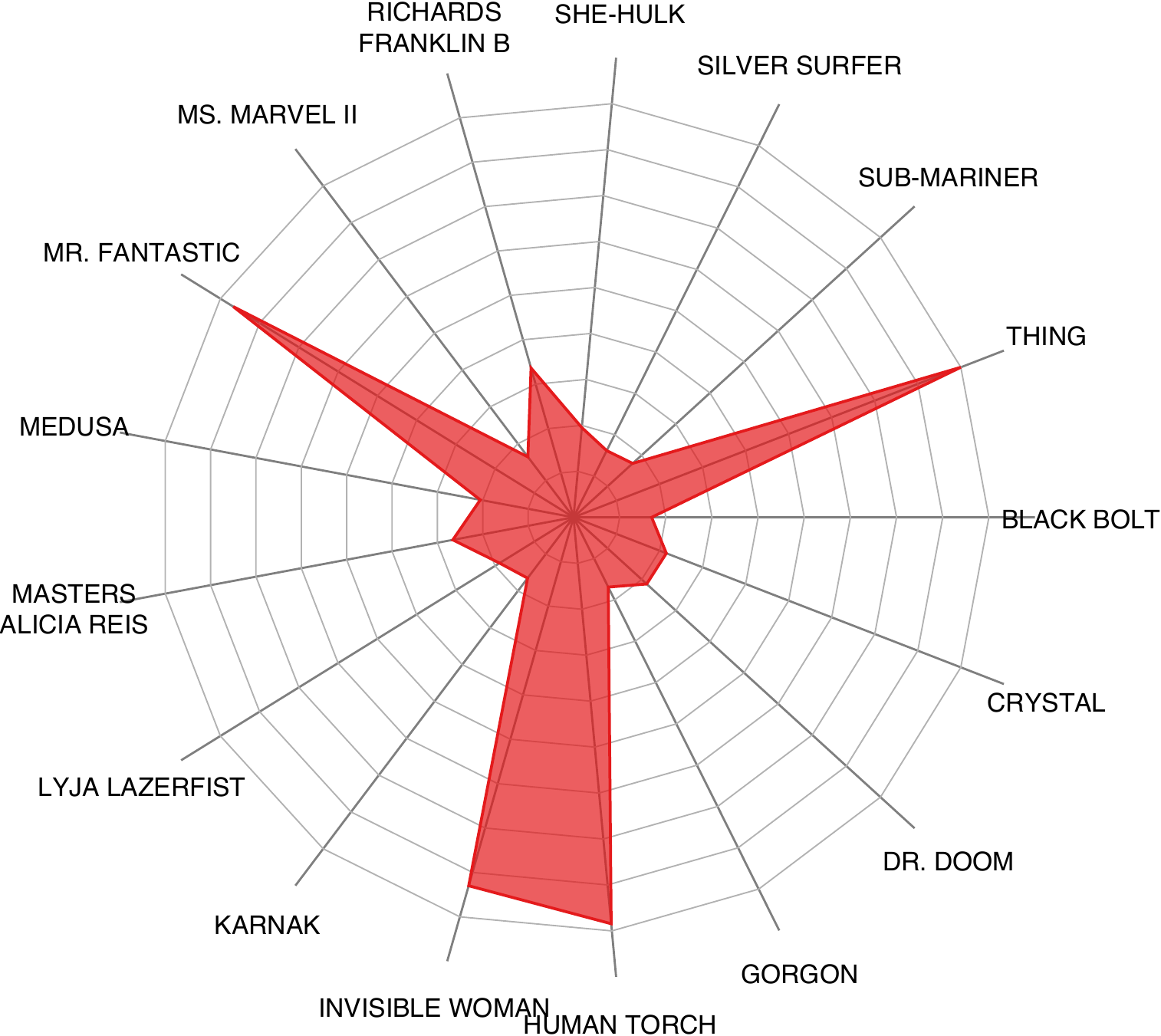}&
		\includegraphics[height=\marvellength]{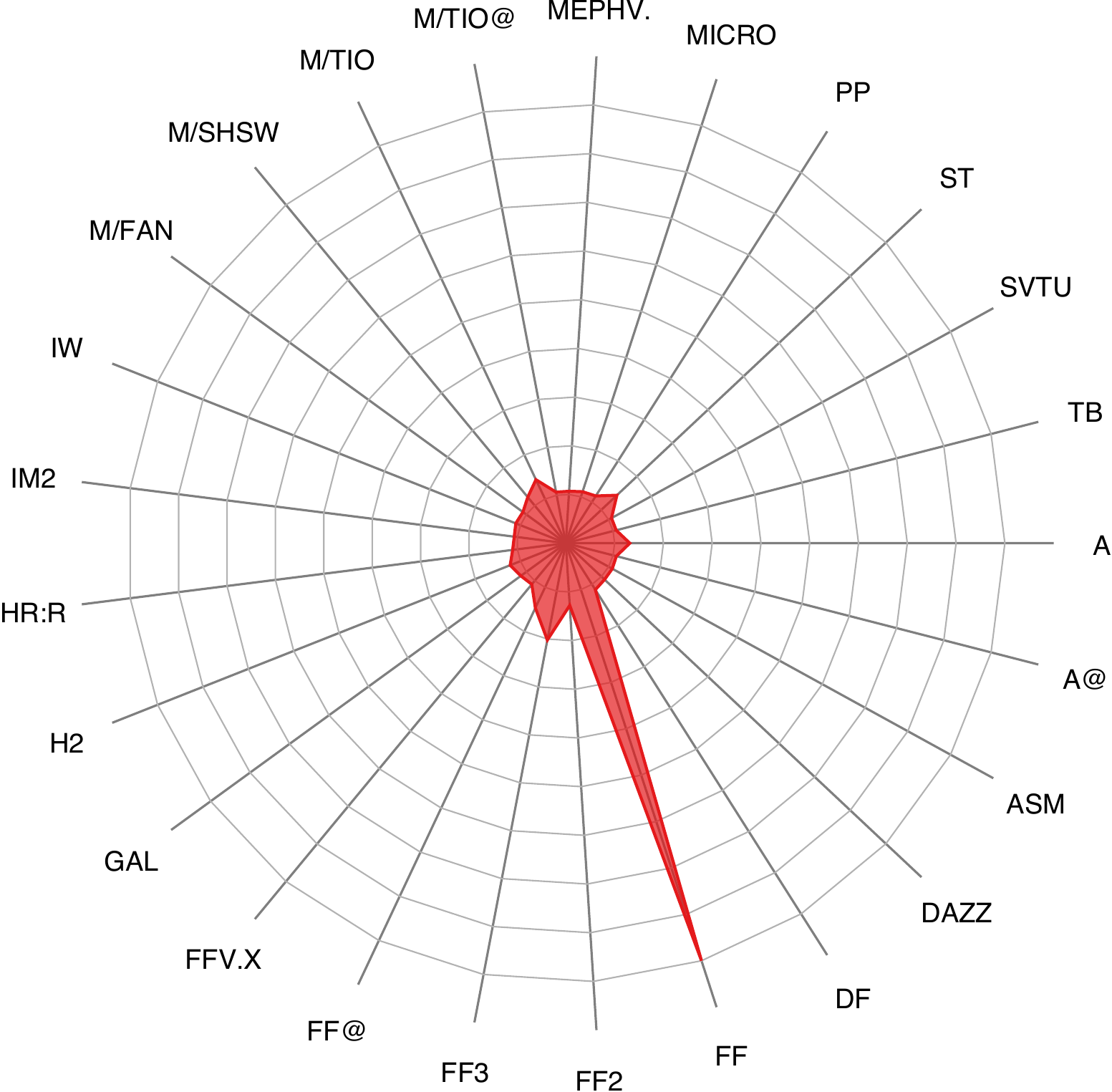}&
		\includegraphics[height=\marvellength]{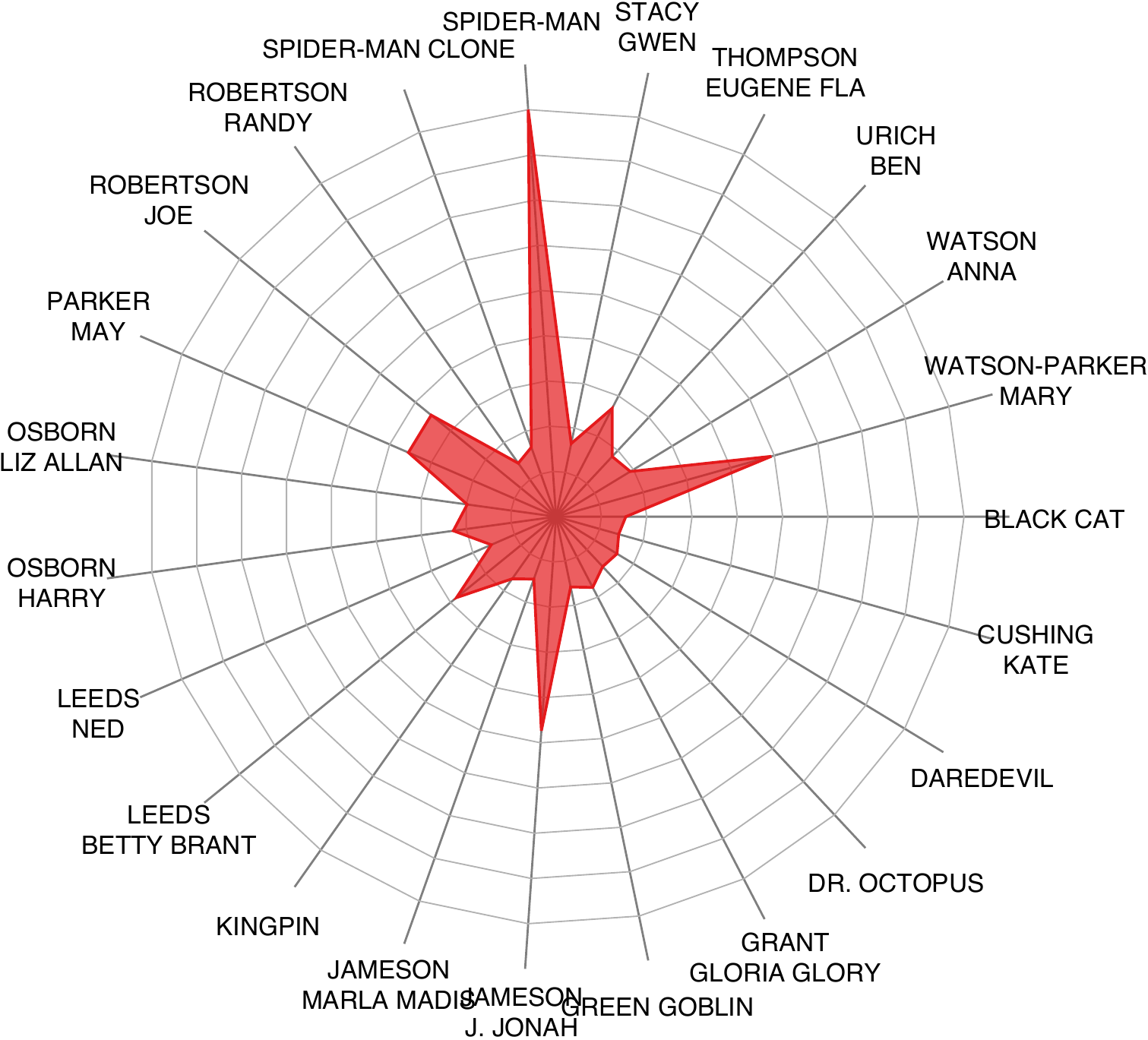}&
		\includegraphics[height=\marvellength]{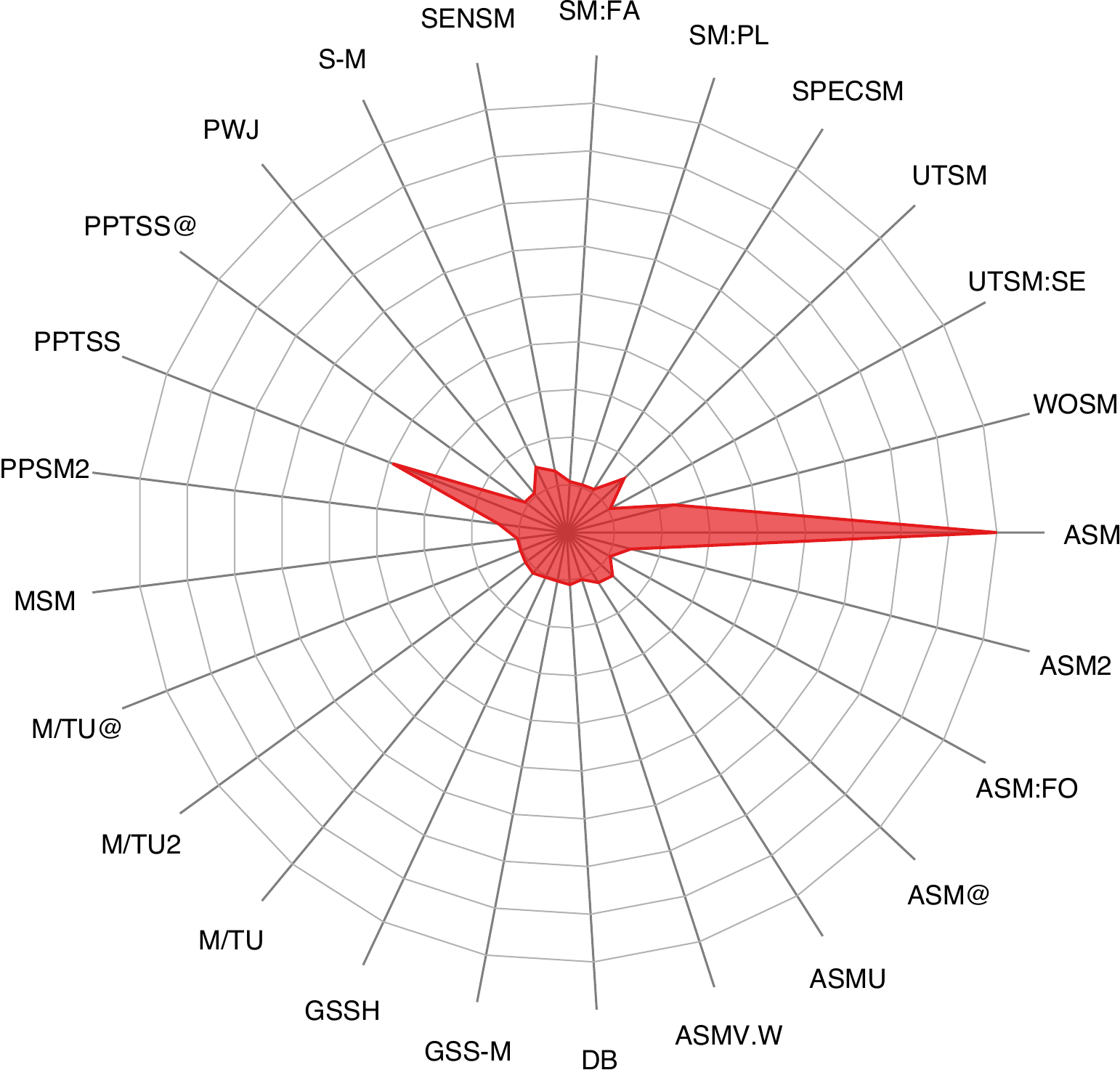}\\
		
		\begin{sideways}Active set - SC\end{sideways}&
		\includegraphics[height=\marvellength]{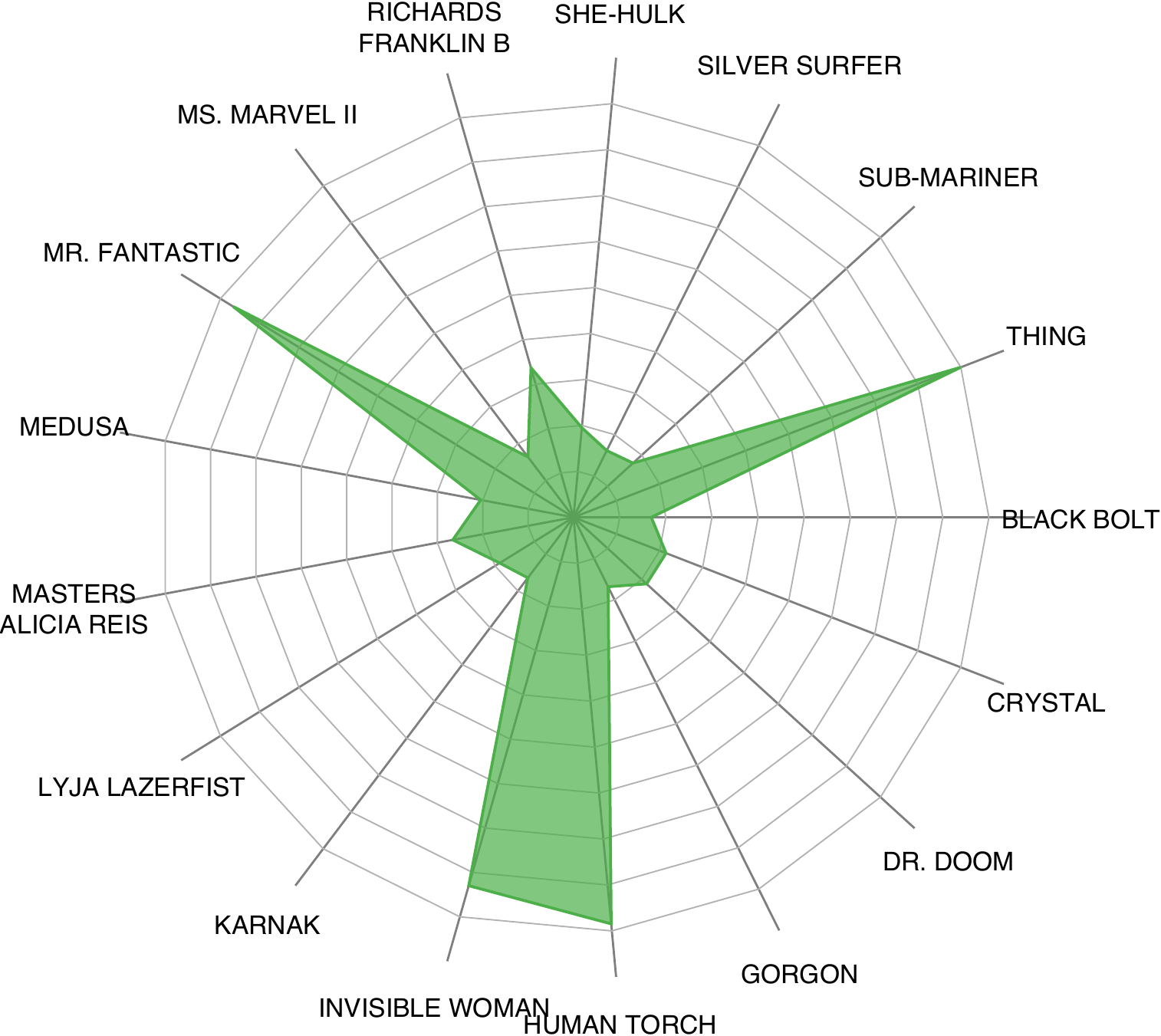}&
		\includegraphics[height=\marvellength]{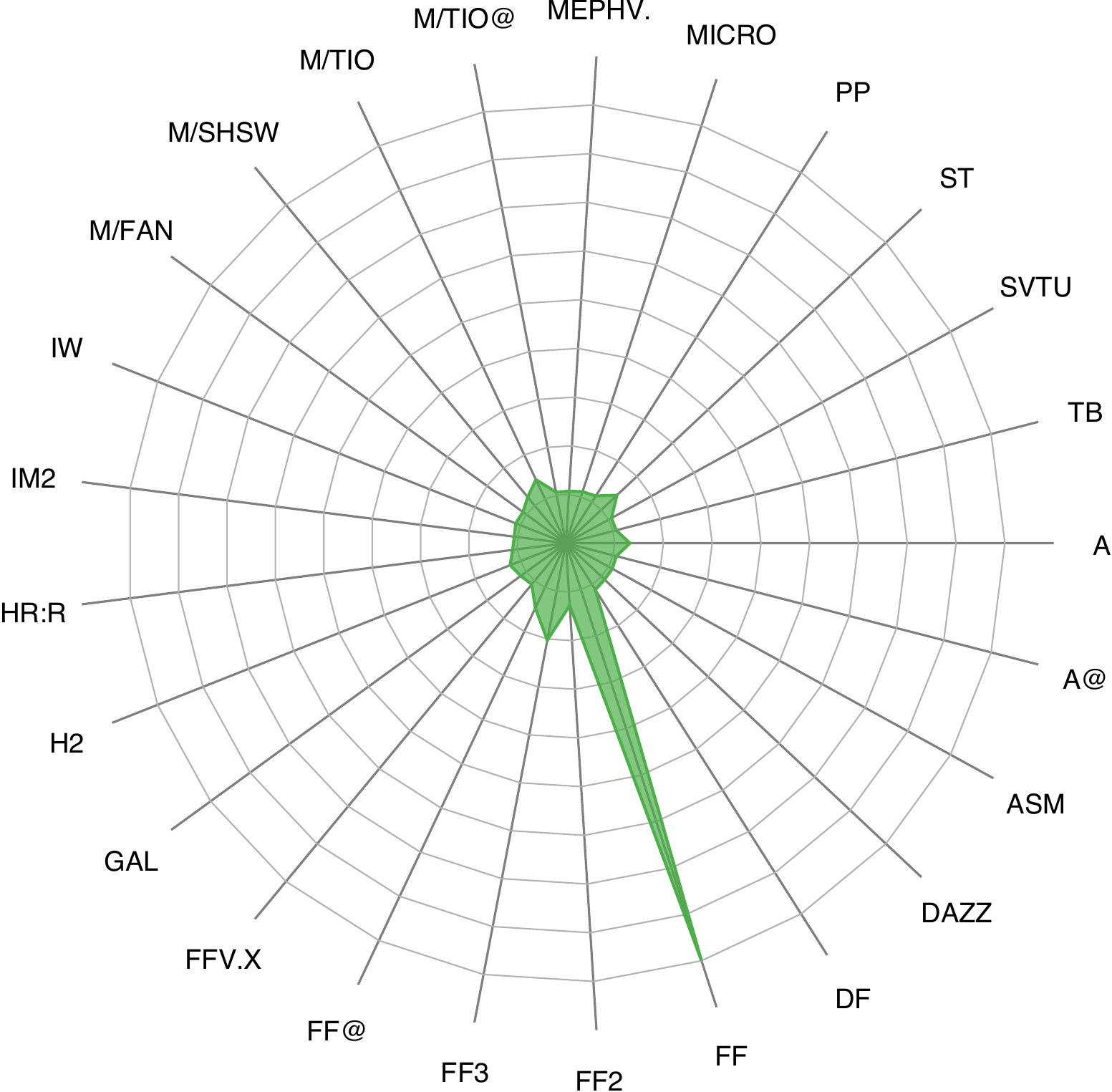}&
		\includegraphics[height=\marvellength]{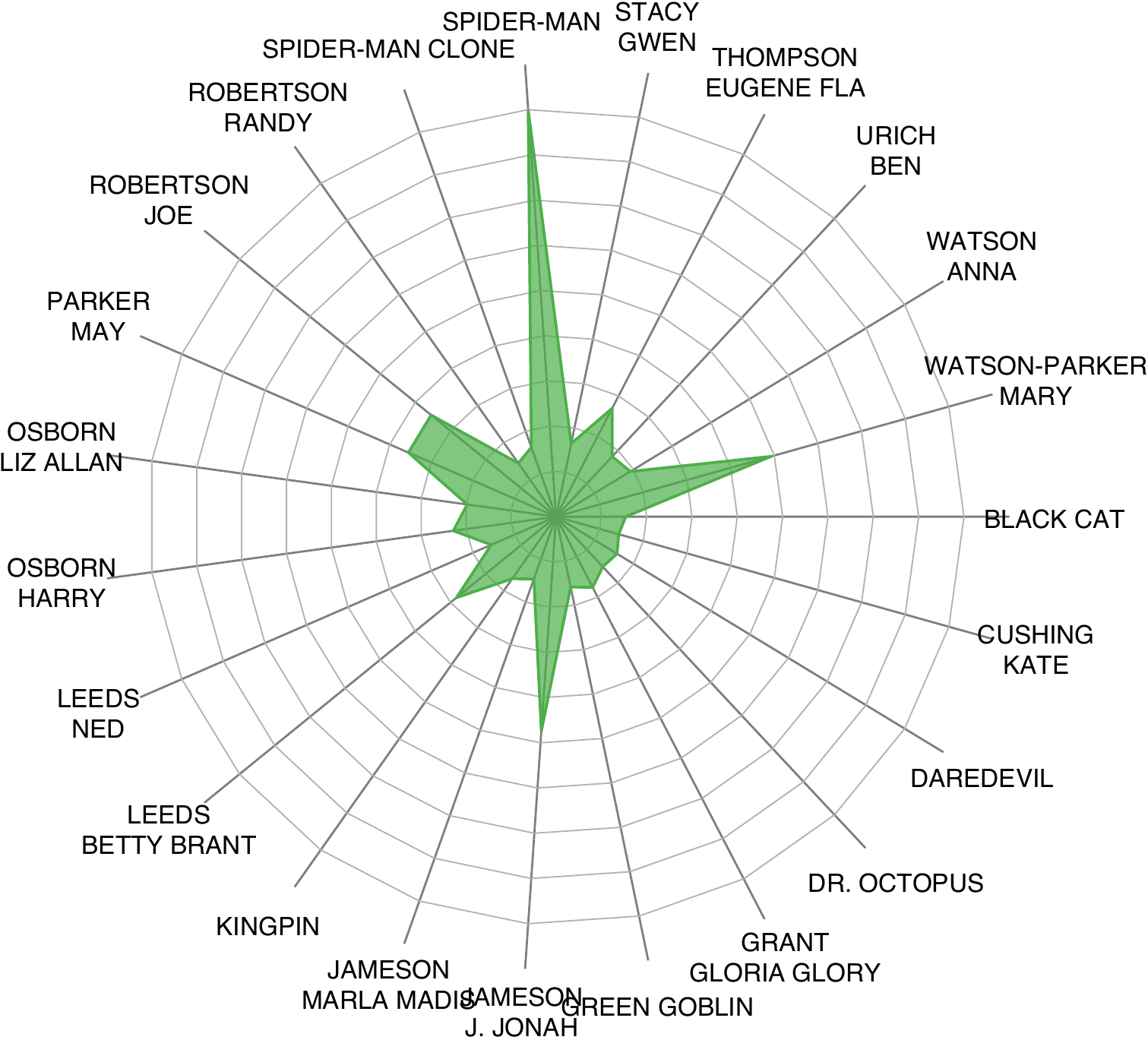}&
		\includegraphics[height=\marvellength]{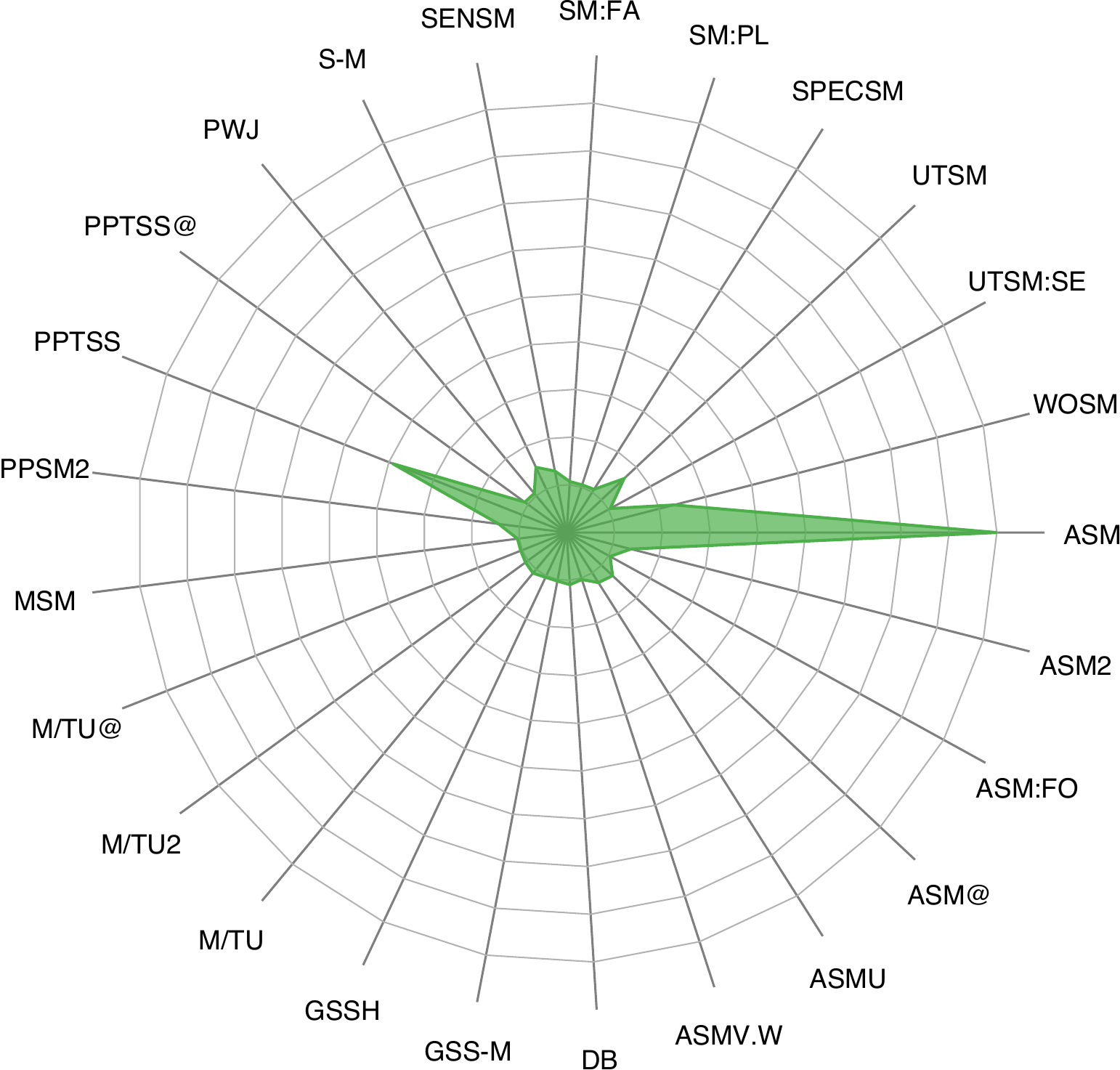}\\
						
	\end{tabu}
	\end{footnotesize}	
		
	\caption{\textbf{Biclustering the Marvel Universe collaboration network} (\url{http://www.chronologyproject.com/}). The network links Marvel characters and the Marvel comic books in which they appear, and exhibits most characteristics of “real-life” collaboration networks~\cite{Alberich2002}. It can be represented as an $m \times n$ matrix, where $m=6445$ and $n=12850 $ are the number of characters and comics, respectively. We bicluster this matrix using NMF with $r=10$, aiming at obtaining 10 very representative groups of characters appearing jointly in different comic books.
	The $i$th bicluster ($i = 1 \dots 10$) is formed by the $i$th column of $\mat{X}$ and the $i$th row of $\mat{Y}$ (small entries were set to zero, as explained in \cref{sec:results_nmf}). The radar plots represent the coefficients of these vectors. We show two biclusters that we identify with characters from the Fantastic Four (first two columns of the figure) and the Spider-Man (last two columns of the figure) comics. Active set NMF correctly identifies that Mr Fantastic, The Thing, the Invisible Woman, and the Human Torch are the four most recurring characters in the ``Fantastic Four'' (FF) series. Similarly, active set NMF correctly identifies that Spider-Man/Peter Parker, Mary Jane Watson-Parker (Peter Parker's wife), and Jonah Jameson (Peter Parker's boss) are the most recurring characters in the ``Amazing Spider-Man'' (ASM) and ``Peter Parker, The Spectacular Spider-Man'' (PPTSS) series. It is clear that the biclusters recovered using structured random compression (SC) are very close to the biclusters found with no compression; contrarily, Gaussian compression (GC) significantly affects the biclustering result. All $10$ biclusters can be found at \url{http://www.marianotepper.com.ar/research/cnmf}.}
	\label{fig:marvel}
\end{figure*}

To summarize, the overall observation is that structured compression brings additional speed to NMF methods without introducing significant errors. On the other hand, Gaussian compression seems to come at the cost of higher reconstruction errors and is not consistently faster than structured compression.

\subsection{Separable NMF}
\label{sec:results_snmf}

We implemented our SNMF algorithms in Python, using the dask and into libraries\footnote{\url{http://dask.readthedocs.org/}, \url{http://into.readthedocs.org/}} to perform out-of-core matrix computations (i.e., without fully loading the involved matrices in main memory). A byproduct of this implementation choice is that we can compute SNMF on very large matrices on a regular laptop, without having to resort to a cluster. To the best of our knowledge, our TSQR implementation is the first publicly available one that runs on any regular laptop using out-of-core computations. 

We perform all of our comparisons with the SNMF algorithm using the QR decomposition~\cite{Benson2014}, analyzed in \cref{sec:snmf}. We use SPA~\cite{Araujo2001,Gillis2014a}, and XRAY~\cite{Kumar2013} as the column selection algorithms. Throughout this section, we simply use compression to refer to structured compression.
In all tests, we set $w = 0$ and $r_\textsc{ov} = 10$ in the compression algorithm in \cref{algo:compression}; we further adjust the value of $r_\textsc{ov}$ so that $r + r_\textsc{ov} = \min(\max(20, r + r_\textsc{ov}), n)$.

We first present results on synthetic matrices in \cref{fig:snmf_synthetic}. We produced different matrices of fixed size by varying their rank, see \cref{fig:snmf_synthetic_rank}. In general, we aim at explaining the data matrix with a small fraction of its columns. The proposed compression method for SNMF is faster when fewer factors are needed to explain the data. On the other hand, QR-based methods have always the same (high) computing time, no matter how simple is the structure of the data. We also investigated how much faster is the proposed method with respect to QR-based approaches. We generated $m \times n$ input matrices, where $m$ is fixed and $n$ varies; we then extract $n/10$ columns. Remember that QR-based approaches solve an $n \times n$ version of \cref{eq:snmf_right_factor_QR}, while the proposed compressed approach solves an $(r + r_\textsc{ov}) \times n$ version. This difference is reflected almost exactly in the speedup that we observe in \cref{fig:snmf_synthetic_cols}: about an order of magnitude is gained with the proposed scheme.

\begin{figure*}
	\begin{subfigure}[t]{\columnwidth}
		\includegraphics[width=\columnwidth]{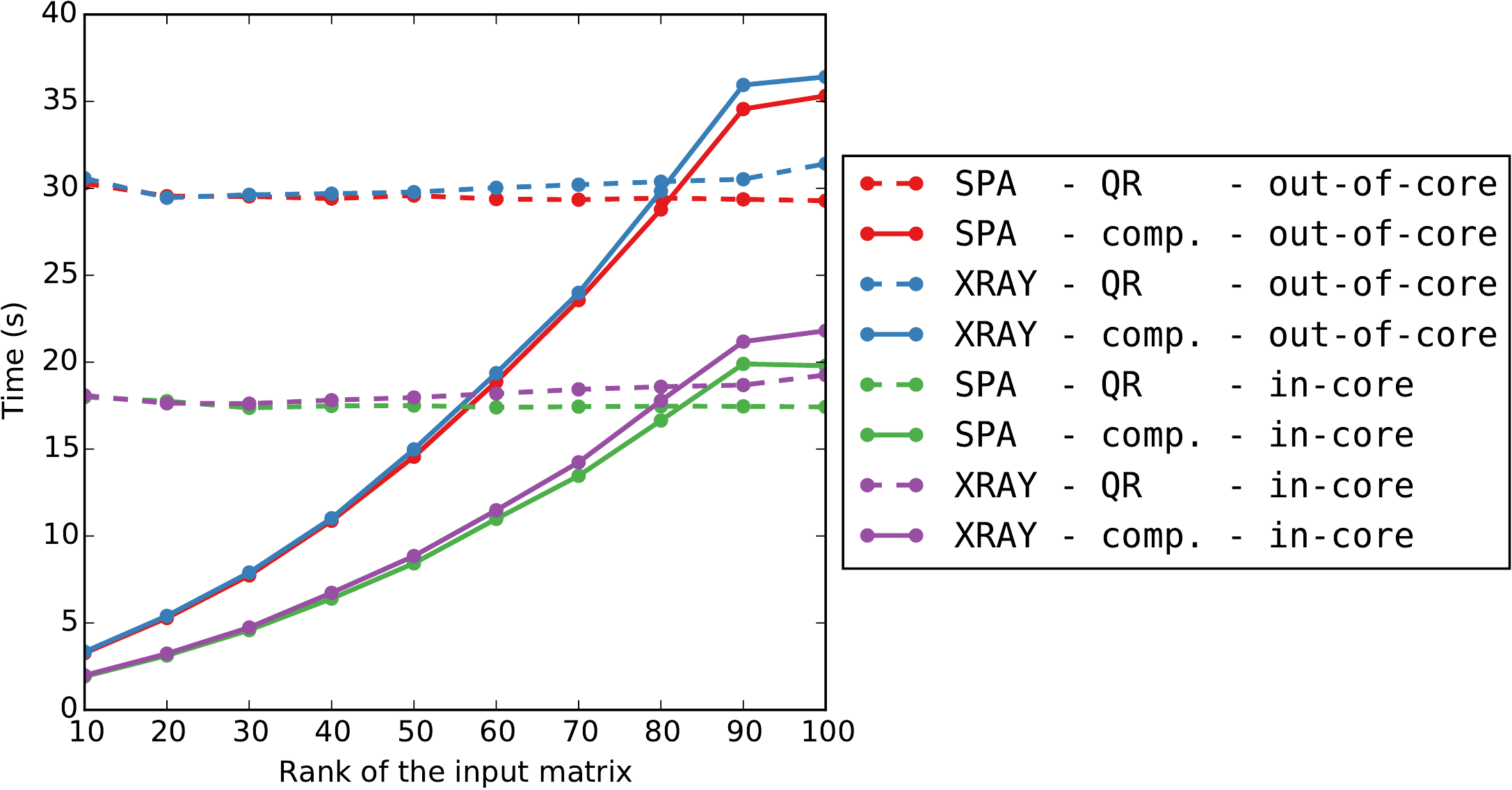}
		
		\caption{We extract $r$ columns, where $r$ is the rank of the $10^6 \times 100$ input matrix. As expected, the computing time of the QR-based methods does not change with the number of extracted columns. On the other hand, compressed methods are faster when the rank of the input matrix is low compared to its size. In this case, out-of-core methods appear slower than in-core ones (slightly above $2 \times$). We use an oversampling factor $r_\textsc{ov} = 10$ for compression, which explains the flattening of the compressed curves towards their end.}
		\label{fig:snmf_synthetic_rank}
	\end{subfigure}
	\hfill	
	\begin{subfigure}[t]{\columnwidth}
		\includegraphics[width=\columnwidth]{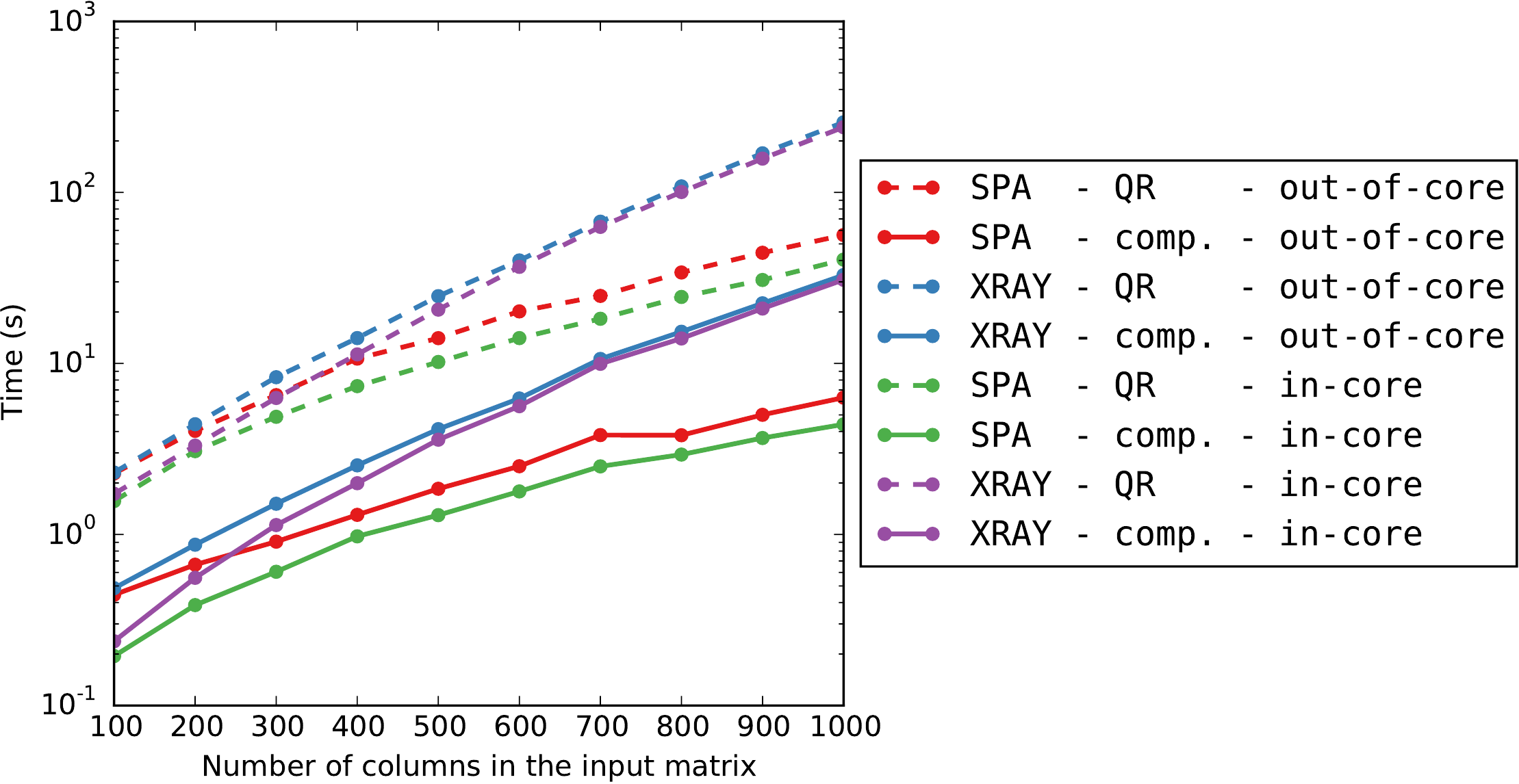}
		
		\caption{We extract $n/10$ columns from a $10^5 \times n$ input matrix. Note that the speedup of compressed versus QR-based methods (approx.~$10 \times$) is straightforwardly explained by the fixed ratio between the rank and the number of columns. In this case, no significant speed difference is noticeable when comparing in-core with out-of-core methods.}
		\label{fig:snmf_synthetic_cols}
	\end{subfigure}
		
	\caption{\textbf{Performance of different SNMF algorithms on synthetic matrices}. We generate the input matrix $\mat{A} = \mat{X}_\textsc{gt} \mat{Y}_\textsc{gt}$, where $\mat{X}_\textsc{gt} \in \Real^{m \times r}$ and $\mat{Y}_\textsc{gt} \in \Real^{r \times n}$ have normally distributed entries ($r$ and $n$ take different values in subfigures~\subref{fig:snmf_synthetic_rank} and~\subref{fig:snmf_synthetic_cols}). All algorithms select the same set of columns, thus producing equal errors.}
	\label{fig:snmf_synthetic}
\end{figure*}

In \cref{fig:climate_snmf} we analyze the same dataset as in \cref{fig:climate}. Interestingly, a similar conclusion is reached using SNMF and NMF. The data is well explained by the same two factors (in this case, two extreme columns). Notice that the analyzed matrix is fat and the QR-based approach provides no speedup, i.e., $\mat{R} \in \Real^{m \times n}$ in \cref{eq:snmf_right_factor_QR}. On the other hand, the proposed approach produces a smaller problem independently of the input matrix's shape. Quantitatively, in this example, compressed SNMF is two orders of magnitude faster than the QR-based SNMF.

\begin{figure*}
	\centering
	
	\begin{subfigure}{.25\textwidth}
		\centering
		\includegraphics[width=\textwidth]{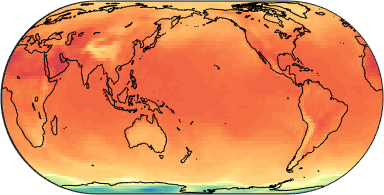}\\[4pt]
		\includegraphics[width=\textwidth]{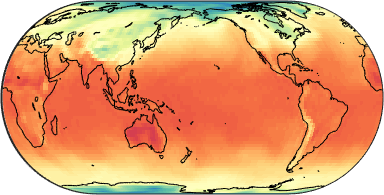}
		
		\caption{Columns extracted with SPA-comp. When $r=2$, the extreme columns look similar to the ones found with traditional NMF, see \cref{fig:climate}.}
	\end{subfigure}
	\hfill
	\begin{subfigure}{.735\textwidth}
		\centering
		\includegraphics[height=.32\textwidth]{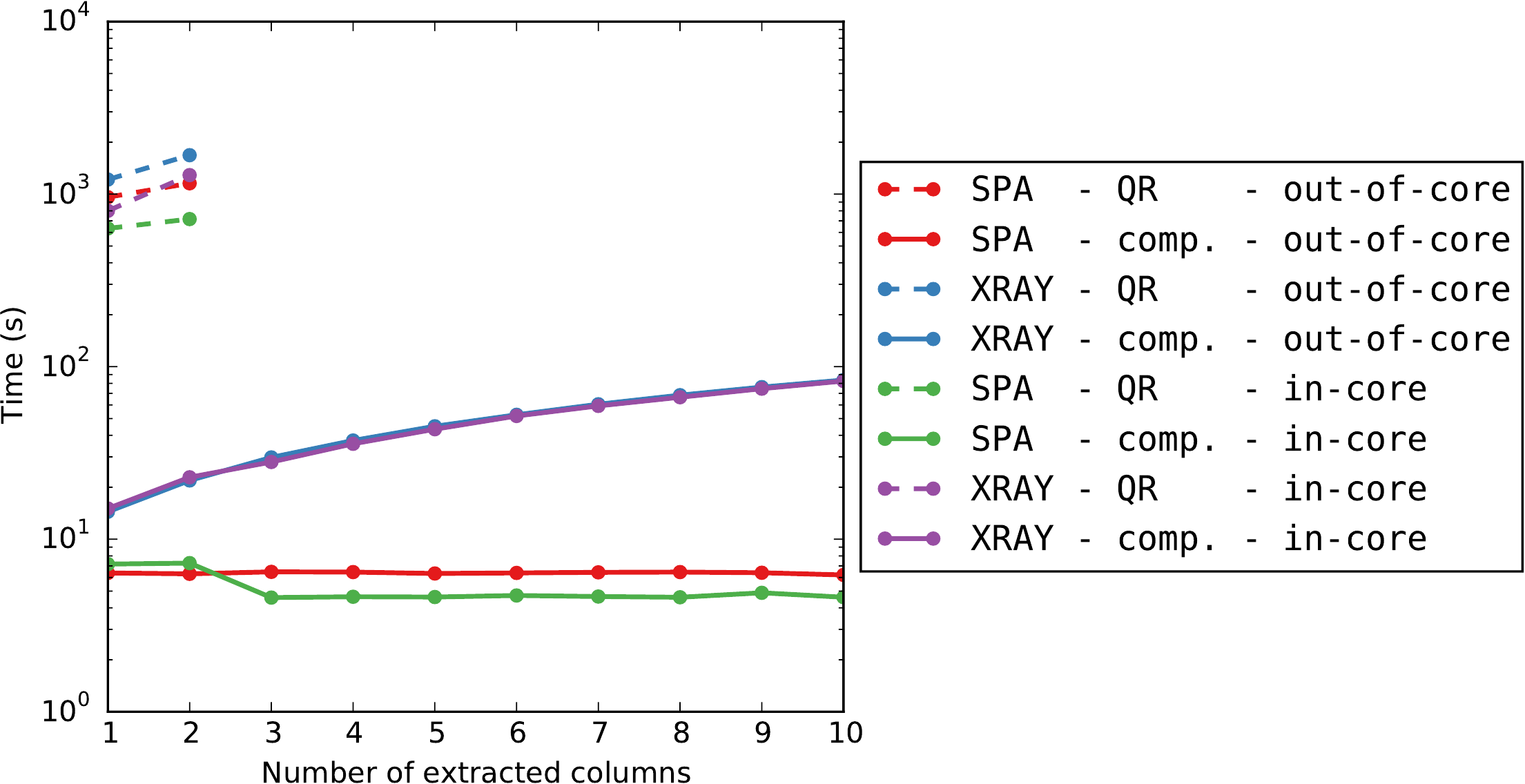}%
		\hspace{.1cm}%
		\includegraphics[height=.32\textwidth]{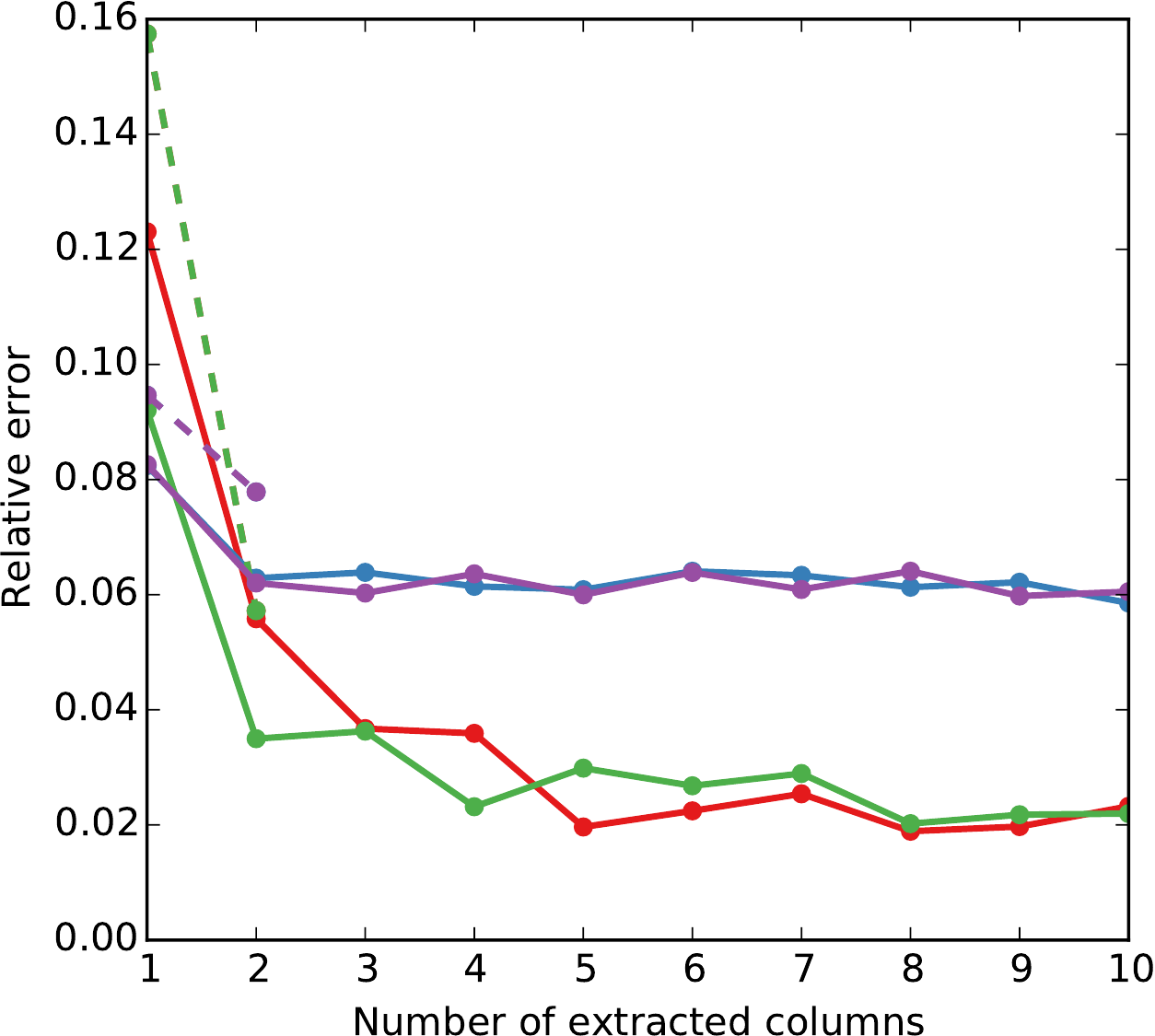}
		
		\caption{With the uncompressed methods, extracting columns becomes extremely slow (about two orders of magnitude slower) than with the compressed methods. Since compressed SNMF is faster with 10 columns than QR-based SNMF with two columns, we simply stopped the computation of the latter after $2$ columns. Notice that these QR-based methods are explicitly designed to be faster for tall-and-skinny matrices, but end-up being extremely slow for fat matrices. The proposed compressed SNMF is also very fast for fat matrices.}
	\end{subfigure}
	
	\caption{\textbf{SNMF on gridded climate data}. Same dataset as in \cref{fig:climate}. The data form a fat $10512 \times 23742$ matrix. We study the performance of SNMF in terms of computing speed and relative error as the number $r$ of columns changes. As with NMF, the data is well explained with only two factors by observing the decay in the reconstruction error. SPA with compression seems not to increase its computing time as the number of extracted columns increases; this is due to forcing the compression algorithm to produce at least $20$ rows, the subsequent column extraction in SPA is extremely efficient. Notice that SPA with compression is about four times faster than NMF using the active set method with compression, see \cref{fig:climate}.}
	\label{fig:climate_snmf}
\end{figure*}

Our last example consists on an application for selecting representative frames from videos. We first examine a short clip ($5$ seconds long, $120$ frames) of the open-source movie ``Elephants Dream''  at a resolution of 360p ($640 \times 360$). In \cref{tab:elephantsDreams} we show a summary of the comparisons performed with this video. An example of the frames extracted by SPA with compression is shown in \cref{fig:elephantsDreams}.

Our first observation is that the proposed compressed SNMF is at least an order of magnitude faster than the QR-based variant. Second, since the matrix built from video is not truly low-rank, projecting the matrix into a low-rank subspace by means of compression seems to yield better results than when using the QR decomposition. Intuitively, compression eliminates some variability in the data in such a way that it can be better approximated by SNMF.

Although not strictly comparable, because it does not impose nonnegativity constraints, we included in our comparisons the method for extracting representative elements from~\cite{Elhamifar2012}. As discussed in \cref{sec:snmf}, this method's formulation does not scale gracefully with large input matrices. A fact that is easily reflected in the slow running time, even for a relatively small example.

\begin{table}
	\caption{\textbf{Extracting representative frames from a video} For details about the experiment setup, see \cref{fig:elephantsDreams}. We are considering a (relatively small) $691,200 \times 120$ matrix to be able to compare the performance of in-core and out-of-core methods and with ESV~\cite{Elhamifar2012}, which is not fit for large scale matrices. The proposed compression scheme for SNMF (SPA-COMP) greatly improves speed with no detriment for the reconstruction error. Notice that since the matrix is not actually low-rank (it is a video), enforcing the projection onto a subspace helps in finding a better solution (SPA-COMP versus SPA-QR).}
	\label{tab:elephantsDreams}
	
	\centering
	\begin{threeparttable}[b]
	\begin{tabular}{lld{2.0}d{2.2}d{1.4}}
		\toprule
		\multicolumn{1}{c}{Methods} & \multicolumn{1}{c}{Comp. model} & \multicolumn{1}{c}{$r$} & \multicolumn{1}{c}{Time (s)} & \multicolumn{1}{c}{Rel. error}\\
		\midrule
		SPA-COMP & in-core & 6 & 2.28 & 0.4240\\
		SPA-COMP & out-of-core & 6 & 4.75 & 0.4293\\
		SPA-QR & in-core & 6 & 18.76 & 0.5446\\[3pt]
		
		SPA-COMP & in-core & 9 & 2.31 & 0.3626\\
		SPA-COMP & out-of-core & 9 & 4.59 & 0.3610\\
		SPA-QR & in-core & 9 & 19.08 & 0.4453\\
		ESV~\cite{Elhamifar2012} ($\alpha=2$)\tnote{1} & in-core & 9 & 57.38 & 0.3751\tnote{2}\\[3pt]
		
		SPA-COMP & in-core & 15 & 2.65 & 0.3068\\
		SPA-COMP & out-of-core & 15 & 5.50 & 0.3047\\
		SPA-QR & in-core & 15 & 19.93 & 0.4011\\
		ESV~\cite{Elhamifar2012} ($\alpha=50$)\tnote{1} & in-core & 15 & 68.05 & 0.1358\tnote{2}\\
		\bottomrule
	\end{tabular}
	\begin{tablenotes}
		\item [1] $\alpha$ is a regularization parameter that (indirectly) controls the number of representatives $r$.
		\item [2] The errors are not directly comparable since this formulation does not impose nonnegativity.
	\end{tablenotes}
	\end{threeparttable}
\end{table}

\begin{figure*}
	\centering
	
	\tabulinesep=1pt
	\begin{tabu} to \textwidth{ @{\hspace{1pt}} *{10}{ X[c,m] @{\hspace{1pt}}} }
		\includegraphics[width=\linewidth]{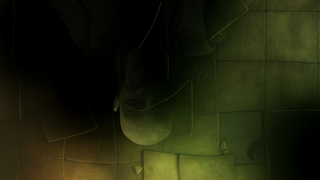} &
		\includegraphics[width=\linewidth]{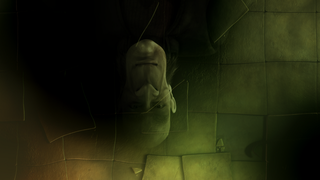} &
		\includegraphics[width=\linewidth]{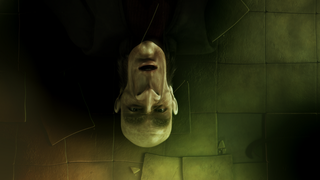} &
		\includegraphics[width=\linewidth]{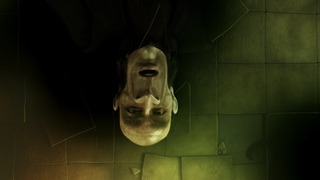} &
		\includegraphics[width=\linewidth]{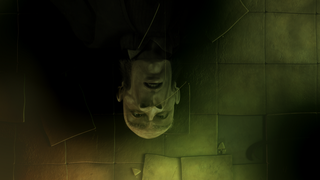} &
		\includegraphics[width=\linewidth]{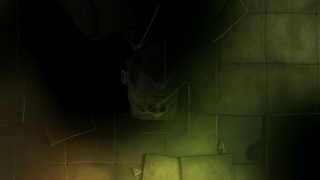} &
		\includegraphics[width=\linewidth]{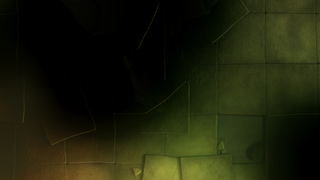} &
		\includegraphics[width=\linewidth]{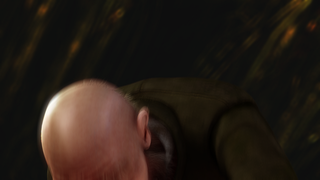} &
		\includegraphics[width=\linewidth]{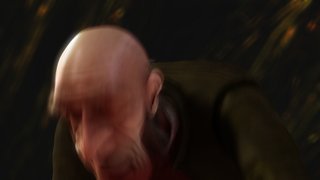} &
		\includegraphics[width=\linewidth]{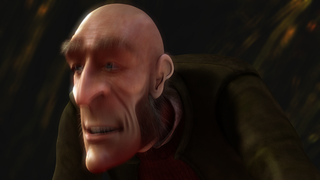} \\

		\includegraphics[width=\linewidth]{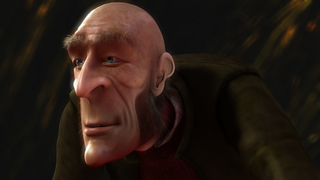} &
		\includegraphics[width=\linewidth]{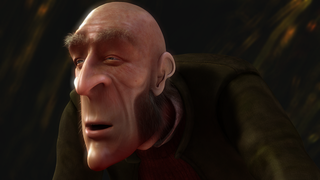} &
		\includegraphics[width=\linewidth]{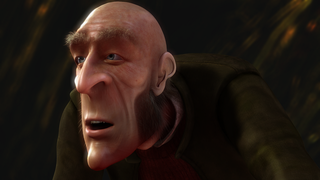} &
		\includegraphics[width=\linewidth]{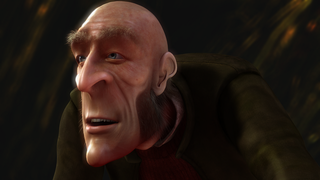} &
		\includegraphics[width=\linewidth]{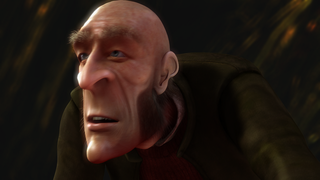} &
		\includegraphics[width=\linewidth]{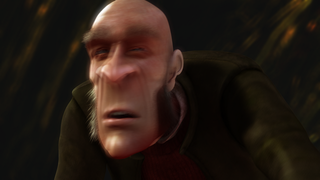} &
		\includegraphics[width=\linewidth]{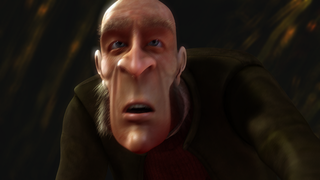} &
		\includegraphics[width=\linewidth]{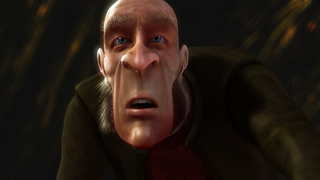} &
		\includegraphics[width=\linewidth]{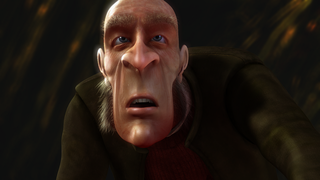} &
		\includegraphics[width=\linewidth]{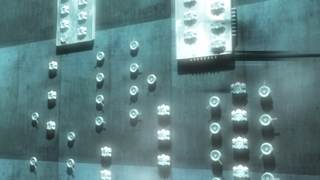} \\

		\includegraphics[width=\linewidth]{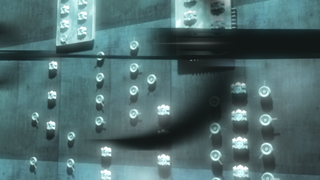} &
		\includegraphics[width=\linewidth]{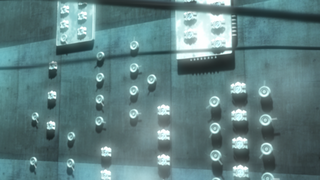} &
		\includegraphics[width=\linewidth]{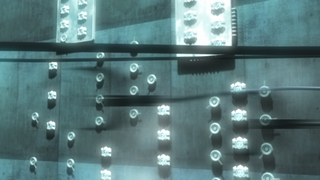} &
		\includegraphics[width=\linewidth]{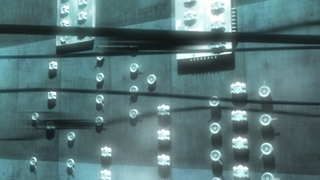} &
		\includegraphics[width=\linewidth]{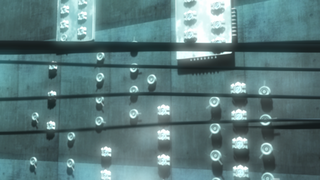} &
		\includegraphics[width=\linewidth]{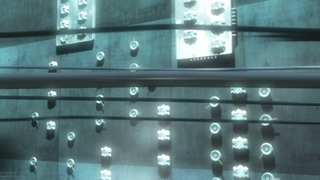} &
		\includegraphics[width=\linewidth]{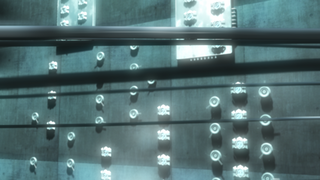} &
		\includegraphics[width=\linewidth]{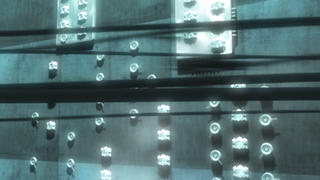} &
		\includegraphics[width=\linewidth]{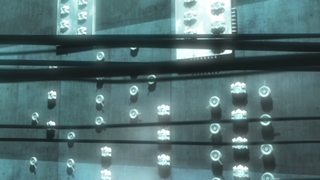} &
		\includegraphics[width=\linewidth]{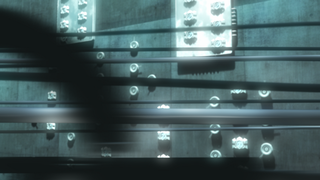} \\

		\includegraphics[width=\linewidth]{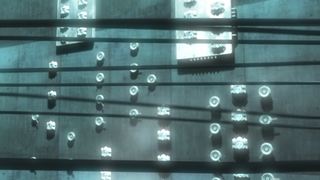} &
		\includegraphics[width=\linewidth]{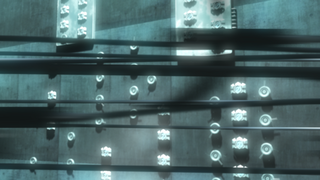} &
		\includegraphics[width=\linewidth]{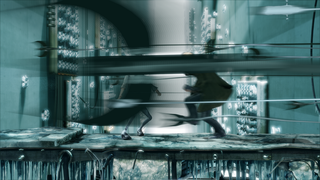} &
		\includegraphics[width=\linewidth]{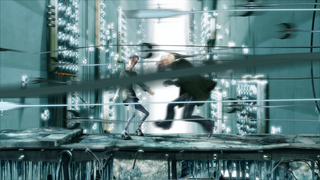} &
		\includegraphics[width=\linewidth]{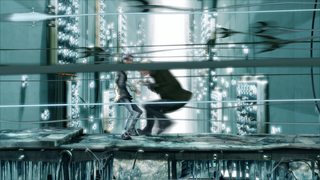} &
		\includegraphics[width=\linewidth]{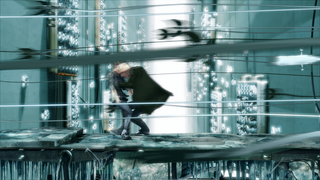} &
		\includegraphics[width=\linewidth]{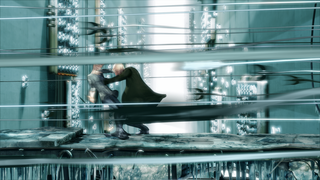} &
		\includegraphics[width=\linewidth]{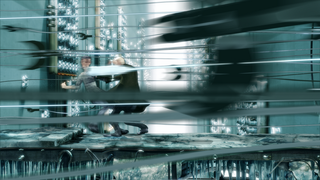} &
		\includegraphics[width=\linewidth]{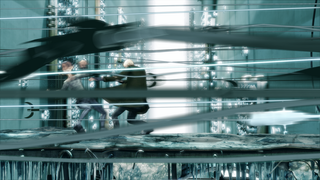} &
		\includegraphics[width=\linewidth]{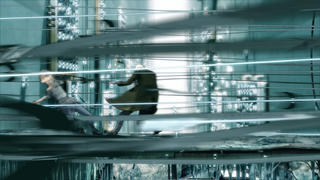} \\

	\end{tabu}	
	
	\begin{tabu} to \textwidth{ @{\hspace{1pt}} m{.5\textwidth} @{\hspace{20pt}} m{.33\textwidth} @{\hspace{1pt}} }
		\includegraphics[width=.5\textwidth]{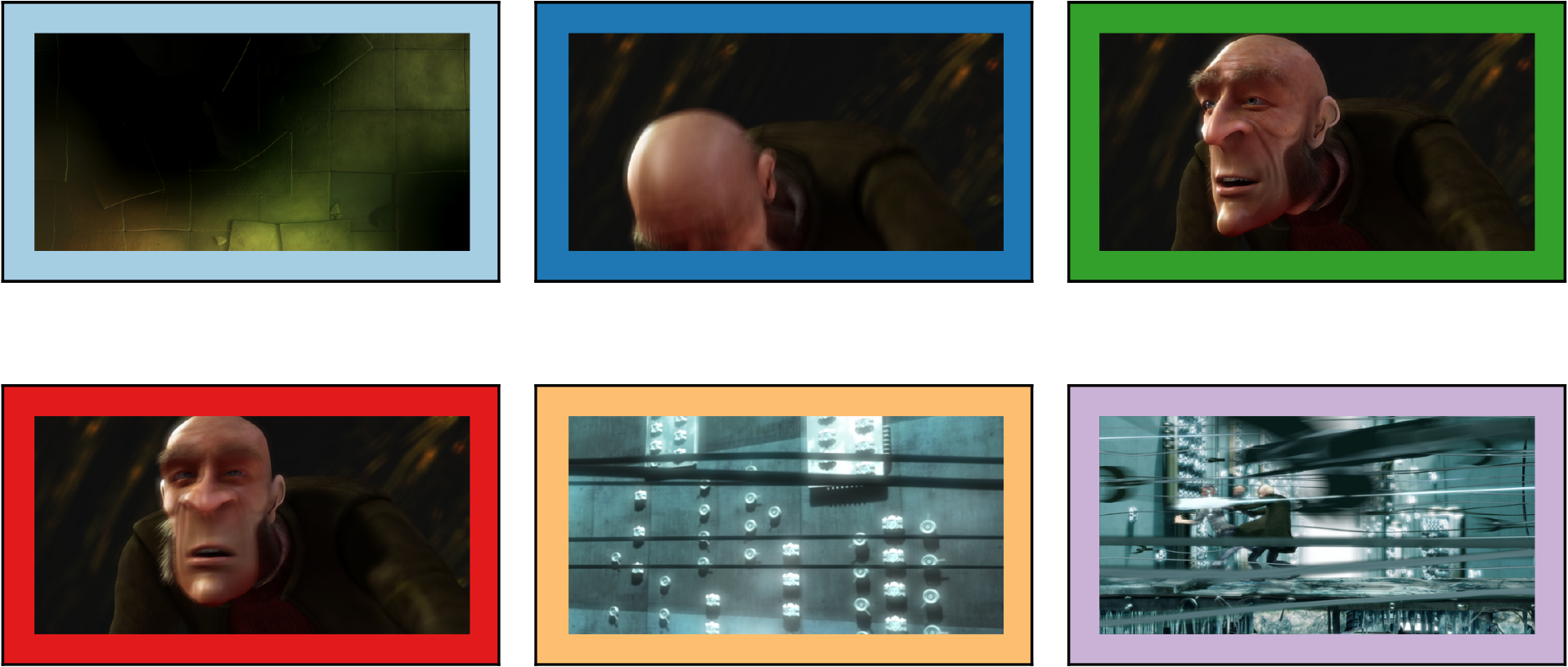} &
		\includegraphics[width=.33\textwidth]{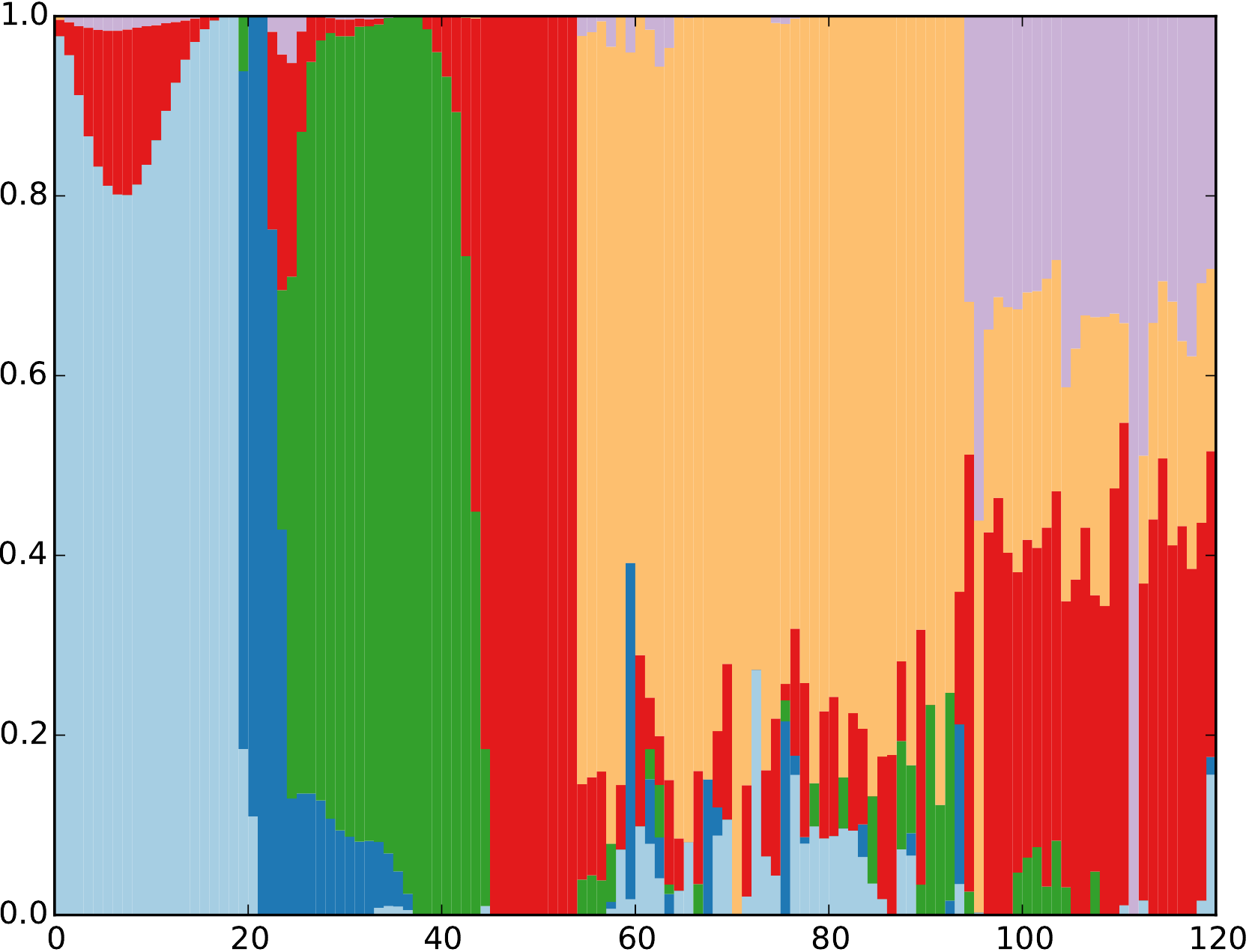}
	\end{tabu}			
	
	\caption{\textbf{Extracting representative frames from a video} (\url{http://www.elephantsdream.org}). The video resolution is $640 \times 360$ pixels and contains $120$ frames ($5$ seconds). On the top block, we display $40$ uniformly sampled frames. We build a $691,200 \times 120$ matrix by vectorizing one frame per column (each frame has 3 color channels), and then use SNMF with compression to extract six representative frames (bottom left). On the bottom right we show the (normalized) columns of the matrix $\mat{H}$ in \cref{snmf_H}, i.e., the reconstruction coefficients. It took $2.18$ seconds to compute the result with relative errors of $0.2714$ and of $0.4240$ with respect to the compressed and the original matrices, respectively.}
	\label{fig:elephantsDreams}
\end{figure*}

\paragraph{Scaling to Big Data}
We also run tests on the complete open-source movie ``Elephants Dream.''\footnote{\url{http://www.elephantsdream.org}} The movie is approximately 11 minutes long (15691 frames). We processed the video at two resolutions, 360p ($640 \times 360$), and 1080p ($1920 \times 1080$), resulting in $691200 \times 15691$ and $6220800 \times 15691$ matrices, respectively. The HDF5 files occupy $43.55$ GB and $391.13$ GB, respectively, not fitting in main memory. Using compressed SPA, we extract 130 representatives (extreme columns) from the video, one every 120 frames (5 seconds). At 360p we obtained a relative error of $0.2941$ in $1891$ seconds (about 32 minutes). At 1080p, we obtained a relative error of $0.2676$ in $20776$ seconds (about 5:46 hours) processing both matrices on a laptop with 16GB of memory.

\section{Conclusions}
\label{sec:conclusions}

In this work we proposed to use structured random projections for NMF and SNMF. For NMF, we presented formulations for three popular techniques, namely, multiplicative updates~\cite{Lee2000}, active set method for nonnegative least squares~\cite{Kim2008}, and ADMM~\cite{Xu2012}. For SNMF, we presented a general technique that can be used with any algorithm. In all cases, we showed that the resulting compressed techniques are faster than their uncompressed variants and, at the same time, do not introduce significant errors in the final result.

There are in the literature very efficient SNMF algorithms for tall-and-skinny matrices. Interestingly, the use of structured random projections allows to compute SNMF for arbitrarily large matrices, granting access to very efficient computations in the general setting.

As a byproduct, we also propose an algorithmic solution for computing structured random projections of extremely large matrices (i.e., matrices so large that even after compression they do not fit in main memory). This is useful as a general tool for computing many different matrix decompositions, such as the singular value decomposition, for example.

We are currently investigating the problem of replacing the Frobenius norm with an $\ell_p$ norm in our compressed variants of NMF and SNMF. In this setting, the fast Cauchy transform~\cite{Clarkson2013} is a suitable alternative to structured random projections. Compression consists of sampling and rescaling rows of $\mat{A}$, thus identifying the so-called \emph{coreset} of the problem. This formulation is of particular interest for network analysis, where we need to deal with sparse structures.

\section*{Acknowledgments}
The authors would like to thank Mauricio Delbracio for many useful scientific discussions and Matthew Rocklin for his help and technical support with the dask and into libraries.

\appendix

\subsection{QR decompositions for tall-and-skinny matrices}
\label{sec:tsqr}

The direct TSQR algorithm uses a simple but highly efficient approach for computing that QR decomposition of a tall and skinny matrix. Let $\mat{A}$ be the $m \times n$ matrix to decompose ($m \gg n$).
The direct TSQR algorithm starts by splitting $\mat{A}$ into a stack of $b$ blocks 
\begin{equation}
	\mat{A}
	=
	\begin{bmatrix}
		\mat{A}_{\set{K}_{1:}} \\
		\vdots \\
		\mat{A}_{\set{K}_{b:}}
	\end{bmatrix} ,
\end{equation}
where $\set{K}_{i:}$ denotes the set of rows selected in the $i$th block.
Each block $\mat{A}_{\set{K}_{i:}}$ is factorized into its components $\mat{Q}_{\set{K}_{1:}}^{(1)}$, $\mat{R}_{\set{K}_{1:}}$ using any standard QR decomposition algorithm. This can be written in matrix form as
\begin{equation}
	\underbrace{
	\begin{bmatrix}
		\mat{A}_{\set{K}_{1:}} \\
		\vdots \\
		\mat{A}_{\set{K}_{b:}}
	\end{bmatrix}	}_{m \times n}
	=
	\underbrace{
		\begin{bmatrix}
			\mat{Q}_{\set{K}_{1:}}^{(1)} \\
			& \ddots \\
			&& \mat{Q}_{\set{K}_{b:}}^{(1)}
		\end{bmatrix}
	}_{m \times bn}
	\underbrace{
		\begin{bmatrix}
			\mat{R}_{\set{K}_{1:}} \\
			\vdots \\
			\mat{R}_{\set{K}_{b:}}
		\end{bmatrix}
	}_{bn \times n} .
\end{equation}
The second step is to gather the matrix composed by vertically stacking the factors $\mat{R}_{\set{K}_{i:}}$ and computing an additional QR decomposition, i.e.,
\begin{equation}
	\underbrace{
		\begin{bmatrix}
			\mat{R}_{\set{K}_{1:}} \\
			\vdots \\
			\mat{R}_{\set{K}_{b:}}
		\end{bmatrix}
	}_{bn \times n}
	=
	\underbrace{
		\begin{bmatrix}
			\mat{Q}_{\set{K}_{1:}}^{(2)} \\
			\vdots \\
			\mat{Q}_{\set{K}_{b:}}^{(2)}
		\end{bmatrix}
	}_{bn \times n}
	\underbrace{
		\mat{R}
	}_{n \times n} .
	\label{eq:tsqr_centralizedQR}
\end{equation}
This is the only centralized step in TSQR.
We then multiply the intermediate Q factors to get the matrix 
\begin{equation}
	\mat{Q}
	= 
	\underbrace{
		\begin{bmatrix}
			\mat{Q}_{\set{K}_{1:}}^{(1)} \\
			& \ddots \\
			&& \mat{Q}_{\set{K}_{b:}}^{(1)}
		\end{bmatrix}
	}_{m \times bn}
	\underbrace{
		\begin{bmatrix}
			\mat{Q}_{\set{K}_{1:}}^{(2)} \\
			\vdots \\
			\mat{Q}_{\set{K}_{b:}}^{(2)}
		\end{bmatrix}
	}_{bn \times n}
	=
	\begin{bmatrix}
		\mat{Q}_{\set{K}_{1:}}^{(1)} \mat{Q}_{\set{K}_{1:}}^{(2)} \\
		\vdots \\
		\mat{Q}_{\set{K}_{b:}}^{(1)} \mat{Q}_{\set{K}_{b:}}^{(2)}
	\end{bmatrix} .
\end{equation}
Finally note that $\mat{A} = \mat{Q} \mat{R}$, where $\mat{Q}$ is an orthonormal matrix (obtained from the multiplication of two orthonormal matrices) and $\mat{R}$ is by algorithmic design, upper triangular. Thus, these matrices form a QR decomposition of $\mat{A}$.

\subsubsection{TSQR for structured random compression}

When using TSQR for compressing a matrix $\mat{A}$, \cref{algo:compression}, the input matrix to decompose is
\begin{equation}
	\mat{B} = \left( \mat{A} \transpose{\mat{A}} \right)^w \mat{A} \mat{\Omega} ,
\end{equation}
where $w \in \Natural$. Let us assume, for simplicity, that $w = 0$. The input of TSQR is not the matrix $\mat{B}$ as a whole, but blocks extracted from it. We can thus avoid storing the entire matrix $\mat{B}$ in main memory, and compute its blocks as needed, i.e.,
\begin{equation}
	\mat{B}_{\set{K}_{i:}} = \mat{A}_{\set{K}_{i:}} \mat{\Omega} .
\end{equation}
A similar (but more complex) indexing holds for $w > 0$.

\subsection{An ADMM algorithm for solving \cref{eq:nmfEquiv_extended_compressed}}
\label{sec:nmf_admm_compressed}

We consider the augmented Lagrangian of \cref{eq:nmfEquiv_extended_compressed},
\begin{multline}
\mathscr{L} \left( \widetilde{\mat{X}}, \widetilde{\mat{Y}}, \mat{U}, \mat{V}, \mat{\Lambda}, \mat{\Phi} \right)
= \norm{ \widetilde{\mat{A}} - \widetilde{\mat{X}} \widetilde{\mat{Y}} }{F}^2  + \\
+ \mat{\Lambda} \bullet \left( \mat{L} \widetilde{\mat{X}} - \mat{U} \right) + \tfrac{\lambda}{2} \norm{ \mat{L} \widetilde{\mat{X}} - \mat{U}}{F}^2 +  \\
+ \mat{\Phi} \bullet \left( \widetilde{\mat{Y}} \mat{R} - \mat{V} \right) + \tfrac{\phi}{2} \norm{ \widetilde{\mat{Y}} \mat{R} - \mat{V} }{F}^2  ,
\end{multline}
where $\Lambda \in \Real^{m \times r}, \mat{\Phi}  \in \Real^{r \times n}$ are Lagrange multipliers, $\lambda, \phi \in \Real^+$ are penalty parameters, and $\mat{B} \bullet \mat{C} = \sum_{i,j} (\mat{B})_{ij} (\mat{C})_{ij}$ for matrices $\mat{B}, \mat{C}$ of the same size.

We use the Alternating Direction Method of Multipliers (ADMM) for solving \cref{eq:nmfEquiv_extended_compressed}. The algorithm works in a coordinate descent fashion, successively minimizing $\mathscr{L}$ with respect to $\widetilde{\mat{X}}, \widetilde{\mat{Y}}, \mat{U}, \mat{V}$, one at a time while fixing the others at their most recent values, i.e.,
\begin{subequations}
	\begin{align}
	\widetilde{\mat{X}}_{k+1} &= \argmin_{\widetilde{\mat{X}}} \mathscr{L} \left( \widetilde{\mat{X}}, \widetilde{\mat{Y}}_{k}, \mat{U}_{k}, \mat{V}_{k}, \mat{\Lambda}_{k}, \mat{\Phi}_{k} \right) , \\
	\widetilde{\mat{Y}}_{k+1} &= \argmin_{\widetilde{\mat{Y}}} \mathscr{L} \left( \widetilde{\mat{X}}_{k+1}, \widetilde{\mat{Y}}, \mat{U}_{k}, \mat{V}_{k}, \mat{\Lambda}_{k}, \mat{\Phi}_{k} \right) , \\
	\mat{U}_{k+1} &= \argmin_{\mat{U} \geq 0} \mathscr{L} \left( \widetilde{\mat{X}}_{k+1}, \widetilde{\mat{Y}}_{k+1}, \mat{U}, \mat{V}_{k}, \mat{\Lambda}_{k}, \mat{\Phi}_{k} \right) , \\
	\mat{V}_{k+1} &= \argmin_{\mat{V} \geq 0} \mathscr{L} \left( \widetilde{\mat{X}}_{k+1}, \widetilde{\mat{Y}}_{k+1}, \mat{U}_{k+1}, \mat{V}, \mat{\Lambda}_{k}, \mat{\Phi}_{k} \right) ,
	\end{align}
	\label[algorithm]{eq:admm_nmf_compressed}
\end{subequations}
and then updating the multipliers $\mat{\Lambda}, \mat{\Phi}$.
Each of these steps can be written in closed form and define our algorithm, see \cref{algo:compressedNMF_admm}. In practice, we set $\alpha, \beta, \gamma, \xi$ to 1.

\begin{figure}[t]
	\removelatexerror
	\begin{algorithm2e}[H]	
		\SetInd{0.5em}{0.5em}

		\begin{small}
			
			\SetKwInOut{Input}{input}\SetKwInOut{Output}{output}
			
			\Input{a matrix $\mat{A} \in \Real^{m \times n}$, a target rank $r  \in \Natural^+$, an oversampling parameter $r_\textsc{ov} \in \Natural^+$ ($r + r_\textsc{ov} \leq \min \{ m, n\}$), an exponent $w \in \Natural$.}
			\Output{nonnegative matrices $\mat{U}_{k} \in \Real^{m \times r}, \mat{V}_{k} \in \Real^{r \times n}$.}
			
			Compute compression matrices $\mat{L} \in \Real^{m \times (r + r_\textsc{ov})}$, $\mat{R} \in \Real^{(r + r_\textsc{ov}) \times n}$\;
			
			$k \gets 1$\;
			Initialize $\mat{U}_{k}, \mat{V}_{k}$\;
			
			$\displaystyle \widetilde{\mat{A}} \gets \transpose{\mat{L}} \mat{A} \transpose{\mat{R}}$; \quad
			$\displaystyle \widetilde{\mat{Y}} \gets \mat{V}_{k} \transpose{\mat{R}}$\;

			$\mat{\Lambda}_{k} \gets \mat{0}$; \quad
			$\mat{\Phi}_{k} \gets \mat{0}$\;
			$\mat{I} \gets$ the $r \times r$ identity matrix
			
			\Repeat{convergence}{
				$\widetilde{\mat{X}}_{k+1} \gets ( \widetilde{\mat{A}} \transpose{\widetilde{\mat{Y}}_k} + \lambda \transpose{\mat{L}} \mat{U}_k - \transpose{\mat{L}} \mat{\Lambda}_k ) ( \widetilde{\mat{Y}}_k \transpose{\widetilde{\mat{Y}}_k} + \lambda \mat{I} )^{-1}$\;
				$\widetilde{\mat{Y}}_{k+1} \gets ( \transpose{\widetilde{\mat{X}}_{k+1}} \widetilde{\mat{X}}_{k+1} + \phi \mat{I} )^{-1} ( \transpose{ \widetilde{\mat{X}}_{k+1}} \widetilde{\mat{A}} + \phi \mat{V}_k \transpose{\mat{R}} - \mat{\Phi}_k \transpose{\mat{R}} )$\;
				\tcp{$(\mathscr{P}_+ (\mat{B}))_{ij} = \max\left\{ (\mat{B})_{ij}, 0 \right\}$}
				$\mat{U}_{k+1} \gets \mathscr{P}_+ ( \mat{L} \widetilde{\mat{X}}_{k+1} + \lambda^{-1} \mat{\Lambda}_k )$\;
				$\mat{V}_{k+1} \gets \mathscr{P}_+ ( \widetilde{\mat{Y}}_{k+1} \mat{R} + \phi^{-1} \mat{\Phi}_k )$\;
				
				$\mat{\Lambda}_{k+1} \gets \mat{\Lambda}_{k} + \xi \lambda ( \mat{L} \widetilde{\mat{X}}_{k+1} - \mat{U}_{k+1} )$\;
				$\mat{\Phi}_{k+1} \gets \mat{\Phi}_{k} + \xi \phi ( \widetilde{\mat{Y}}_{k+1} \mat{R} - \mat{V}_{k+1} )$\;
				$k \gets k + 1$\;
			}
			
		\end{small}
		
	\end{algorithm2e}
	
	\caption{ADMM algorithm for NMF with structured random compression.}
	\label{algo:compressedNMF_admm}
\end{figure}

We now provide a preliminary convergence property of the proposed ADMM algorithm. Our analysis follows closely the one in~\cite[Section 2.3]{Xu2012}.

To simplify notation, we consolidate all the variables as
\begin{equation*}
Z = \left( \widetilde{\mat{X}}, \widetilde{\mat{Y}}, \mat{U}, \mat{V}, \mat{\Lambda}, \mat{\Phi} \right) .
\end{equation*}
A point $Z$ is a Karush-Kuhn-Tucker (KKT) condition of \cref{eq:nmfEquiv_extended_compressed} if
\begin{subequations}
	\begin{align}
	\left( \widetilde{\mat{X}} \widetilde{\mat{Y}} - \widetilde{\mat{A}} \right) \transpose{\widetilde{\mat{Y}}} + \mat{\Lambda} &= 0 , \label{eq:nmfKKT_a} \\
	\transpose{\widetilde{\mat{X}}} \left( \widetilde{\mat{X}} \widetilde{\mat{Y}} - \widetilde{\mat{A}} \right) + \mat{\Phi} &= 0 , \label{eq:nmfKKT_b} \\
	\mat{L} \widetilde{\mat{X}} - \mat{U} &= 0 , \label{eq:nmfKKT_c} \\
	\widetilde{\mat{Y}} \mat{R} - \mat{V} &= 0 , \label{eq:nmfKKT_d} \\
	\mat{\Lambda} \leq 0 \leq \mat{U},\ \mat{\Lambda} \circ \mat{U} &= 0 , \label{eq:nmfKKT_e} \\
	\mat{\Phi} \leq 0 \leq \mat{V},\ \mat{\Phi} \circ \mat{V} &= 0 , \label{eq:nmfKKT_f}
	\end{align}
\end{subequations}
where $\circ$ denotes the Hadamard (entrywise) matrix product.

\begin{proposition}
	Let $\{ Z_k \}_{k=1}^{\infty}$ be a sequence generated by the algorithm in \cref{algo:compressedNMF_admm} that satisfies the condition
	\begin{equation}
	\lim_{k \rightarrow \infty} \left( Z_{k+1} - Z_{k} \right) = 0.
	\label[assumption]{eq:nmf_convergenceCondition}
	\end{equation}
	Then any accumulation point of $\{ Z_k \}_{k=1}^{\infty}$ is a KKT point of \cref{eq:nmfEquiv_extended_compressed}.
	
	\label{thm:convergence}
\end{proposition}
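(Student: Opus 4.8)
The plan is to use the fact that each block of the algorithm in \cref{algo:compressedNMF_admm} solves its subproblem \emph{exactly} (every update is a closed-form minimizer of $\mathscr{L}$), so each iterate automatically satisfies the first-order optimality condition of the corresponding subproblem. Passing to the limit along a convergent subsequence, and invoking \cref{eq:nmf_convergenceCondition} to collapse the lagged variables (indexed by $k$) and the current variables (indexed by $k+1$) onto a single limit, should turn these per-subproblem conditions into the global KKT system \cref{eq:nmfKKT_a,eq:nmfKKT_b,eq:nmfKKT_c,eq:nmfKKT_d,eq:nmfKKT_e,eq:nmfKKT_f}. No descent or contraction estimate is needed, since convergence of the iterate differences is assumed rather than established.

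First I would record the optimality relations produced by the exact updates. Because $\widetilde{\mat{X}}_{k+1}$ minimizes $\mathscr{L}(\cdot, \widetilde{\mat{Y}}_k, \mat{U}_k, \mat{V}_k, \mat{\Lambda}_k, \mat{\Phi}_k)$ without constraints, its gradient in $\widetilde{\mat{X}}$ vanishes,
\begin{equation*}
(\widetilde{\mat{X}}_{k+1} \widetilde{\mat{Y}}_k - \widetilde{\mat{A}}) \transpose{\widetilde{\mat{Y}}_k} + \transpose{\mat{L}} \mat{\Lambda}_k + \lambda \transpose{\mat{L}} (\mat{L} \widetilde{\mat{X}}_{k+1} - \mat{U}_k) = 0 ,
\end{equation*}
and $\widetilde{\mat{Y}}_{k+1}$ satisfies the analogous relation in $\widetilde{\mat{Y}}$. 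The constrained updates are projections, $\mat{U}_{k+1} = \mathscr{P}_+(\mat{L} \widetilde{\mat{X}}_{k+1} + \lambda^{-1} \mat{\Lambda}_k)$ and its $\mat{V}$ counterpart. Finally I would keep the dual updates $\mat{\Lambda}_{k+1} = \mat{\Lambda}_k + \xi \lambda (\mat{L} \widetilde{\mat{X}}_{k+1} - \mat{U}_{k+1})$ and $\mat{\Phi}_{k+1} = \mat{\Phi}_k + \xi \phi (\widetilde{\mat{Y}}_{k+1} \mat{R} - \mat{V}_{k+1})$.

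Next I would fix an accumulation point $Z^\star = (\widetilde{\mat{X}}^\star, \widetilde{\mat{Y}}^\star, \mat{U}^\star, \mat{V}^\star, \mat{\Lambda}^\star, \mat{\Phi}^\star)$ together with a subsequence $Z_{k_j} \to Z^\star$. The crucial consequence of \cref{eq:nmf_convergenceCondition} is that $Z_{k_j+1} \to Z^\star$ as well, so every lagged and every current variable in the relations above share the single limit $Z^\star$. The dual updates, combined with $\mat{\Lambda}_{k+1} - \mat{\Lambda}_k \to 0$, immediately yield the primal feasibility conditions $\mat{L} \widetilde{\mat{X}}^\star - \mat{U}^\star = 0$ and $\widetilde{\mat{Y}}^\star \mat{R} - \mat{V}^\star = 0$, i.e.\ \cref{eq:nmfKKT_c,eq:nmfKKT_d}. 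Substituting these back, the penalty residuals in the two stationarity relations vanish in the limit, so they reduce to \cref{eq:nmfKKT_a,eq:nmfKKT_b}. For the sign and complementarity conditions, combining the $\mat{U}$-update with feasibility gives the fixed-point identity $\mat{U}^\star = \mathscr{P}_+(\mat{U}^\star + \lambda^{-1} \mat{\Lambda}^\star)$, which is equivalent to $\mat{U}^\star \geq 0$, $\mat{\Lambda}^\star \leq 0$, $\mat{\Lambda}^\star \circ \mat{U}^\star = 0$, namely \cref{eq:nmfKKT_e}; the same argument applied to $\mat{V}^\star$ and $\mat{\Phi}^\star$ gives \cref{eq:nmfKKT_f}. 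Thus $Z^\star$ satisfies the full KKT system of \cref{eq:nmfEquiv_extended_compressed}.

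The main obstacle is the index bookkeeping: each optimality relation genuinely mixes variables at iterations $k$ and $k+1$, and the entire argument hinges on \cref{eq:nmf_convergenceCondition} forcing both indices to the same subsequential limit. The one step requiring extra care is the passage to the limit of the projection conditions; I would rely on the standard equivalence between the projection fixed point $u = \mathscr{P}_+(u + \lambda^{-1}\mu)$ and the complementarity triple, which is a closed condition and therefore survives the limit, and confirm that the residual appearing in the $\mat{U}$-update is exactly the multiplier $\mat{\Lambda}$ of the KKT system by comparing the update with the dual recursion.
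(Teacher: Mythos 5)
Your proposal is correct and follows essentially the same route as the paper's proof: use \cref{eq:nmf_convergenceCondition} to identify the $k$- and $(k{+}1)$-indexed variables at any accumulation point, read off primal feasibility from the vanishing dual increments, substitute it into the stationarity relations of the exact closed-form updates to recover \cref{eq:nmfKKT_a,eq:nmfKKT_b,eq:nmfKKT_c,eq:nmfKKT_d}, and obtain \cref{eq:nmfKKT_e,eq:nmfKKT_f} from the fixed-point identity $\mat{U}_\infty = \mathscr{P}_+ ( \mat{U}_\infty + \lambda^{-1} \mat{\Lambda}_\infty )$. The only cosmetic difference is that the paper verifies the equivalence of this fixed point with the sign and complementarity conditions by an explicit entrywise case analysis, whereas you invoke it as a standard fact.
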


\begin{proof}
	From \Cref{eq:nmf_convergenceCondition}, we have
	\begin{subequations}
		\begin{align}
		\widetilde{\mat{X}}_{k+1} - \widetilde{\mat{X}}_k &\rightarrow 0 ,\\
		\widetilde{\mat{Y}}_{k+1} - \widetilde{\mat{Y}}_k &\rightarrow 0 ,\\
		\mat{\Lambda}_{k+1} - \mat{\Lambda}_{k} &\rightarrow 0 ,\\
		\mat{\Phi}_{k+1} - \mat{\Phi}_{k} &\rightarrow 0 ,\\
		\mat{U}_{k+1} - \mat{U}_{k} &\rightarrow 0 ,\\
		\mat{V}_{k+1} - \mat{V}_{k} &\rightarrow 0 .
		\end{align}
	\end{subequations}
	Plugging these subtractions in the variable updates in \cref{algo:compressedNMF_admm}, we get
	\begin{subequations}
		\begin{align}
		\left( \widetilde{\mat{A}} - \widetilde{\mat{X}}_k \widetilde{\mat{Y}}_k \right) \transpose{\widetilde{\mat{Y}}_k} - \mat{L} \mat{\Lambda}_k  &\rightarrow 0 , \label{eq:nmf_limit_a} \\
		\transpose{ \widetilde{\mat{X}}_{k+1}} \left( \widetilde{\mat{A}} - \widetilde{\mat{X}}_{k+1} \widetilde{\mat{Y}}_{k} \right) - \mat{\Phi}_k \mat{R} &\rightarrow 0 , \label{eq:nmf_limit_b} \\
		\mat{L} \widetilde{\mat{X}}_{k+1} - \mat{U}_{k+1} &\rightarrow 0 , \label{eq:nmf_limit_c} \\
		\widetilde{\mat{Y}}_{k+1} \mat{R} - \mat{V}_{k+1} &\rightarrow 0 , \label{eq:nmf_limit_d} \\
		\mathscr{P}_+ \left( \mat{L} \widetilde{\mat{X}}_{k+1} + \lambda^{-1} \mat{\Lambda}_k \right)  - \mat{U}_{k} &\rightarrow 0 , \label{eq:nmf_limit_e} \\
		\mathscr{P}_+ \left( \widetilde{\mat{Y}}_{k+1} \mat{R} + \phi^{-1} \mat{\Phi}_k \right)  - \mat{V}_{k} &\rightarrow 0 \label{eq:nmf_limit_f} .
		\end{align}
	\end{subequations}
	Notice that the terms $\lambda \left( \transpose{\mat{L}} \mat{U}_k - \widetilde{\mat{X}}_k \right)$ and $\phi \left( \mat{V}_k \transpose{\mat{R}} - \widetilde{\mat{Y}}_{k} \right)$ have been eliminated from \cref{eq:nmf_limit_a,eq:nmf_limit_b} by invoking \cref{eq:nmf_limit_c,eq:nmf_limit_d}, respectively.
	\Cref{eq:nmfKKT_a,eq:nmfKKT_b,eq:nmfKKT_c,eq:nmfKKT_d} are clearly satisfied by \cref{eq:nmf_limit_a,eq:nmf_limit_b,eq:nmf_limit_c,eq:nmf_limit_d} at any limit point
	\begin{equation*}
	Z_\infty = \left( \widetilde{\mat{X}}_\infty, \widetilde{\mat{Y}}_\infty, \mat{U}_\infty, \mat{V}_\infty, \mat{\Lambda}_\infty, \mat{\Phi}_\infty \right) .
	\end{equation*}
	We are then left to prove that \cref{eq:nmfKKT_e,eq:nmfKKT_f} hold. \Cref{eq:admm_nmf_compressed} guarantees the non-negativity of $\mat{U}_\infty, \mat{V}_\infty$.
	Let us focus on \cref{eq:nmfKKT_e} first.
	\Cref{eq:nmf_limit_e}, when combined with \cref{eq:nmf_limit_c}, yields
	\begin{equation}
	\mat{U}_\infty = \mathscr{P}_+ \left( \mat{U}_\infty + \lambda^{-1} \mat{\Lambda}_\infty \right) ,
	\end{equation}
	If $\left( \mat{U}_\infty \right)_{ij} = 0$, we get $\left( \mathscr{P}_+ \left( \lambda^{-1} \mat{\Lambda}_\infty \right) \right)_{ij} = 0$ and then $\left( \mat{\Lambda}_\infty \right)_{ij} \leq 0$.
	If $\left( \mat{U}_\infty \right)_{ij} > 0$, we get $\left( \mat{U}_\infty \right)_{ij} = \mathscr{P}_+ \left( \left( \mat{U}_\infty \right)_{ij} \right)$ and $\left( \mat{\Lambda}_\infty \right)_{ij} = 0$. 
	From this, we obtain that \cref{eq:nmfKKT_e} holds.
	An identical argument applies for \cref{eq:nmfKKT_f,eq:nmf_limit_f}.
	
	With this, we have proven that any accumulation point of $\{ Z_k \}_{k=1}^{\infty}$ is a KKT point of \cref{eq:nmfEquiv}.
	From the equivalence of \cref{eq:nmf,eq:nmfEquiv}, any accumulation point of $\left\{ (\mat{X}_k, \mat{Y}_k) \right\}_{k=1}^{\infty}$ is a KKT point of \cref{eq:nmf}.
\end{proof}

\begin{corollary}
	Whenever $\{ Z_k \}_{k=1}^{\infty}$ converges, it converges to a KKT point of \cref{eq:nmfEquiv}.
\end{corollary}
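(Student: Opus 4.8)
The plan is to obtain the corollary as an immediate specialization of the Proposition, exploiting that convergence of the iterate sequence is strictly stronger than the asymptotic-stationarity hypothesis assumed there.

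First I would verify that the hypothesis of the Proposition is met: if $\{Z_k\}_{k=1}^{\infty}$ converges to a limit $Z_\infty$, then both $Z_{k+1}$ and $Z_k$ tend to $Z_\infty$, whence $\lim_{k\to\infty}(Z_{k+1}-Z_k)=Z_\infty-Z_\infty=0$, so \cref{eq:nmf_convergenceCondition} holds. The Proposition therefore applies verbatim to this sequence.

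Second, I would note that a convergent sequence possesses a single accumulation point, namely its limit $Z_\infty$. Since the Proposition guarantees that \emph{every} accumulation point is a KKT point of \cref{eq:nmfEquiv_extended_compressed}, and hence, through the equivalence recorded at the close of its proof, of \cref{eq:nmfEquiv}, we conclude that $Z_\infty$ itself is a KKT point of \cref{eq:nmfEquiv}. This is exactly the assertion of the corollary.

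I do not expect any genuine obstacle here: the corollary simply removes the accumulation-point quantifier from the Proposition by observing that convergence both furnishes \cref{eq:nmf_convergenceCondition} for free and collapses the set of accumulation points to the singleton limit. The only subtlety worth stating explicitly is this uniqueness of the accumulation point, which is what lets ``any accumulation point is a KKT point'' be read as ``the limit is a KKT point.''
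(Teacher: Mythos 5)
Your argument is correct and coincides with the (implicit) reasoning in the paper: the corollary follows immediately from the Proposition because convergence of $\{Z_k\}$ gives $Z_{k+1}-Z_k\to 0$ for free and makes the limit the unique accumulation point. No gaps; your explicit remark about the uniqueness of the accumulation point is exactly the one-line observation the paper leaves unstated.
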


Ideally, we would like to guarantee that \cref{eq:admm_nmf_compressed} will always converge to a KKT point of \cref{eq:nmfEquiv_extended_compressed}.
The above simple result is an initial step in this direction, providing some assurance on the behavior of \cref{eq:admm_nmf_compressed}.

\bibliographystyle{IEEEtran}
\bibliography{biblio}

\end{document}